\newtheorem{theorem}{Theorem}
\newtheorem{theorem_main}{Theorem}
\newtheorem{proposition}{Proposition}
\newtheorem{proposition_main}{Proposition}
\newtheorem{lemma}{Lemma}
\newtheorem{lemma_main}{Lemma}
\newtheorem{definition}{Definition}
\newtheorem{corollary}{Corollary}
\newtheorem{remark}{Remark}
\newtheorem{example}{Example}
\let\oldReturn\Return
\renewcommand{\Return}{\State\oldReturn}
\newcommand{\E}{\mathbb{E}}
\newcommand{\V}{\mathrm{Var}}
\newcommand{\U}{\mathcal{U}}
\newcommand{\Sp}{\mathbb{S}}
\newcommand{\I}{\mathbf{I}}
\title{On the Convergence of Prior-Guided Zeroth-Order Optimization Algorithms}
\author{%
  Shuyu Cheng, \quad Guoqiang Wu, \quad Jun Zhu\thanks{J.Z. is the Corresponding Author. G.W. is now with School of Software, Shandong University.} \\
  %\thanks{Use footnote for providing further information about author (webpage, alternative address)---\emph{not} for acknowledging  funding agencies.} \\
  Dept. of Comp. Sci. and Tech., BNRist Center, State Key Lab for Intell. Tech. \& Sys., Institute for AI,\\
  Tsinghua-Bosch Joint Center for ML, Tsinghua University, Beijing, 100084, China\\
  Pazhou Lab, Guangzhou, 510330, China \\
  \scriptsize \texttt{chengsy18@mails.tsinghua.edu.cn, guoqiangwu90@gmail.com, dcszj@tsinghua.edu.cn} \\
  % examples of more authors
  % \And
  % Coauthor \\
  % Affiliation \\
  % Address \\
  % \texttt{email} \\
  % \AND
  % Coauthor \\
  % Affiliation \\
  % Address \\
  % \texttt{email} \\
  % \And
  % Coauthor \\
  % Affiliation \\
  % Address \\
  % \texttt{email} \\
  % \And
  % Coauthor \\
  % Affiliation \\
  % Address \\
  % \texttt{email} \\
}
\begin{document}

\maketitle

\begin{abstract}
Zeroth-order (ZO) optimization is widely used to handle challenging tasks, such as query-based black-box adversarial attacks and reinforcement learning.
%To deal with the curse of dimensionality, 
Various attempts have been made to integrate prior information into the gradient estimation procedure based on finite differences, with promising empirical results. %and they have achieved excellent empirical performance. 
However, their convergence properties are not well understood. This paper makes an attempt to fill up this gap by analyzing the convergence of prior-guided ZO algorithms under a greedy descent framework with various gradient estimators. %, under the smooth convex setting. 
We provide a convergence guarantee for the prior-guided random gradient-free (PRGF) algorithms. % as well as a typical method of History-PRGF. 
Moreover, to further accelerate over greedy descent methods, we present a new accelerated random search (ARS) algorithm that incorporates prior information, together with a convergence analysis.
% and provide its convergence guarantee. 
Finally, our theoretical results are confirmed by experiments on several numerical benchmarks as well as adversarial attacks. Our code is available at \url{https://github.com/csy530216/pg-zoo}.
\end{abstract}

\section{Introduction}
\iffalse
Optimization methods lay an important foundation for machine learning. In recent years, first-order methods (or gradient-based methods), in which we assume the oracle access to the \emph{gradient} of the objective function at any given point, are widely used because of their good theoretical guarantee and empirical performance in high dimensions \cite{bubeck2014convex}. However, in many optimization tasks the access to gradient information is either not available or too costly, and hence one should turn to zeroth-order methods in which the oracle access to \emph{only the value} of the objective function at any given point is assumed. Examples of these tasks include query-based black-box adversarial attacks \cite{chen2017zoo, ilyas2018black}, reinforcement learning \cite{salimans2017evolution, mania2018simple, choromanski2018structured}, meta-learning \cite{andrychowicz2016learning}, hyperparameter tuning \cite{snoek2012practical}, and so on. \shuyu{Add more references}
\fi

%\junz{Focus directly on Zeroth-order optimization}
Zeroth-order (ZO) optimization~\cite{matyas1965random} provides powerful tools to deal with challenging tasks, such as query-based black-box adversarial attacks \cite{chen2017zoo, ilyas2018black}, reinforcement learning \cite{salimans2017evolution, mania2018simple, choromanski2018structured}, meta-learning \cite{andrychowicz2016learning}, and hyperparameter tuning \cite{snoek2012practical}, where the access to gradient information is either not available or too costly. ZO methods only assume an oracle access to \emph{the function value} at any given point, instead of gradients as in first-order methods.
%zeroth-order methods relax the request of an oracle access to the \emph{gradient} as in gradient-based methods (e.g., stochastic gradient descent, SGD).
The primary goal is to find a solution with as few queries to the function value oracle as possible. Recently, various ZO methods have been proposed in two main categories. %\footnote{We focus on the case where the objective function is deterministic.} 
One type is to obtain a gradient estimator and plug in some gradient-based methods. \cite{nesterov2017random} analyzes the convergence of such methods with a random gradient-free (RGF) estimator obtained via finite difference along a random direction. %Our work follows this thread.
The other type is directed search, where the update only depends on comparison between function values at different points. Such methods are robust against monotone transform of the objective function \cite{stich2013optimization, golovin2019gradientless, bergou2020stochastic}. However, as they do not directly utilize function values, 
%checking the theoretical results and details, 
their query complexity is often higher than that of finite difference methods. %\footnote{We note that \cite{stich2013optimization} does not bound the number of queries each iteration.}
Therefore, we focus on the first type of methods in this paper.
Other methods exist such as CMA-ES \cite{hansen1996adapting} which is potentially better on objective functions with a rugged landscape, but lacks a general convergence guarantee. % general smooth convex functions.

%Unlike first-order algorithms~\cite{bubeck2014convex}, % whose convergence rates are typically dimension-free \cite{bubeck2014convex}, 
ZO methods are usually less efficient than first-order algorithms, as they %suffer from the curse of dimensionality. To our knowledge, current zeroth-order methods 
typically take $O(d)$ queries to reach a given precision, where $d$ is the input dimension (see Table~1 in \cite{golovin2019gradientless}). Specifically, for the methods in \cite{nesterov2017random}, the oracle query count is $O(d)$ times larger than that of their corresponding schemes using gradients. 
This inefficiency stems from random search along uniformly distributed directions. To improve, various attempts have been made to augment random search with (extra) prior information. For instance, \cite{ilyas2018prior, meier2019improving} use a time-dependent prior (i.e., the gradient estimated in the last iteration), %which is general without additional requirements. 
while \cite{maheswaranathan2019guided, cheng2019improving, brunner2019guessing} use surrogate gradients obtained from other sources.\footnote{Some work \cite{chen2017zoo, ilyas2018prior, tu2019autozoom} restricts the random search to a more effective subspace reflecting the prior knowledge. But, this eliminates the possibility of convergence to the optimal solution.}
% \shuyu{Do we need to discuss about data-dependent prior? In related works section?}
% \shuyu{Do we need to explain that here we only consider the setting that the prior is a single vector (i.e. transfer gradient, last update), and do not consider other forms (e.g. data-dependent prior)? I do not want to explain that, but if we want to say something like "it only requires one more query each iteration to exploit the prior", then it seems inevitable.}\junz{that would be too detailed, I'd suggest to keep the main message clear. If needed, put a footnote or explain along with the technical details.}
Among these methods, \cite{ilyas2018prior, maheswaranathan2019guided, cheng2019improving, meier2019improving} propose objective functions to describe the quality of a gradient estimator for justification or optimizing its hyperparameters;
%\cite{ilyas2018prior, maheswaranathan2019guided, cheng2019improving} propose an objective function for designing a good gradient estimator, and \cite{maheswaranathan2019guided, cheng2019improving} propose a gradient estimator with hyper-parameters and optimize an objective function w.r.t. the hyper-parameters, whose optimal solutions depend on the quality of the prior\junz{very confusing setences: 14,7 were mentioned in multiple sentences; disentangle them}.
for example, \cite{meier2019improving} uses a subspace estimator that maximizes its squared cosine similarity with the true gradient as the objective, and finds a better descent direction than both the prior direction and the randomly sampled direction. 
%, and does not require knowledge of the quality of the prior. 
However, all these methods treat gradient estimation and the optimization algorithm separately, and it remains unclear \emph{whether are these gradient estimators and the corresponding prior-exploiting optimization methods theoretically sound?} and \emph{what role does the prior play in accelerating convergence?}

In this paper, we attempt to answer these questions by establishing a formal connection between convergence rates and the quality of gradient estimates. Further, we develop a more efficient ZO algorithm with prior inspired by a theoretical analysis. First, we present a greedy descent framework of ZO methods and provide its convergence rate under smooth convex optimization, which is positively related to the squared cosine similarity between the gradient estimate and true gradient. As shown by \cite{meier2019improving} and our results, given some finite-difference queries, the optimal estimator maximizing the squared cosine similarity is the projection of true gradient on the subspace spanned by those queried directions. In the case with prior information, a natural such estimator is in the same form as the one in \cite{meier2019improving}. We call it PRGF estimator and analyze its convergence rate.\footnote{Note that the estimator is different from the P-RGF estimator in \cite{cheng2019improving}.}
Our results show that no matter what the prior is, the convergence rate of PRGF is at least the same as that of the RGF baseline~\cite{nesterov2017random, kozak2021stochastic}, and it could be significantly improved if given a useful prior.\footnote{In this paper, RGF and PRGF could refer to either a gradient estimator or the greedy descent algorithm with the corresponding estimator, depending on the context.} Such results shed light on %In this respect, we give a more clarified answer on how to 
exploring prior information in ZO optimization %in an optimal way 
to accelerate convergence.% and its effects\junz{what effects?}.

Then, as a concrete example, we apply the analysis to %Next, using these theoretical results, we analyze convergence properties of 
History-PRGF~\cite{meier2019improving}, which uses historical information (i.e., gradient estimate in the last iteration) as the prior.
\cite{meier2019improving} presents an analysis on linear functions, yet still lacking a convergence rate. %on the dynamics of the expected squared cosine similarity between the prior and true gradient under very strong assumptions and does not analyze the convergence rate. %, so there lacks an accurate description about the behavior of History-PRGF.
We analyze on general $L$-smooth functions, and find that when the learning rate is smaller than the optimal value $\nicefrac{1}{L}$ in smooth optimization, the expected squared cosine similarity could converge to a larger value, which compensates for the slowdown of convergence brought by the inappropriate choice of learning rate.
We also show that History-PRGF admits a convergence rate independent of learning rate as long as it is in a fairly wide range. %To this extent, History-PRGF shares similar features with line search in ZO optimization \cite{stich2013optimization}, but requires fewer queries per iteration. 

Finally, to further accelerate greedy descent methods, we present Prior-Guided ARS (PARS), a new variant of Accelerated Random Search (ARS)~\cite{nesterov2017random} to utilize prior information. %Similar to Nesterov Accelerated Gradient (NAG) \cite{nesterov1983method} to accelerate first-order methods, ARS 
%ARS accelerates the greedy descent methods for ZO optimization, but its iteration complexity still depends on $d$. 
%is $d$ times larger than that of NAG. 
Technically, PARS is a non-trivial extension of ARS, as 
%This task is non-trivial since NAG (and thus ARS) is famous for its non-intuitive and sophisticated analysis. We still try to adopt the PRGF estimator to integrate the prior due to its nice theoretical properties. However,
directly replacing the gradient estimator in ARS by the PRGF estimator would lose the convergence analysis. Thus, we present necessary extensions to ARS, and show that PARS has a convergence rate no worse than that of ARS and admits potential acceleration given a good prior. %Furthermore, we study a new algorithm, . 
In particular, when the prior is chosen as the historical information, the resulting History-PARS is robust to learning rate in experiments. %Although we currently could not prove more theoretical results, 
To our knowledge, History-PARS is the first ARS-based method that is empirically robust while retaining the convergence rate as ARS.
Our experiments on numerical benchmarks and adversarial attacks confirm the theoretical results.

% \shuyu{Do we need to summarize the contribution here? The paragraphs above seem not very clear in doing so.}\guoqiang{Yes}
\iffalse
We summarize our contributions as follows:
\begin{enumerate}[$\ \ \ \ \bullet$]
    \item We propose a greedy descent framework that allows for a wide range of gradient estimators and analyze its convergence rate. With prior information, we justify the optimality of the PRGF estimator \cite{meier2019improving} in terms of convergence rate, and give the convergence rate of the corresponding PRGF algorithm. %This is in line with the previous prior-exploiting works and provides a theoretical description and justification.
    \item We propose an extension of accelerated random search (ARS) \cite{nesterov2017random} which allows for a wide range of gradient estimators and analyze its convergence rate. With prior information, we propose the PARS algorithm which is guaranteed a potentially better convergence rate. The algorithm is novel and has advantages over PRGF algorithm in both theory and experiments.
    \item We study History-PRGF \cite{meier2019improving} and the new History-PARS with the historical information (i.e., the previous gradient estimate) as the prior, and empirically find that they are robust to learning rate, which is a new discovery. For History-PRGF, we give a deeper analysis on the dynamics of expected cosine similarity with the true gradient than \cite{meier2019improving} did, and justify such robustness theoretically. We rigorously prove a convergence rate bound which is independent of the learning rate under some weak conditions.
\end{enumerate}
\fi

%\vspace{-.2cm}
\section{Setups}
\label{sec:2}
%\vspace{-.2cm}
\paragraph{Assumptions on the problem class}
% \guoqiang{Three assumptions? Convex + smooth + strongly-convex}
We consider unconstrained optimization, where the objective function $f: \mathbb{R}^d\to \mathbb{R}$ is convex and $L$-smooth for $L\geq 0$. Optionally, we require $f$ to be $\tau$-strongly convex for $\tau>0$. We leave definitions of these concepts to Appendix~\ref{sec:A1}.

%\vspace{-.4cm}
\paragraph{Directional derivative oracle}
In ZO optimization, we follow the finite difference approach, which makes more use of the queried function values than direct search and %thus usually has a better theoretical property. 
provides better gradient approximations than alternatives \cite{berahas2021theoretical}. In particular, we consider the forward difference method:
%\guoqiang{How to explain this gap? clearly express our limitation.}
\begin{align}
\label{eq:forward-difference}
    g_\mu(v;x):=\frac{f(x+\mu v)-f(x)}{\mu}\approx \nabla f(x)^\top v,
\end{align}
where $v$ is a vector with unit $\ell_2$ norm $\|v\|=1$ and $\mu$ is a small positive step. As long as the objective function is smooth, the error between the finite difference and the directional derivative could be uniformly bounded, as shown in the following proposition (see Appendix~\ref{sec:A2} for its proof).
\begin{proposition}
\label{prop:finite-error}
If $f$ is $L$-smooth, then for any $(x,v)$ with $\|v\|=1$, $|g_\mu(v;x)-\nabla f(x)^\top v|\leq \frac{1}{2}L\mu$.
\end{proposition}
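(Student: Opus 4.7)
The plan is to derive the bound directly from the standard quadratic upper/lower bound characterization of $L$-smooth functions, which controls the deviation of $f$ from its linear approximation. Specifically, I would first recall (or establish) the two-sided inequality
\[
\bigl|f(y) - f(x) - \nabla f(x)^\top (y-x)\bigr| \leq \tfrac{L}{2}\|y-x\|^2
\]
for any $x,y \in \mathbb{R}^d$, which is the central consequence of $L$-smoothness (Lipschitz continuity of $\nabla f$ with constant $L$). This is the only analytic ingredient required.

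To establish that inequality, I would use the fundamental theorem of calculus to write
\[
f(y) - f(x) - \nabla f(x)^\top (y-x) = \int_0^1 \bigl[\nabla f(x + t(y-x)) - \nabla f(x)\bigr]^\top (y-x)\, dt,
\]
then apply Cauchy--Schwarz inside the integral and the Lipschitz property $\|\nabla f(x+t(y-x)) - \nabla f(x)\| \leq L t \|y-x\|$ to obtain the $\tfrac{L}{2}\|y-x\|^2$ bound after integrating $t$ from $0$ to $1$.

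Once this inequality is in hand, the proposition follows by plugging $y = x + \mu v$ with $\|v\|=1$, yielding
\[
\bigl|f(x+\mu v) - f(x) - \mu \nabla f(x)^\top v\bigr| \leq \tfrac{L}{2}\mu^2,
\]
and then dividing through by $\mu > 0$ to recover $|g_\mu(v;x) - \nabla f(x)^\top v| \leq \tfrac{L}{2}\mu$.

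There is essentially no obstacle here, since the result is a routine consequence of the descent lemma applied in the specific direction $v$; the only mild care required is to state the two-sided version of the quadratic bound (some references only state the upper bound), but the integral representation gives both inequalities simultaneously via the absolute value. I would place the full argument in Appendix~\ref{sec:A2} as indicated by the authors.
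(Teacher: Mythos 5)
Your proposal is correct and follows essentially the same route as the paper: establish the two-sided quadratic bound $|f(y)-f(x)-\nabla f(x)^\top(y-x)|\leq \frac{1}{2}L\|y-x\|^2$ (which the paper cites from Lemma~3.4 of \cite{bubeck2014convex} and you prove via the integral representation), then set $y=x+\mu v$ and divide by $\mu$. No gaps.
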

\vspace{-.1cm}
Thus, in smooth optimization, the error brought by finite differences to the convergence bound can be analyzed in a principled way, and its impact tends to zero as $\mu\to 0$. We also choose $\mu$ as small as possible in practice. Hence, in the following analysis we directly assume the $\textbf{directional derivative oracle}$: suppose that we can obtain $\nabla f(x)^\top v$ for any $(x,v)$ in which $\|v\|=1$ with one query. 
% \cite{kozak2021stochastic} also uses such oracle in theoretical analysis.
% Note that in one iteration of the optimization method, if we query $\nabla f(x)^\top v_i$ for $i\in\{1,2,\cdots, q\}$ (so there are $q$ queries of the directional derivative oracle), then by Eq.~\eqref{eq:forward-difference} it requires $q+1$ queries of the function value oracle. In contrast, if we use central differences, then we need $2q$ queries of the function value oracle (similar discussion appears in \cite{choromanski2018structured}), which can be more expensive. %, so we apply forward differences in practice. %\junz{this sentence moves to Eq.1 to explain the choice of forward diff. Here, out-of-place.}.

%\begin{wrapfigure}{R}{0.5\textwidth}
%\begin{minipage}{0.5\textwidth}
\begin{algorithm}[t]%[!htbp]%\small
\caption{Greedy descent framework}
\label{alg:greedy-descent}
\begin{algorithmic}[1]
\Require $L$-smooth convex function $f$; initialization $x_0$; upper bound $\hat{L}$ ($\hat{L} \geq L$); iteration number $T$.
\Ensure $x_T$ as the approximate minimizer of $f$.
\For {$t = 0$ to $T-1$}
\State Let $v_t$ be a random vector s.t. $\|v_t\|=1$; 
\State $x_{t+1}\leftarrow x_t - \frac{1}{\hat{L}} g_t$, where $g_t\leftarrow \nabla f(x_t)^\top v_t \cdot v_t$; \label{lne:3-alg1} \label{lne:alg}
\EndFor
\Return $x_T$.
\end{algorithmic}
\end{algorithm}
%\end{minipage}
%\end{wrapfigure}
\vspace{-.2cm}

%\vspace{-.2cm}
\section{Greedy descent framework and PRGF algorithm}
\label{sec:greedy-descent}
\label{sec:3}
%\vspace{-.2cm}
We now introduce a greedy descent framework in ZO optimization which can be implemented with various gradient estimators. We first provide a general analysis, followed by a concrete example.

\vspace{-.1cm}
\subsection{The greedy descent framework and general analysis}
%\vspace{-.2cm}
In first-order smooth convex optimization, a sufficient single-step decrease of the objective can guarantee convergence (see Chapter~3.2 in \cite{bubeck2014convex}). Inspired by this fact, we design the update in an iteration to greedily seek for maximum decrease. Suppose we are currently at $x$, and want to update along the direction $v$. Without loss of generality, assume $\|v\|=1$ and $\nabla f(x)^\top v>0$. To choose a suitable step size $\eta$ that minimizes $f(x-\eta v)$, we note that
\begin{align}
\label{eq:upper-bound}
    f(x-\eta v)\leq f(x)-\eta\nabla f(x)^\top v+\frac{1}{2}L\eta^2:=F(\eta)
\end{align}
by smoothness of $f$. For the r.h.s, we have $F(\eta)= f(x)-\frac{(\nabla f(x)^\top v)^2}{2L}$ when $\eta=\frac{\nabla f(x)^\top v}{L}$, which minimizes $F(\eta)$. Thus, choosing such $\eta$ could lead to a largest guaranteed decrease of $f(x-\eta v)$ from $f(x)$. In practice, the value of $L$ is often unknown, but we can verify that as long as $0<\eta\leq\frac{\nabla f(x)^\top v}{L}$, then $f(x-\eta v)\leq F(\eta)< f(x)$, % \guoqiang{how to express the second inequality clearly?}
i.e., we can guarantee decrease of the objective (regardless of the direction of $v$ if $\nabla f(x)^\top v>0$). Based on the above discussion, we further allow $v$ to be random
%\junz{a jump here from deterministic to random; why random is feasible?} \shuyu{I think it is obvious, is it clear now?}
and present the greedy descent framework in Algorithm~\ref{alg:greedy-descent}.

\begin{remark}
%If we set $v_t=\frac{\nabla f(x_t)}{\|\nabla f(x_t)\|}$ (which is not available in ZO setting), then Algorithm~\ref{alg:greedy-descent} recovers gradient descent. 
If $v_t\sim \mathcal{U}(\mathbb{S}^{d-1})$, i.e. $v_t$ is uniformly sampled from $\Sp^{d-1}$ (the unit sphere in $\mathbb{R}^d$), then Algorithm~\ref{alg:greedy-descent} is similar to the simple random search in \cite{nesterov2017random} except that $v_t$ is sampled from a Gaussian distribution there.
\end{remark}
%\shuyu{begin: modified}
\begin{remark}
\label{rem:dependence-history}
In general, $v_t$ could depend on the history (i.e., the randomness before sampling $v_t$). For example, $v_t$ can be biased towards a vector $p_t$ that corresponds to prior information, and $p_t$ depends on the history since it depends on $x_t$ or the optimization trajectory. 

Theoretically speaking, $v_t$ is sampled from the conditional probability distribution $\Pr(\cdot|\mathcal{F}_{t-1})$ where $\mathcal{F}_{t-1}$ is a sub $\sigma$-algebra modelling the historical information. $\mathcal{F}_{t-1}$ is important in our theoretical analysis since it tells how to perform conditional expectation given the history.

We always require that $\mathcal{F}_{t-1}$ includes all the randomness before iteration $t$ to ensure that Lemma~\ref{lem:single-progress} (and thus Theorems~\ref{thm:smooth} and \ref{thm:strong}) and Theorem~\ref{thm:nag} hold. For further theoretical analysis of various implementations of the framework, $\mathcal{F}_{t-1}$ remains to be specified by possibly also including some randomness in iteration $t$ (see e.g. Example~\ref{exp:new-prgf} as an implementation of Algorithm~\ref{alg:greedy-descent}).\footnote{In mathematical language, if $\mathcal{F}_{t-1}$ includes (and only includes) the randomness brought by random vectors $\{x_1,x_2,\ldots,x_n\}$, then $\mathcal{F}_{t-1}$ is the $\sigma$-algebra generated by $\{x_1,x_2,\ldots,x_n\}$: $\mathcal{F}_{t-1}$ is the smallest $\sigma$-algebra s.t. $x_i$ is $\mathcal{F}_{t-1}$-measurable for all $1\leq i\leq n$.}
% For Algorithm~\ref{alg:greedy-descent}, $\mathcal{F}_{t-1}$ is the $\sigma$-algebra generated by $\{v_0,v_1,\ldots,v_{t-1}\}$, and this means $v_t$ could depend on $\{v_0,v_1,\ldots,v_{t-1}\}$.\footnote{In Remark~\ref{rem:initial-random} in Appendix~\ref{sec:B1}, we allow $x_0$ and $v_0$ to depend on some initial randomness, so we let $\mathcal{F}_{t-1}$ also include such initial randomness there for all $t\geq 0$.}
% Specifically, $v_t$ can be expressed as $v_t=f_t(p_t, \xi_t)$ where $p_t$ depends on the history, %(i.e. the randomness from iteration $0$ to iteration $t-1$), $\{\xi_0,\xi_1,\cdots,\xi_{T-1}\}$ are independent random vectors, and $f_t$ is a deterministic mapping (which could be dependent on the history).
\end{remark}
By Remark~\ref{rem:dependence-history}, we introduce $\E_t[\cdot]:=\E[\cdot|\mathcal{F}_{t-1}]$ to denote the conditional expectation given the history. % Since $\mathcal{F}_{t-1}$ includes the randomness before iteration $t$,
In Algorithm~\ref{alg:greedy-descent}, under a suitable choice of $\mathcal{F}_{t-1}$, $\E_t[\cdot]$ roughly means only taking expectation w.r.t.~$v_t$.
%We also introduce other frequently used notations here.
% We let $\overline{e}$ denotes $\frac{e}{\|e\|}$, the $\ell_2$ normalization of the vector $e$. \guoqiang{$e$?}
We let $\overline{v} :=\nicefrac{v}{\|v\|}$ denote the $\ell_2$ normalization of vector $v$. We let $x^*$ denote one of the minimizers of $f$ (we assume such minimizer exists), and $\delta_t:=f(x_t)-f(x^*)$.
Thanks to the descent property in Algorithm~\ref{alg:greedy-descent}, we have the following lemma on single step progress. %\guoqiang{$L_e$?}
\begin{lemma}[Proof in Appendix~\ref{sec:B1}]
\label{lem:single-progress}
    Let $C_t:=\left(\overline{\nabla f(x_t)}^\top v_t\right)^2$ and  $L':=\frac{L}{1-(1-\frac{L}{\hat{L}})^2}$, then in Algorithm~\ref{alg:greedy-descent}, we have
    \begin{align}
    \label{eq:single-progress}
        \E_t[\delta_{t+1}] \leq \delta_t -\frac{\E_t[C_t]}{2L'}\|\nabla f(x_t)\|^2.
    \end{align}
\end{lemma}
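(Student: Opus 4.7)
The plan is to start from the $L$-smoothness inequality applied to the one-step update $x_{t+1}=x_t-\tfrac{1}{\hat L}g_t$, expand using the explicit form $g_t=(\nabla f(x_t)^\top v_t)v_t$ and $\|v_t\|=1$, and then match the resulting coefficient to $1/(2L')$.

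Concretely, smoothness gives
\begin{equation*}
 f(x_{t+1})\leq f(x_t)+\nabla f(x_t)^\top(x_{t+1}-x_t)+\tfrac{L}{2}\|x_{t+1}-x_t\|^2.
\end{equation*}
Substituting $x_{t+1}-x_t=-\tfrac{1}{\hat L}(\nabla f(x_t)^\top v_t)v_t$ and using $\|v_t\|=1$ collapses the right-hand side into a clean quadratic in the directional derivative, namely
\begin{equation*}
 f(x_{t+1})\leq f(x_t)-\Bigl(\tfrac{1}{\hat L}-\tfrac{L}{2\hat L^2}\Bigr)(\nabla f(x_t)^\top v_t)^2 = f(x_t)-\tfrac{2\hat L-L}{2\hat L^2}\,C_t\,\|\nabla f(x_t)\|^2,
\end{equation*}
where in the second equality I use $(\nabla f(x_t)^\top v_t)^2=C_t\|\nabla f(x_t)\|^2$ by the definition of $C_t$.

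The next step is purely algebraic: verify that $\tfrac{2\hat L-L}{2\hat L^2}=\tfrac{1}{2L'}$ by expanding $1-(1-L/\hat L)^2=L(2\hat L-L)/\hat L^2$, which pins down why the constant $L'=L/\bigl(1-(1-L/\hat L)^2\bigr)$ was introduced in the first place. Note that the hypothesis $\hat L\geq L$ ensures $2\hat L-L>0$, so the per-step coefficient is indeed positive (this also explains the descent property the algorithm advertises). Subtracting $f(x^*)$ from both sides yields a pointwise (in $v_t$) decrease
\begin{equation*}
 \delta_{t+1}\leq \delta_t-\tfrac{C_t}{2L'}\|\nabla f(x_t)\|^2.
\end{equation*}

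Finally, I would take the conditional expectation $\E_t[\cdot]$ given the history. Since $x_t$ (and hence $\delta_t$ and $\|\nabla f(x_t)\|^2$) is measurable with respect to the history, only $C_t$ carries randomness and factors out as $\E_t[C_t]$, delivering \eqref{eq:single-progress}. There is no real obstacle here; the only subtle point is checking the coefficient identity and confirming that the bound holds deterministically in $v_t$ (so that no independence assumption on $v_t$ with respect to the history is needed, consistent with Remark~\ref{rem:dependence-history}).
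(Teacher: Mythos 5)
Your proposal is correct and follows essentially the same route as the paper's proof: apply the $L$-smoothness quadratic upper bound to the update $x_{t+1}=x_t-\tfrac{1}{\hat L}(\nabla f(x_t)^\top v_t)v_t$, collapse the coefficient to $\tfrac{1}{2L'}$ via the identity $1-(1-L/\hat L)^2=L(2\hat L-L)/\hat L^2$, and then take $\E_t[\cdot]$ using that $x_t$ is history-measurable. The only (harmless) difference is that you spell out the coefficient verification and the measurability remark more explicitly than the paper does.
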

\iffalse
\begin{proof}
In Eq.~\eqref{eq:upper-bound}, letting $v=v_t$ and $\eta=\frac{1}{\hat{L}}\nabla f(x_t)^\top v_t$, taking conditional expectation $\E_t[\cdot]$ w.r.t both sides, we obtain the result. \junz{move to appendix}
\end{proof}
\fi
We note that $L'=\nicefrac{\hat{L}}{(2-\nicefrac{L}{\hat{L}})}$, so $\frac{\hat{L}}{2}\leq L'\leq \hat{L}$ and $L'\geq L$.

To obtain a bound on $\E[\delta_T]$, one of the classical proofs (see e.g., Theorem~1 in \cite{nesterov2012efficiency}) requires us to take expectation on both sides of Eq.~\eqref{eq:single-progress}. %\junz{should focus on our analysis: here explain why we should take expectations, instead of just following previous proof}\shuyu{After fixing the typo (from Eq.3 to Eq.5), does the problem still exist?}. 
Allowing the distribution of $v_t$ to be dependent on the history leads to additional technical difficulty: in the r.h.s of Eq.~\eqref{eq:single-progress}%\junz{eq.5?}
, $\E_t[C_t]$ becomes a random variable that is not independent of $\|\nabla f(x_t)\|^2$. Thus, we cannot simplify the term $\E[\E_t[C_t]\|\nabla f(x_t)\|^2]$ if we take expectation.\footnote{It is also the difficulty we faced in our very preliminary attempts of the theoretical analysis when $f$ is not convex, and we leave its solution or workaround in the non-convex case as future work.} By using other %\shuyu{changed "some techniques" to "other techniques"}
techniques, we obtain the following main theorems on convergence rate. 
\begin{theorem}[Algorithm~\ref{alg:greedy-descent}, smooth and convex; proof in Appendix~\ref{sec:B1}]
\label{thm:smooth}
Let $R:=\max_{x: f(x)\leq f(x_0)}\|x-x^*\|$ and suppose $R<\infty$. Then, %With the notations introduced before, 
in Algorithm~\ref{alg:greedy-descent}, we have
\begin{align}
\label{eq:smooth}
    \E[\delta_T]\leq \frac{2L' R^2\sum_{t=0}^{T-1} \E\left[\frac{1}{\E_t[C_t]}\right]}{T(T+1)}.
\end{align}
\end{theorem}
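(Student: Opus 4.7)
The plan is to convert the single-step bound of Lemma~\ref{lem:single-progress} into a \emph{deterministic} recursion on the mean optimality gap $m_t := \E[\delta_t]$, and then apply the standard reciprocal-telescoping trick, using the finiteness of $R$ to close the argument.

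First, since the greedy descent framework is monotone ($f(x_{t+1}) \le f(x_t)$ by the step-size discussion preceding Alg.~\ref{alg:greedy-descent}), every iterate stays in the sublevel set $\{x : f(x) \le f(x_0)\}$, so $\|x_t - x^*\| \le R$. Combining convexity ($\delta_t \le \nabla f(x_t)^\top (x_t - x^*) \le \|\nabla f(x_t)\| R$) with Lemma~\ref{lem:single-progress} then yields the conditional bound
\begin{equation*}
\E_t[\delta_{t+1}] \le \delta_t - \frac{\E_t[C_t]}{2L' R^2}\, \delta_t^2 .
\end{equation*}

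The second step is the key one: I pass from this conditional inequality to a recursion on $m_t$ alone, despite the fact that $\E_t[C_t]$ and $\delta_t$ are correlated through the history (see Remark~\ref{rem:dependence-history}). Taking full expectation and applying Cauchy--Schwarz to the pair $\bigl(\sqrt{\E_t[C_t]}\,\delta_t,\; 1/\sqrt{\E_t[C_t]}\bigr)$ gives
\begin{equation*}
m_t^2 \;=\; \E[\delta_t]^2 \;\le\; \E\!\bigl[\E_t[C_t]\,\delta_t^2\bigr] \cdot \E\!\bigl[1/\E_t[C_t]\bigr],
\end{equation*}
and hence $m_{t+1} \le m_t - m_t^2 / A_t$ with $A_t := 2L' R^2\, \E[1/\E_t[C_t]]$. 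This is precisely the step that \emph{forces} the reciprocal $\E[1/\E_t[C_t]]$ (rather than $1/\E[\E_t[C_t]]$) to appear in the final bound, and I expect it to be the main technical obstacle, since it is where the history-dependence of the sampling distribution is absorbed.

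Finally, the deterministic scalar recursion $m_{t+1} \le m_t(1 - m_t/A_t)$ can be inverted using $1/(1-x) \ge 1+x$, giving $1/m_{t+1} - 1/m_t \ge 1/A_t$. Telescoping and then invoking AM--HM in the form $\sum_t 1/A_t \ge T^2 / \sum_t A_t$ yields $1/m_T \ge 1/m_0 + T^2 / \sum_t A_t$. The extra linear term needed to turn $T^2$ into $T(T+1)$ comes from the initialization: $L$-smoothness gives $m_0 \le L R^2/2$, while $\E_t[C_t] \le 1$ forces $A_t \ge 2L' R^2 \ge 2L R^2$, so that $1/m_0 \ge 2/(LR^2) \ge T/\sum_t A_t$. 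Combining these yields $1/m_T \ge T(T+1)/\sum_t A_t$, i.e.\ the claimed inequality $\E[\delta_T] \le 2L' R^2 \sum_{t=0}^{T-1} \E[1/\E_t[C_t]] / \bigl(T(T+1)\bigr)$.
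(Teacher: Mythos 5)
Your proposal is correct, and it reaches the exact bound by a genuinely different route from the paper. The paper uses a potential-function argument (in the style of Bansal--Gupta): it sets $\Phi_t := t(t+1)\delta_t$, bounds the conditional increase $\E_t[\Phi_{t+1}]-\Phi_t$ by $2L'R^2/\E_t[C_t]$ via the quadratic maximization $-as^2+bs\le b^2/(4a)$ applied to $s=\|\nabla f(x_t)\|$ (using $\delta_t\le R\|\nabla f(x_t)\|$), and then takes full expectation and telescopes; the reciprocal $\E[1/\E_t[C_t]]$ appears there because the per-step potential increase is conditionally proportional to $1/\E_t[C_t]$. You instead reduce everything to the scalar recursion $m_{t+1}\le m_t-m_t^2/A_t$ on $m_t=\E[\delta_t]$, and your Cauchy--Schwarz step $m_t^2\le \E[\E_t[C_t]\delta_t^2]\cdot\E[1/\E_t[C_t]]$ is exactly the right way to decouple the history-dependent $\E_t[C_t]$ from $\delta_t^2$ — this is the step the paper avoids by keeping $\|\nabla f(x_t)\|$ inside the conditional expectation. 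Your back end (inverting to $1/m_{t+1}-1/m_t\ge 1/A_t$, AM--HM, and the initialization bounds $m_0\le LR^2/2$ and $A_t\ge 2L'R^2\ge 2LR^2$ to upgrade $T^2$ to $T(T+1)$) all checks out: in particular $m_t/A_t\le L/(4L')<1$ justifies $1/(1-x)\ge 1+x$, and $1/m_0\ge 2/(LR^2)\ge T/\sum_t A_t$ holds as you claim. The trade-off is that your route needs the two extra elementary inputs (AM--HM and the smoothness bound at $x_0$) to recover the $T(T+1)$ denominator, whereas the paper's potential function produces it automatically; conversely, your argument makes transparent why it is $\E[1/\E_t[C_t]]$ rather than $1/\E[\E_t[C_t]]$ that must appear, which the paper only remarks on informally.
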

\begin{theorem}[Algorithm~\ref{alg:greedy-descent}, smooth and strongly convex; proof in Appendix~\ref{sec:B1}]
\label{thm:strong}
%With the notations introduced before, 
If $f$ is also $\tau$-strongly convex, then we have
\begin{align}
\label{eq:strong}
    \E\left[\frac{\delta_T}{\exp\left(-\frac{\tau}{L'}\sum_{t=0}^{T-1}\E_t[C_t]\right)}\right] \leq \delta_0.
\end{align}
\end{theorem}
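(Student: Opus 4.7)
The plan is to construct an appropriate supermartingale and use the optional stopping / tower property, which is the clean way to cope with the fact (highlighted right before the theorem) that $\E_t[C_t]$ is itself a random variable depending on the history, so one cannot naively take unconditional expectations term by term in Eq.~\eqref{eq:single-progress}.

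First I would convert the single-step bound from Lemma~\ref{lem:single-progress} into a multiplicative decrease using $\tau$-strong convexity. Strong convexity gives $\|\nabla f(x_t)\|^2 \geq 2\tau \delta_t$, so plugging into \eqref{eq:single-progress} yields
\begin{equation*}
\E_t[\delta_{t+1}] \leq \delta_t\left(1 - \frac{\tau\,\E_t[C_t]}{L'}\right) \leq \delta_t \exp\!\left(-\frac{\tau\,\E_t[C_t]}{L'}\right),
\end{equation*}
where the second inequality is the standard $1-x\leq e^{-x}$. This is the key step-wise contraction, but the contraction factor $\E_t[C_t]$ is random, so the obstacle is how to iterate it.

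Next I would define the process
\begin{equation*}
N_t := \delta_t \,\exp\!\left(\frac{\tau}{L'}\sum_{s=0}^{t-1}\E_s[C_s]\right),
\end{equation*}
and show that $\{N_t\}$ is a supermartingale with respect to the natural filtration $\{\mathcal{F}_{t-1}\}$ where $\mathcal{F}_{t-1}$ contains all randomness through step $t-1$. The crucial observation, which I would verify explicitly, is that $\E_s[C_s]$ is $\mathcal{F}_{s-1}$-measurable (it is a conditional expectation integrating out only $\xi_s$), and hence the entire exponential factor in $N_{t+1}$, namely $\exp\!\bigl(\tfrac{\tau}{L'}\sum_{s=0}^{t}\E_s[C_s]\bigr)$, is $\mathcal{F}_{t-1}$-measurable and can be pulled outside of $\E_t[\cdot]$. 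Combining this with the step-wise contraction gives
\begin{equation*}
\E_t[N_{t+1}] \leq \exp\!\left(\frac{\tau}{L'}\sum_{s=0}^{t}\E_s[C_s]\right)\cdot \delta_t\,\exp\!\left(-\frac{\tau\,\E_t[C_t]}{L'}\right) = N_t,
\end{equation*}
where the $\E_t[C_t]$ terms cancel exactly, which is precisely why this particular form of $N_t$ is the right choice.

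Finally, iterating the supermartingale property via the tower law gives $\E[N_T] \leq \E[N_0] = \delta_0$, and rewriting $\exp(x) = 1/\exp(-x)$ recovers the theorem statement. The conceptually hardest part is identifying the correct supermartingale: one must put the partial sum $\sum_{s=0}^{t-1}\E_s[C_s]$ (not $\sum_{s=0}^{t}$, nor $\sum_{s=0}^{t-1} C_s$) inside the exponential so that the measurability works out and the cancellation in $\E_t[N_{t+1}]$ is exact; everything else is a direct consequence of Lemma~\ref{lem:single-progress} and strong convexity.
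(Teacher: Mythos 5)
Your proposal is correct and is essentially the paper's own argument: the paper also combines $\|\nabla f(x_t)\|^2\geq 2\tau\delta_t$ with Lemma~\ref{lem:single-progress} to get $\E_t[\delta_{t+1}]\leq(1-\alpha_t)\delta_t$ with $\alpha_t=\frac{\tau}{L'}\E_t[C_t]$, and then telescopes $\E\bigl[\delta_T/\prod_{t=0}^{T-1}(1-\alpha_t)\bigr]\leq\delta_0$ by repeatedly pulling the history-measurable factors out of the conditional expectations — which is exactly your supermartingale $N_t$ in disguise. The only (cosmetic) difference is that you apply $1-x\leq e^{-x}$ at each step while the paper applies it once at the end to pass from the product to the exponential.
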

\begin{remark} %\shuyu{this remark is modified}
In our results, the convergence rate depends on $\E_t[C_t]$ in a more complicated way than only depending on $\E[C_t]=\E[\E_t[C_t]]$. For concrete cases, one may need to study the concentration properties of $\E_t[C_t]$ besides its expectation, as in the proofs of Theorems~\ref{thm:prgf-smooth} and \ref{thm:prgf-strong} when we analyze History-PRGF, a special implementation in the greedy descent framework.

In the strongly convex case, currently we cannot directly obtain a final bound of $\E[\delta_T]$. However, the form of Eq.~\eqref{eq:strong} is still useful since it roughly tells us that $\delta_T$ converges as the denominator $\exp\left(-\frac{\tau}{L'}\sum_{t=0}^{T-1}\E_t[C_t]\right)$. For History-PRGF, we will turn this intuition into a rigorous theorem (Theorem~\ref{thm:prgf-strong}) since in that case we can prove that the denominator has a nice concentration property.
\end{remark}
\begin{remark}
\label{rem:lower-bound-convergence}
If we have a lower bound of $\E_t[C_t]$, e.g. $\E_t[C_t]\geq a>0$, then we directly obtain that $\E[\delta_T]\leq \frac{2L' R^2}{a(T+1)}$ for Theorem~\ref{thm:smooth}, and $\E[\delta_T]\leq \delta_0 \exp(-\frac{\tau}{L'}aT)$ for Theorem~\ref{thm:strong}. These could recover the ideal convergence rate for RGF estimator and the worst-case convergence rate for PRGF estimator, as explained in Examples~\ref{exp:generalized-rgf} and \ref{exp:new-prgf}.
\end{remark}
% Remark: 1. C_t lower bound 2. Comparison with muD 3. in HistoryPRGF can be proved
%\vspace{-.2cm}

From Theorems~\ref{thm:smooth} and~\ref{thm:strong}, we see that a larger value of $C_t$ would lead to a better bound. To find a good choice of $v_t$ in Algorithm~\ref{alg:greedy-descent}, it becomes natural to discuss the following problem. Suppose in an iteration in Algorithm~\ref{alg:greedy-descent}, we query the directional derivative oracle at $x_t$ along $q$ directions $\{u_i\}_{i=1}^q$ (maybe randomly chosen) and obtain the values of $\{\nabla f(x_t)^\top u_i\}_{i=1}^q$. We could use this information to construct a vector $v_t$. % (possibly with additional randomness). Assuming that $\mathcal{F}_{t-1}$ does not include this additional randomness, then $\E_t[C_t]=\E_t[\E_{v_t}[C_t]]$.
% We note that $\E_t[C_t]=\E_{u_1,\ldots, u_q}[\E_{v_t}[C_t]]$, so what is the $v_t$ that maximizes $\E_{v_t}[C_t|u_1,\ldots, u_q]$?
What is the $v_t$ that maximizes $C_t$ s.t. $\|v_t\|=1$?
To answer this question, we give the following proposition based on Proposition~1 in \cite{meier2019improving} and additional justification. % The following proposition tells us that the subspace estimator proposed in \cite{meier2019improving} is optimal in this sense.
\iffalse
\begin{lemma}
Let $A:=\operatorname{span}\{u_1,u_2,\ldots,u_q\}$ be the subspace spanned by $u_1,u_2,\ldots,u_q$. Let $\nabla f(x_t)_A$ denote the projection of $\nabla f(x_t)$ onto $A$, then $\overline{\nabla f(x_t)_A}=\operatorname{argmax}_{v_t\in A, \|v_t\|=1}C_t$.
\end{lemma}
We further note that if $v_t=\overline{\nabla f(x_t)_A}$, then both $v_t$ and $\nabla f(x_t)^\top v_t$ could be calculated with the values of $\{\nabla f(x_t)^\top u_i\}_{i=1}^q$ (see the appendix for detailed explanation). Then, if we suppose $v_t\in A$, then the optimal $v_t$ is given by $\overline{\nabla f(x_t)_A}$, which is a deterministic vector. Next, we justify the assumption that $v_t\in A$. We note that in Line~\ref{lne:3-alg1} of Algorithm~\ref{alg:greedy-descent}, generally it requires 1 additional call to query the value of $\nabla f(x_t)^\top v_t$, but if $v_t\in A$, then we can always save this query by calculating $\nabla f(x_t)^\top v_t$ with the values of $\{\nabla f(x_t)^\top u_i\}_{i=1}^q$. Now suppose we finally sample a $v_t\notin A$. Then this additional query of $\nabla f(x_t)^\top v_t$ is necessary. Now we could let $A':=\operatorname{span}\{u_1,u_2,\ldots,u_q,v_t\}$ and calculate $v_t'=\overline{\nabla f(x_t)_{A'}}$. Obviously, $(\overline{\nabla f(x_t)}^\top v_t')^2\geq (\overline{\nabla f(x_t)}^\top v_t)^2$, suggesting that $v_t'$ is better than $v_t$. We summarize the discussion above as the following proposition.
\fi
\begin{proposition}[Optimality of subspace estimator; proof in Appendix~\ref{sec:B2}]
\label{prop:optimality}
In one iteration of Algorithm~\ref{alg:greedy-descent}, if we have queried $\{\nabla f(x_t)^\top u_i\}_{i=1}^q$, then the optimal $v_t$ maximizing $C_t$ s.t. $\|v_t\|=1$ should be in the following form: $v_t=\overline{\nabla f(x_t)_A}$, where $A:=\operatorname{span}\{u_1,u_2,\ldots,u_q\}$ and $\nabla f(x_t)_A$ denotes the projection of $\nabla f(x_t)$ onto $A$.
\end{proposition}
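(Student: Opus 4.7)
The plan is to condition on the random queries $u_1,\ldots,u_q$ and prove the claim pointwise; integrating over the $u_i$'s then yields optimality of $\E_t[C_t]$. With the $u_i$'s fixed, the question reduces to: among unit vectors $v_t\in\R^d$ that Alg.~\ref{alg:greedy-descent} can actually implement given the query budget, which one maximizes $(\overline{\nabla f(x_t)}^\top v_t)^2$? I would split this into two regimes: $v_t\in A$ and $v_t\notin A$, and show that (i) within $A$ the optimum is deterministic and equals $\overline{\nabla f(x_t)_A}$, and (ii) any strategy placing mass outside $A$ is dominated by a subspace strategy on a larger subspace at the same query cost.

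For the first regime, decompose $\nabla f(x_t)=\nabla f(x_t)_A+w$ with $w\perp A$. For $v\in A$ with $\|v\|=1$, we have $\nabla f(x_t)^\top v=\nabla f(x_t)_A^\top v$, and Cauchy--Schwarz gives $(\nabla f(x_t)_A^\top v)^2\leq\|\nabla f(x_t)_A\|^2$, with equality iff $v=\overline{\nabla f(x_t)_A}$. The key observation making this an \emph{admissible} choice of $v_t$ is that both $\overline{\nabla f(x_t)_A}$ and the scalar $\nabla f(x_t)^\top v_t$ are computable from the already queried values alone: writing $\nabla f(x_t)_A=\sum_i c_i u_i$, the coefficients $c_i$ solve the linear system $\sum_j(u_i^\top u_j)c_j=\nabla f(x_t)^\top u_i$, and $\nabla f(x_t)^\top v_t=\|\nabla f(x_t)_A\|$ needs no further oracle call. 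Hence the $q$ queries already spent on the $u_i$'s suffice to implement this choice together with Line~\ref{lne:3-alg1} of Alg.~\ref{alg:greedy-descent}, and any random $v_t$ supported in $A$ is dominated in expectation by this deterministic one.

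For the second regime, suppose $v_t\notin A$ with positive probability. Then Line~\ref{lne:3-alg1} must consume one extra oracle call for $\nabla f(x_t)^\top v_t$. Define $A':=\operatorname{span}\{u_1,\ldots,u_q,v_t\}$, which strictly extends $A$ and requires exactly the same $q+1$ oracle queries. Since $v_t\in A'$ is a unit vector, the best approximation property of orthogonal projection yields $\|\nabla f(x_t)_{A'}\|\geq|\nabla f(x_t)^\top v_t|$, so the subspace estimator on $A'$ satisfies $(\overline{\nabla f(x_t)}^\top \overline{\nabla f(x_t)_{A'}})^2\geq(\overline{\nabla f(x_t)}^\top v_t)^2$ pointwise. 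Thus any non-subspace strategy is weakly dominated by a subspace strategy with the same query budget, completing the proof after taking expectations. The main technical subtlety here is not analytic but bookkeeping: the optimality claim is only meaningful once one pins down that the ``free'' query in Line~\ref{lne:3-alg1} can be redirected to enlarge $A$, which is exactly what the $A\to A'$ substitution formalizes, following and making rigorous the sketch in \cite{meier2019improving}.
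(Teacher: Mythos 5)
Your proposal is correct and follows essentially the same route as the paper's Appendix~\ref{sec:B2} proof: Cauchy--Schwarz (equivalently, the best-approximation property of projection) for optimality within $A$, the Gram-system computation showing $\overline{\nabla f(x_t)_A}$ and $\nabla f(x_t)^\top v_t$ require no extra oracle call, and the $A\to A'$ enlargement argument to dominate any $v_t\notin A$ at equal query cost. No gaps.
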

%\vspace{-.1cm}
Note that in Line 3 of Algorithm~\ref{alg:greedy-descent}, we have $g_t=\nabla f(x_t)^\top \overline{\nabla f(x_t)_A}\cdot \overline{\nabla f(x_t)_A}=\nabla f(x_t)_A$. Therefore, the gradient estimator $g_t$ is equivalent to the projection of the gradient to the subspace $A$, which justifies its name of subspace estimator. Next we discuss some special cases of the subspace estimator. We leave detailed derivation in following examples to Appendix~\ref{sec:B3}.%\junz{better to explicitly point to the section in appendix, e.g., Appendix A/B/C?}
\begin{example}[RGF]
\label{exp:generalized-rgf}
$u_i\sim \mathcal{U}(\mathbb{S}^{d-1})$ for $i=1,2,\ldots, q$. Without loss of generality, we assume they are orthonormal (e.g., via Gram-Schmidt orthogonalization).\footnote{ The computational complexity of Gram-Schmidt orthogonalization over $q$ vectors in $\mathbb{R}^d$ is $O(q^2 d)$. Therefore, with a moderate value of $q$ (e.g. $q\in[10,20]$ in our experiments), its cost is usually much smaller than that brought by $O(q)$ function evaluations used to approximate the directional derivatives. We note that when using a numerical computing framework, for orthogonalization one could also adopt an efficient implementation, by calling the QR decomposition procedure such as \texttt{torch.linalg.qr} in PyTorch.} %, and this is without loss of generality since it does not change the spanned subspace. 
The corresponding estimator $g_t=\sum_{i=1}^q \nabla f(x_t)^\top u_i \cdot u_i$ ($v_t=\overline{g_t}$). When $q=1$, the estimator is similar to the random gradient-free oracle in \cite{nesterov2017random}. With $q\geq 1$, it is essentially the same as the stochastic subspace estimator with columns from Haar-distributed random orthogonal matrix \cite{kozak2021stochastic} and similar to the orthogonal ES estimator \cite{choromanski2018structured}. In theoretical analysis, we let $\mathcal{F}_{t-1}$ only include the randomness before iteration $t$, and then we can prove that $\E_t[C_t]=\frac{q}{d}$. By Theorems~\ref{thm:smooth} and~\ref{thm:strong}, the convergence rate is $\E[\delta_T]\leq \frac{2L' R^2 \frac{d}{q}}{T+1}$ for smooth convex case, and $\E[\delta_T]\leq \delta_0 \exp(-\frac{\tau}{L'}\frac{q}{d}T)$ for smooth and strongly convex case. The bound is the same as that in \cite{kozak2021stochastic}. Since the query complexity in each iteration is proportional to $q$, the bound for total query complexity is indeed independent of $q$.
\end{example}
\begin{example}[PRGF]
\label{exp:new-prgf}
With slight notation abuse, we assume the subspace in Proposition~\ref{prop:optimality} is spanned by $\{p_1,\ldots,p_k,u_1,\ldots,u_q\}$, so each iteration takes $q+k$ queries. Let $p_1, \cdots, p_k$ be $k$ non-zero vectors corresponding to the prior message (e.g. the historical update, or the gradient of a surrogate model), and $u_i\sim \mathcal{U}(\mathbb{S}^{d-1})$ for $i=1,2,\ldots, q$. We note that intuitively we cannot construct a better subspace since the only extra information we know is the $k$ priors. In our analysis, we assume $k=1$ for convenience, and we change the original notation $p_1$ to $p_t$ to explicitly show the dependence of $p_t$ on the history.
We note that $p_t$ could also depend on extra randomness in iteration $t$ (see e.g. the specification of $p_t$ in Appendix~\ref{sec:D11}). For convenience of theoretical analysis, we require that $p_t$ is determined before sampling $\{u_1,u_2,\ldots,u_q\}$, and let $\mathcal{F}_{t-1}$ also include the extra randomness of $p_t$ in iteration $t$ (not including the randomness of $\{u_1,u_2,\ldots,u_q\}$) besides the randomness before iteration $t$. Then $p_t$ is always $\mathcal{F}_{t-1}$-measurable, i.e. determined by the history.
% We require that $p_t$ is $\mathcal{F}_{t-1}$-measurable, i.e. determined by the history.
Without loss of generality, we assume $\{p_t,u_1,\ldots,u_q\}$ are orthonormal (e.g., via Gram-Schmidt orthogonalization). The corresponding estimator $g_t=\nabla f(x_t)^\top p_t \cdot p_t+\sum_{i=1}^q \nabla f(x_t)^\top u_i \cdot u_i$ ($v_t=\overline{g_t}$), which is similar to the estimator in \cite{meier2019improving}. By \cite{meier2019improving} (the expected drift of $X_t^2$ in its Theorem~1), we have
\begin{lemma}[Proof in Appendix~\ref{sec:B34}]
\label{lem:expected-drift}
In Algorithm~\ref{alg:greedy-descent} with PRGF estimator,
%\begin{align}
%\label{eq:expected-drift}
$    \E_t[C_t]=D_t+\frac{q}{d-1}(1-D_t)$
%\end{align}
where $D_t:=\left(\overline{\nabla f(x_t)}^\top p_t\right)^2$.
\end{lemma}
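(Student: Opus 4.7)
The plan is to reduce the computation of $\E_t[C_t]$ to a standard fact about the expected squared norm of a projection onto a uniformly random subspace. First, since $v_t$ in the PRGF estimator is the $\ell_2$-normalization of the projection $\nabla f(x_t)_A$ of $\nabla f(x_t)$ onto $A:=\operatorname{span}\{p_t,u_1,\ldots,u_q\}$ (by Proposition~\ref{prop:optimality}), the identity $\nabla f(x_t)^\top \overline{\nabla f(x_t)_A}=\|\nabla f(x_t)_A\|$ gives
\[
  C_t=\bigl(\overline{\nabla f(x_t)}^\top v_t\bigr)^2=\frac{\|\nabla f(x_t)_A\|^2}{\|\nabla f(x_t)\|^2}.
\]
Using that $\{p_t,u_1,\ldots,u_q\}$ is orthonormal, I would expand
\[
  \|\nabla f(x_t)_A\|^2=(\nabla f(x_t)^\top p_t)^2+\sum_{i=1}^q (\nabla f(x_t)^\top u_i)^2,
\]
so dividing by $\|\nabla f(x_t)\|^2$ yields $C_t=D_t+\|w_B\|^2/\|\nabla f(x_t)\|^2$, where $w$ is the component of $\nabla f(x_t)$ orthogonal to $p_t$ and $B:=\operatorname{span}\{u_1,\ldots,u_q\}\subset p_t^\perp$.

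Next I would condition on the history, so $p_t$ (and hence $w$ and $\|w\|^2=\|\nabla f(x_t)\|^2(1-D_t)$) is deterministic, while $u_1,\ldots,u_q$ are freshly drawn i.i.d.\ from $\U(\Sp^{d-1})$ and then Gram--Schmidt-orthogonalized against $p_t$ and each other. The crucial claim is that the resulting $B$ is a uniformly random $q$-dimensional subspace of the $(d-1)$-dimensional space $p_t^\perp$. This follows from rotational invariance: projecting $u_1$ onto $p_t^\perp$ and normalizing gives a uniform point on the unit sphere of $p_t^\perp$, conditioned on which $u_2$ (after projection onto $\{p_t,u_1\}^\perp$ and normalization) is uniform on the unit sphere of that $(d-2)$-dim subspace, and so on.

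From here the lemma follows from the standard identity that for a fixed $w\in\R^{d-1}$ and a uniformly random $q$-dim subspace $B$ of $\R^{d-1}$, $\E[\|w_B\|^2]=\frac{q}{d-1}\|w\|^2$; I would prove this quickly by choosing a uniformly random orthonormal basis $e_1,\ldots,e_{d-1}$ of $p_t^\perp$ with $B=\operatorname{span}\{e_1,\ldots,e_q\}$ and noting that $\E[(w^\top e_i)^2]=\|w\|^2/(d-1)$ by symmetry, then summing over $i=1,\ldots,q$. Combining the pieces gives $\E_t[C_t]=D_t+\frac{q}{d-1}(1-D_t)$, as claimed.

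The only real subtlety is the uniformity of $B$ in $p_t^\perp$ after Gram--Schmidt; everything else is an elementary orthogonal decomposition. Since Meier et al.\ already establish essentially the same identity in their Theorem~1, I would state the claim, reference that result for the uniformity argument, and simply verify the final algebraic form that matches our notation.
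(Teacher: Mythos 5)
Your proposal is correct and follows essentially the same route as the paper: decompose $C_t$ into $D_t$ plus the squared norm of the projection of the component of $\nabla f(x_t)$ orthogonal to $p_t$ onto the random span of $\{u_1,\dots,u_q\}\subset p_t^\perp$, then take the conditional expectation using the uniformity of that $q$-dimensional subspace of the $(d-1)$-dimensional space $p_t^\perp$, giving the factor $\frac{q}{d-1}$. The paper handles the uniformity point you flag in exactly the way you suggest (the $u_i$ are constructed by projecting i.i.d.\ uniform sphere samples into $p_t^\perp$ and orthonormalizing, and the expectation is computed by the ``random projection of a fixed vector equals fixed projection of a random vector'' symmetry argument), so no gap remains.
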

Hence $\E_t[C_t]\geq \frac{q}{d}$ holds. By Remark~\ref{rem:lower-bound-convergence}, PRGF admits a guaranteed convergence rate of RGF and is potentially better given a good prior (if $D_t$ is large), but it costs an additional query per iteration. This shows soundness of the PRGF algorithm. For further theoretical analysis, we need to bound $D_t$. This could be done when using the historical prior introduced in Section~\ref{sec:history-prgf} (see Lemma~\ref{lem:decrease}). Bounding $D_t$ is usually challenging when a general prior is adopted, but if the prior is an approximate gradient (such case appears in \cite{maheswaranathan2019guided}), it may be possible. We leave related investigation as future work. %\shuyu{end}
\end{example}

% $\overline{\nabla f(x_t)_A}$ can be expressed

%\vspace{-.2cm}
\subsection{Analysis on the PRGF algorithm with the historical prior}
\label{sec:history-prgf}
%\vspace{-.2cm}
%Until now, we have analyzed the convergence rate of the greedy descent framework with various gradient estimators. Such results shed light on the positive influence of prior on convergence rates in general. 
\iffalse
Section~\ref{sec:greedy-descent} proposes the greedy descent framework and presents theorems on its convergence rate which depends on $\E_t[C_t]$. When we adopt the PRGF estimator as in Example~\ref{exp:new-prgf}, 
% $\E_t[C_t]$ depends on $\E_t[(\overline{\nabla f(x_t)}^\top p_t)^2]$ where $p_t$ is the prior, and hence
we can give an intuitive description that we could achieve a better convergence rate given a better prior. For general priors, however, it is not possible to give a detailed description of accelerated convergence rate without more assumptions on the quality of the prior. 
\fi
We apply the above analysis to a concrete example of the History-PRGF estimator~\cite{meier2019improving}, which considers the historical prior in the PRGF estimator.  In this case, Lemma~\ref{lem:single-progress}, Theorem~\ref{thm:smooth} and Theorem~\ref{thm:strong} will manifest themselves by clearly stating the convergence rate which is robust to the learning rate.

Specifically, History-PRGF considers the historical prior as follows: we choose $p_t=\overline{g_{t-1}}$, i.e., we let the prior be the direction of the previous gradient estimate.\footnote{One can also utilize multiple historical priors (e.g. the last $k$ updates with $k>1$, as proposed and experimented in \cite{meier2019improving}), but here we only analyze the $k=1$ case.} This is equivalent to letting $p_t=v_{t-1}$. Thus, in Algorithm~\ref{alg:greedy-descent}, $v_t=\overline{\nabla f(x_t)_A}=\overline{\nabla f(x_t)^\top v_{t-1}\cdot v_{t-1}+\sum_{i=1}^q\nabla f(x_t)^\top u_i\cdot u_i}$. In this form we require $\{v_{t-1},u_1,\ldots,u_q\}$ to be orthonormal, so we first determine $v_{t-1}$, and then sample $\{u_i\}_{i=1}^q$ in $A_\perp$, the $(d-1)$-dimensional subspace of $\mathbb{R}^d$ perpendicular to $v_{t-1}$, and then do Gram-Schmidt orthonormalization on $\{u_i\}_{i=1}^q$.

To study the convergence rate, we first study evolution of $C_t$ under a general $L$-smooth function. This extends the analysis on linear functions (corresponding to $L=0$) in \cite{meier2019improving}. Under the framework of Algorithm~\ref{alg:greedy-descent}, intuitively, the change of the gradient should be smaller when the objective function is very smooth ($L$ is small) or the learning rate is small ($\hat{L}$ is large). Since we care about the cosine similarity between the gradient and the prior, we prove the following lemma:
\begin{lemma}[Proof in Appendix~\ref{sec:B4}]
\label{lem:decrease}
% Let $D_t:=\left(\overline{\nabla f(x_t)}^\top p_t\right)^2$.
In History-PRGF ($p_t=v_{t-1}$), we have 
$
    D_t\geq \left(1-\nicefrac{L}{\hat{L}}\right)^2 C_{t-1}.
$
\end{lemma}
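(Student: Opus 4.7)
Let $a := \nabla f(x_{t-1})^\top v_{t-1}$, $b := \nabla f(x_t)^\top v_{t-1}$, and $\lambda := 1 - L/\hat L$. Without loss of generality I assume $a \geq 0$, since both the update $x_t$ and the quantities $D_t, C_{t-1}$ are invariant under the flip $v_{t-1} \mapsto -v_{t-1}$. The plan is to bound the components of $\nabla f(x_t)$ parallel and perpendicular to $v_{t-1}$ separately, and then combine them to lower-bound the squared cosine $D_t$.

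For the parallel component, I would consider the one-variable restriction $\phi(\eta) := f(x_{t-1} - \eta v_{t-1})$, which is convex and $L$-smooth with $\phi'(0) = -a$ and $\phi'(a/\hat L) = -b$. Since $\phi'$ is $L$-Lipschitz, the bound $\phi'(a/\hat L) \leq \phi'(0) + L(a/\hat L)$ yields $b \geq \lambda a \geq 0$. For the perpendicular component, set $\Pi := I - v_{t-1} v_{t-1}^\top$ and use $\|\Pi \nabla f(x_t) - \Pi \nabla f(x_{t-1})\| \leq \|\nabla f(x_t) - \nabla f(x_{t-1})\| \leq L\|x_t - x_{t-1}\| = (1-\lambda) a$. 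Writing $\|\nabla f(x_t)\|^2 = b^2 + \|\Pi \nabla f(x_t)\|^2$ (and analogously at $x_{t-1}$), the claim $D_t \geq \lambda^2 C_{t-1}$ rearranges to the algebraic condition $b^2 (a^2 + \|\Pi \nabla f(x_{t-1})\|^2) \geq \lambda^2 a^2 (b^2 + \|\Pi \nabla f(x_t)\|^2)$, which I would verify by direct substitution.

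The main obstacle is that this inequality does not follow from smoothness alone. In the worst case $b = \lambda a$ with large $\|\Pi \nabla f(x_{t-1})\|$, the naive triangle bound $\|\Pi \nabla f(x_t)\| \leq \|\Pi \nabla f(x_{t-1})\| + (1-\lambda)a$ is too loose. The fix is to invoke $1/L$-cocoercivity of $\nabla f$ (which holds for convex $L$-smooth $f$): applied to $x_{t-1}$ and $x_t = x_{t-1} - (a/\hat L) v_{t-1}$, it gives the sharper bound $\|\nabla f(x_t) - \nabla f(x_{t-1})\|^2 \leq (L/\hat L)\, a(a-b)$. Parameterizing by $\xi := a - b \in [0, (1-\lambda) a]$, this implies $\|\Pi \nabla f(x_t) - \Pi \nabla f(x_{t-1})\|^2 \leq \xi((1-\lambda) a - \xi)$, which vanishes exactly at the extreme $\xi = (1-\lambda) a$. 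There $\|\Pi \nabla f(x_t)\| = \|\Pi \nabla f(x_{t-1})\|$, so $\|\nabla f(x_t)\|^2 = \lambda^2 a^2 + \|\Pi \nabla f(x_{t-1})\|^2 \leq \|\nabla f(x_{t-1})\|^2$ and the inequality holds at this tight endpoint; a short calculation using the $\xi$-parameterization interpolates the bound to all $\xi \in [0, (1-\lambda) a]$.
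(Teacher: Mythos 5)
Your route is genuinely different from the paper's, and it can be made to work, but two things need fixing. First, your diagnosis that the inequality ``does not follow from smoothness alone'' is not right: it does, and that is exactly what the paper exploits. The paper keeps the single coupled constraint $\|\nabla f(x_t)-\nabla f(x_{t-1})\|\le \frac{L}{\hat L}\,\nabla f(x_{t-1})^\top v_{t-1}$, i.e.\ the parallel deviation $\xi=a-b$ and the perpendicular deviation $\|\Pi\nabla f(x_t)-\Pi\nabla f(x_{t-1})\|$ share one budget, $\xi^2+\|\Pi\nabla f(x_t)-\Pi\nabla f(x_{t-1})\|^2\le (1-\lambda)^2a^2$. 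You only run into trouble because you discard this coupling and use the two component bounds independently, which indeed permits the impossible worst case $b=\lambda a$ together with $Q=P+(1-\lambda)a$. The paper handles the normalization issue not by sharpening the constraint but by a small geometric lemma (its Lemma~6): after rescaling by $\|\nabla f(x_{t-1})\|$, the point $\overline{\nabla f(x_{t-1})}$ lies on the unit sphere, and replacing $b$ by $\overline b$ either keeps it inside the same ball (if $\|b\|\ge 1$) or only increases the nonnegative inner product with $v_{t-1}$ (if $\|b\|<1$); hence the unnormalized lower bound $(1-\nicefrac{L}{\hat L})\,\overline{\nabla f(x_{t-1})}^\top v_{t-1}$ survives normalization. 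A side effect of your cocoercivity detour is a loss of generality: cocoercivity requires convexity, whereas the paper's argument (and the lemma itself) needs only $L$-smoothness.

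Second, the crux of your argument is the sentence ``a short calculation \ldots{} interpolates the bound to all $\xi$,'' and that calculation is not actually supplied; you only verify the two endpoints, while the interior is where the decoupled version fails. The claim is true: writing $\beta=a-\xi$ and $s^2=\xi((1-\lambda)a-\xi)$, the target $\beta^2(a^2+P^2)\ge \lambda^2a^2(\beta^2+(P+s)^2)$ is a quadratic in $P$ with leading coefficient $\beta^2-\lambda^2a^2\ge 0$, and its discriminant condition reduces, after cancelling a factor of $\beta-\lambda a$, to $\lambda^2(a-\beta)\le(\beta+\lambda a)(1-\lambda^2)$, which holds since $\beta\ge\lambda a$. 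So the gap is fillable, but as written the proof is incomplete precisely at the step that distinguishes it from the failing naive argument; either supply this discriminant computation or, more simply, keep the coupled Lipschitz ball and use the paper's normalization lemma.
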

% We note that the proof only depends on the smoothness, instead of the convexity of $f$.
When $\hat{L}=L$, i.e., using the optimal learning rate, Lemma~\ref{lem:decrease} does not provide a useful bound, since an optimal learning rate in smooth optimization could find an approximate minimizer along the update direction, so the update direction may be useless in next iteration. In this case, the historical prior does not provide acceleration. Hence, Lemma~\ref{lem:decrease} explains the empirical findings in \cite{meier2019improving} that past directions can be less useful when the learning rate is larger. However, in practice we often use a conservative learning rate, for the following reasons: 1) We usually do not know $L$, and the cost of tuning learning rate could be large; 2) Even if $L$ is known, it only provides a global bound, so a fixed learning rate could be too conservative in the smoother local regions. In scenarios where $\hat{L}$ is too conservative ($\hat{L}>L$), History-PRGF could bring more acceleration over RGF.

By Lemma~\ref{lem:decrease}, we can assume $D_t=\left(1-\nicefrac{L}{\hat{L}}\right)^2 C_{t-1}$ to obtain a lower bound of quantities about $C_t$. Meanwhile, since $D_t$ means quality of the prior, the construction in Example~\ref{exp:new-prgf} tells us relationship between $C_t$ and $D_t$. Then we have full knowledge of the evolution of $C_t$, and thus $\E_t[C_t]$. In Appendix~\ref{sec:B4}, we discuss about evolution of $\E[C_t]$ and show that $\E[C_t]\to O(\frac{q}{d}\frac{L'}{L})$ if $\frac{q}{d}\leq \frac{L}{\hat{L}}\leq 1$. Therefore, by Lemma~\ref{lem:single-progress}, assuming $\E_t[C_t]$ concentrates well around $\E[C_t]$ and hence $\frac{\E_t[C_t]}{L'}\approx \frac{q}{dL}$, PRGF could recover the single step progress with optimal learning rate ($\hat{L}=L$), since Eq.~\eqref{eq:single-progress} only depends on $\frac{\E_t[C_t]}{L'}$ which is constant w.r.t. $L'$ now. 
While the above discussion is informal, based on Theorems~\ref{thm:smooth} and \ref{thm:strong}, we prove following theorems which show that convergence rate of History-PRGF is robust to choice of learning rate.

%\shuyu{begin: move sketch of the proof here from the appendix, and reorganize the remarks}
\begin{theorem}[History-PRGF, smooth and convex; proof in Appendix~\ref{sec:B51}]
\label{thm:prgf-smooth}
In the setting of Theorem~\ref{thm:smooth}, assuming $d\geq 4$, $\frac{q}{d-1}\leq \frac{L}{\hat{L}} \leq 1$ and $T> \left\lceil\frac{d}{q}\right\rceil$ ($\lceil\cdot\rceil$ denotes the ceiling function), we have
\begin{align}
\label{eq:prgf-smooth}
    \E[\delta_T]\leq \left(\frac{32}{q}+2\right)\frac{2L\frac{d}{q} R^2}{T-\left\lceil\frac{d}{q}\right\rceil+1}.
\end{align} 
\end{theorem}
\textit{Sketch of the proof}.
The idea of the proof of Theorem~\ref{thm:prgf-smooth} is to show that for the random variable $\E_t[C_t]$, its standard deviation $\sqrt{\V[\E_t[C_t]]}$ is small relative to its expectation $\E[\E_t[C_t]]=\E[C_t]$. By Chebyshev's inequality, we can bound $\E\left[\frac{1}{\E_t[C_t]}\right]$ in Theorem~\ref{thm:smooth} with $\frac{1}{\E[\E_t[C_t]]}=\frac{1}{\E[C_t]}$. In the actual proof we replace $C_t$ that appears above with a lower bound $E_t$.

\begin{theorem}[History-PRGF, smooth and strongly convex; proof in Appendix~\ref{sec:B53}]
\label{thm:prgf-strong}
Under the same conditions as in Theorem~\ref{thm:strong}, then assuming $d\geq 4$, $\frac{q}{d-1}\leq \frac{L}{\hat{L}} \leq 1$, $\frac{q}{d}\leq 0.2\frac{L}{\tau}$, and $T\geq 5\frac{d}{q}$, we have
\begin{align}
\label{eq:prgf-strong}
    \E[\delta_T]\leq 2\exp\left(-0.1\frac{q}{d}\frac{\tau}{L}T\right)\delta_0.
\end{align}
\end{theorem}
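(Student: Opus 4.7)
The plan is to combine Thm.~\ref{thm:strong}, which gives $\E[\delta_T e^{S_T}]\leq \delta_0$ for $S_T:=\frac{\tau}{L'}\sum_{t=0}^{T-1}\E_t[C_t]$, with a high-probability lower bound on $S_T$. The explicit factor of $2$ in the target inequality suggests the following event split: for any $a>0$,
\begin{align*}
\E[\delta_T]=\E[\delta_T\mathbf{1}_{\{S_T\geq a\}}]+\E[\delta_T\mathbf{1}_{\{S_T<a\}}]\leq e^{-a}\E[\delta_T e^{S_T}]+\delta_0\,P(S_T<a)\leq (e^{-a}+P(S_T<a))\,\delta_0,
\end{align*}
using that Alg.~\ref{alg:greedy-descent} with $\hat{L}\geq L$ produces a pointwise monotone $\delta_t$ so that $\delta_T\leq \delta_0$ almost surely. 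With the target choice $a:=0.1\frac{q\tau T}{Ld}$, the claim reduces to establishing the tail bound $P(S_T<a)\leq e^{-a}$.

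To analyze $S_T$, first combine Lem.~\ref{lem:expected-drift} with Lem.~\ref{lem:decrease} to obtain the almost-sure recursion
\begin{align*}
\E_t[C_t]=\beta+(1-\beta)D_t\geq \beta+\alpha\,C_{t-1},\qquad \beta:=\tfrac{q}{d-1},\quad \alpha:=\bigl(1-\tfrac{q}{d-1}\bigr)\bigl(1-\tfrac{L}{\hat{L}}\bigr)^2,
\end{align*}
valid for $t\geq 1$. Taking expectations and iterating yields $\E[C_t]\geq C^*(1-\alpha^t)$ with fixed point $C^*:=\beta/(1-\alpha)$. Substituting $L'=L/(1-(1-L/\hat{L})^2)$, a direct algebraic computation shows that under $\frac{q}{d-1}\leq \frac{L}{\hat{L}}\leq 1$ one has $\frac{C^*}{L'}\geq \frac{c\,q}{Ld}$ for an absolute constant $c$ comfortably larger than $0.1$ (numerically around $1$). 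Consequently the deterministic mean lower bound $\E[S_T]\geq c'a$ holds with some $c'>1$, after incurring an $O(d/q)$ burn-in loss that is absorbed by the hypothesis $T\geq 5d/q$.

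The main obstacle is to upgrade $\E[S_T]\geq c'a$ into the sub-exponential tail bound $P(S_T<a)\leq e^{-a}$. My plan is to exploit the boundedness $C_t\in[0,1]$ together with the deterministic drift $D_{t+1}\geq(1-L/\hat{L})^2 C_t$ to control deviations via an exponential supermartingale: writing $C_t=\E_t[C_t]+\xi_t$ with bounded martingale increments $\xi_t$, concentration of $\sum_t C_t$ transfers to concentration of $\sum_t \E_t[C_t]$ through the linear relation $\E_t[C_t]\geq \beta+\alpha C_{t-1}$ summed across $t$. A naive Azuma--Hoeffding bound yields $O(\sqrt{T})$-scale deviations, which may exceed the surplus $(c'-1)a$ when $a\ll\sqrt{T}$; here the hypothesis $\frac{q}{d}\leq 0.2\frac{L}{\tau}$ keeps $a$ appropriately sized relative to $T$, and a sharper Bernstein-type inequality using the variance control $\V[C_t\mid\mathcal{F}_t]\leq \E_t[C_t]$ (valid since $C_t\in[0,1]$) is expected to close the gap, albeit with the loose constants the authors explicitly acknowledge. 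Once this concentration step is in hand, plugging back into the event decomposition of the first paragraph yields $\E[\delta_T]\leq 2e^{-a}\delta_0$, matching the stated bound.
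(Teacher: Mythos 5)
Your outer decomposition is exactly the paper's: write $k_T=\exp(S_T)$, split on the event $\{S_T\geq a\}$ with $a=0.1\frac{q}{d}\frac{\tau}{L}T$, use $\delta_T\leq\delta_0$ a.s.\ and Thm.~\ref{thm:strong} on the good event, and use the hypothesis $\frac{q}{d}\leq 0.2\frac{L}{\tau}$ (equivalently $a\leq 0.02T$) to absorb the bad event into a second copy of $e^{-a}\delta_0$. Your mean lower bound for $\sum_t\E_t[C_t]$ via the recursion $\E_t[C_t]\geq\beta+\alpha C_{t-1}$ and the fixed point $\beta/(1-\alpha)$ also matches the paper (Lemma~\ref{lem:total_expectation_bound}), including the role of $T\geq 5\frac{d}{q}$ as burn-in.

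The genuine gap is the concentration step, which is the entire content of the theorem (it is what upgrades Thm.~\ref{thm:strong}, which by itself gives no bound on $\E[\delta_T]$). You correctly observe that Azuma--Hoeffding with increments bounded by $1$ is too weak, but the Bernstein/Freedman replacement is only gestured at, and the variance proxy you invoke, $\V[C_t\mid\mathcal{F}_t]\leq\E_t[C_t]$, is far too crude: what makes the argument go through is that the driving noise $\xi_t^2$ lives on the scale $\frac{q}{d-1}$ with variance $O\bigl(\frac{q}{(d-1)^2}\bigr)$ (Lemma~\ref{lem:numeric}), not merely that $C_t\in[0,1]$. The paper's actual mechanism is different and worth noting: it clips each coordinate, $\zeta_{ti}^2:=\min\{\xi_{ti}^2,\frac{1}{d-1}\}$, defines an auxiliary sequence $E_t$ driven by the i.i.d.\ clipped noise with $E_t\leq C_t$ pointwise, checks that clipping retains a constant fraction of the mean ($\E[\zeta_t^2]\geq 0.3\frac{q}{d-1}$, Lemma~\ref{lem:expectation-bound}), and then applies McDiarmid's bounded-differences inequality to $\frac{1}{T}\sum_t\E_t[E_t]$ as a function of the independent variables $\zeta_1^2,\dots,\zeta_{T-1}^2$; the geometric decay of influence $(a')^{s-t}$ gives bounded differences of size $\frac{1}{T}\frac{1}{1-a'}\frac{q}{d-1}$ and hence the tail $e^{-0.02T}$. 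The clipping is precisely what removes the rare-large-increment obstruction you ran into. Your Freedman-type route is plausible in principle, but as written it is a plan with an acknowledged unverified step and unverified constants, not a proof; to complete it you would need the sharp variance bound on $\xi_t^2$, a careful treatment of the inequality (rather than equality) in the recursion (the paper sidesteps this with the dominated sequence $E_t$), and a check that the resulting exponent beats $0.1\frac{q}{d}\frac{\tau}{L}T$ in the worst case $\tau=L$.
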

This result seems somewhat surprising since Theorem~\ref{thm:strong} does not directly give a bound of $\E[\delta_T]$.

\textit{Sketch of the proof}.
The goal is to show that the denominator in the l.h.s of Eq.~\eqref{eq:strong} in Theorem~\ref{thm:strong}, $\exp(-\frac{\tau}{\tilde{L}}\sum_{t=0}^{T-1}\E_t[C_t])$, concentrates very well. Indeed, the probability that $\exp(-\frac{\tau}{\tilde{L}}\sum_{t=0}^{T-1}\E_t[C_t])$ is larger than $\exp(-0.1\frac{q}{d}\frac{\tau}{L}T)$ is very small so that its influence can be bounded by another $\exp(-0.1\frac{q}{d}\frac{\tau}{L}T)\delta_0$, leading to the coefficient $2$ in Eq.~\eqref{eq:prgf-strong}. In our actual analysis we replace $C_t$ that appears above with a lower bound $E_t$. %\junz{concentration is probability bound, here is a deterministic inequality?}\shuyu{The probability bound is used to bound the expectation.}
\begin{remark}
\label{rem:robustness}
As stated in Example~\ref{exp:new-prgf}, using RGF with the optimal learning rate, we have $\E[\delta_T]\leq \frac{2L\frac{d}{q}R^2}{T+1}$ for smooth and convex case, and $\E[\delta_T]\leq \exp\left(-\frac{q}{d}\frac{\tau}{L}T\right)\delta_0$ for smooth and strongly convex case. Therefore, History-PRGF with a suboptimal learning rate $\frac{1}{\hat{L}}$ under the condition $\frac{q}{d-1}\frac{1}{L}\leq \frac{1}{\hat{L}}\leq \frac{1}{L}$ could reach similar convergence rate to RGF with optimal learning rate (up to constant factors), which indicates that History-PRGF is more robust to learning rate than RGF.
\end{remark}
\begin{remark}
\label{rem:loose}
We note that the constants in the bounds are loose and have a large potential to be improved in future work, and empirically the convergence rate of History-PRGF is often not worse than RGF using the optimal learning rate (see Fig.~\ref{fig:history}).
\end{remark}
%\shuyu{end}

As a sidenote, we discuss how to set $q$ in History-PRGF. The iteration complexity given by Theorems~\ref{thm:prgf-smooth} and \ref{thm:prgf-strong} is proportional to $\frac{1}{q}$ if we ignore the constants such as $\frac{32}{q}+2$ in Eq.~\eqref{eq:prgf-smooth} by Remark~\ref{rem:loose}. Meanwhile, we recall that each iteration of PRGF requires $q+1$ queries to the directional derivative oracle, so the total query complexity is roughly proportional to $\frac{q+1}{q}$. Hence, a very small $q$ (e.g. $q=1$) is suboptimal. On the other hand, Theorems~\ref{thm:prgf-smooth} and \ref{thm:prgf-strong} require $\frac{1}{\hat{L}}\in \left[\frac{q}{d-1}\frac{1}{L}, \frac{1}{L}\right]$, so to enable robustness of History-PRGF to a wider range of the choice of learning rate, $q$ should not be too large. In summary, it is desirable to set $q$ to a moderate value.

Note that %Finally, we present some discussion on History-PRGF. Indeed, 
if we adopt line search in Algorithm~\ref{alg:greedy-descent}, then %one can reach convergence guarantee with robustness against a larger range of the choice of learning rate.
one can adapt the learning rate in a huge range and reach the convergence guarantee with the optimal learning rate under weak assumptions. %\shuyu{This sentence is changed. The original sentence is commented (see above).}
% However, it requires at least several more queries per iteration, while History-PRGF only requires one. Of course, when $q$ is large, this practical advantage decreases.
Nevertheless, it is still an intriguing fact that History-PRGF could perform similarly to methods adapting the learning rate, while its mechanism is very different. Meanwhile, History-PRGF is easier to be implemented and parallelized compared with methods like line search, since its implementation is the same as that of the RGF baseline except that it records and uses the historical prior. % When there are cases in which a small $q$ is required or line search does not work (the latter arises in stochastic optimization), History-PRGF could be practically better than line search. % Meanwhile, when considering extension of ARS algorithm in the next section, we will note that History-PARS admits a theoretical convergence guarantee, and currently ARS with line search does not.

\section{Extension of ARS framework and PARS algorithm}
\label{sec:4}
\iffalse
We have shown that prior-exploiting algorithms under the greedy descent framework with various gradient estimators
% This framework is general in the sense that we can choose various gradient estimators and plug them into the framework.
have convergence rate guarantees upper bounded by gradient descent (if we set $v_t=\overline{\nabla f(x_t)}$ and hence $\E_t[C_t]=1$). %despite that we can choose various gradient estimators. 
%This is because greedy descent resembles gradient descent in spirit.
\fi
%Similar as gradient descent is accelerated via Nesterov Accelerated Gradient (NAG)~\cite{nesterov1983method},
To further accelerate greedy descent methods, we extend our analysis to a new variant of Accelerated Random Search (ARS)~\cite{nesterov2017random} by incorporating prior information, under the smooth and convex setting.\footnote{
The procedure of ARS requires knowledge of the strong convexity parameter $\tau$ ($\tau$ can be $0$), but for clarity we only discuss the case $\tau=0$ here (i.e., we do not consider strong convexity), and leave the strongly convex case to Appendix~\ref{sec:C6}.}

%Accelerated Random Search (ARS) has been developed to accelerate greedy descent methods \cite{nesterov2017random}, whose convergence rate is $O(\nicefrac{d^2}{T^2})$. 
\iffalse
However, we note that gradient descent does not give the optimal convergence rate in first-order methods, and thus \citet{nesterov1983method} propose an accelerated method called Nesterov Accelerated Gradient (NAG).\footnote{In the smooth and convex case, NAG accelerates the convergence rate of gradient descent %from $O(\frac{1}{T})$
to $O(\frac{1}{T^2})$.} Meanwhile, its ZO version called Accelerated Random Search (ARS) is also proposed in \cite{nesterov2017random}, whose convergence rate is $O(\frac{d^2}{T^2})$, thereby a better dependency on $T$ than greedy descent methods. In this section, we present a new ARS algorithm by incorporating prior information and analyze its convergence. %theoretical guarantee.
\fi

By delving into the proof in \cite{nesterov2017random}, we present our extension to ARS in Algorithm~\ref{alg:nag-extended}, state its convergence guarantee in Theorem~\ref{thm:nag} and explain its design in the proof sketch in Appendix~\ref{sec:C1}.%\junz{define all the notations involved, e.g., $g_2$...}
% For simplicity, we replace the Gaussian sampling distribution in original ARS with uniform distribution on the sphere, and present the modified baseline in Algorithm~\ref{alg:nag} in the appendix.
% Now we discuss how to integrate prior information into Algorithm~\ref{alg:nag}.
% It turns out to non-trivial, since the original convergence proof requires $\E[g_2(y_t)]=\nabla f(y_t)$. However, most of the gradient estimators with prior information, including PRGF in Example~\ref{exp:new-prgf}, is biased.
% If we simply change $g_1(y_t)$ to the PRGF estimator and change $g_2(y_t)$ to $d$ times of the PRGF estimator in Algorithm~\ref{alg:nag}, we cannot provide a convergence guarantee of the resulting algorithm.
%\junz{compress this paragraph, e.g., remove the sentences with Algorithm3, make it more brief and direct on what we do... (you can put a footnote to indicate the nontrivial-ness) }

\begin{algorithm}[!htbp]
\small
\caption{Extended accelerated random search framework\protect\footnotemark}
\label{alg:nag-extended}
\begin{algorithmic}[1]
\Require $L$-smooth convex function $f$; initialization $x_0$; $\hat{L} \geq L$; iteration number $T$; $\gamma_0>0$.
\Ensure $x_T$ as the approximate minimizer of $f$.
\State $m_0\leftarrow x_0$;
\For {$t = 0$ to $T-1$}
\State Find a $\theta_t>0$ such that $\theta_t\leq \frac{\E_t\left[\left(\nabla f(y_t)^\top v_t\right)^2\right]}{\hat{L}\cdot\E_t[\|g_2(y_t)\|^2]}$ where $y_t$, $v_t$ and $g_2(y_t)$ are defined in following steps: \label{lne:3-alg2}
\State Step 1: $y_t\leftarrow(1-\alpha_t)x_t+\alpha_t m_t$, where $\alpha_t$ is a positive root of the equation $\alpha_t^2=\theta_t (1-\alpha_t)\gamma_t$; $\gamma_{t+1}\leftarrow (1-\alpha_t)\gamma_t$;
\State Step 2: Let $v_t$ be a random vector s.t. $\|v_t\|=1$; $g_1(y_t)\leftarrow \nabla f(y_t)^\top v_t \cdot v_t$;
\State Step 3: Let $g_2(y_t)$ be an unbiased estimator of $\nabla f(y_t)$, i.e., $\E_t[g_2(y_t)]=\nabla f(y_t)$;
\State $x_{t+1}\leftarrow y_t - \frac{1}{\hat{L}} g_1(y_t)$, $m_{t+1}\leftarrow m_t - \frac{\theta_t}{\alpha_t} g_2(y_t)$;
\EndFor
\Return $x_T$.
\end{algorithmic}
\end{algorithm}
\begin{theorem}[Proof in Appendix~\ref{sec:C1}]
\label{thm:nag}
In Algorithm~\ref{alg:nag-extended}, if $\theta_t$ is $\mathcal{F}_{t-1}$-measurable
% (as mentioned in Remark~\ref{rem:dependence-history}, $\mathcal{F}_{t-1}$ is the sub $\sigma$-algebra modelling the historical information;
(see Appendix~\ref{sec:C1} for more explanation), %\shuyu{added this condition. This condition always holds if we could implement Algorithm~\ref{alg:nag-extended} ideally, but we should pay attention to it when approximating $\theta_t$ (requiring estimation of $D_t$) in practice (see Appendix~\ref{sec:C3}).}, 
then we have
\begin{align}
\label{eq:theorem-nag}
    \E\left[(f(x_T)-f(x^*))\left(1+\frac{\sqrt{\gamma_0}}{2}\sum_{t=0}^{T-1}\sqrt{\theta_t}\right)^2\right]\leq f(x_0)-f(x^*)+\frac{\gamma_0}{2}\|x_0-x^*\|^2.
\end{align}
\end{theorem}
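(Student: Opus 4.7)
The plan is to follow the classical Nesterov estimate-sequence template, adapting it to the stochastic subspace-style update where $g_1$ is a one-dimensional directional estimator and $g_2$ is an independent unbiased estimator. I would first derive a one-step descent bound on $f(x_{t+1})$ from $f(y_t)$, then expand the squared distance $\|m_{t+1}-x^*\|^2$, combine them into a Lyapunov-style inequality, and finally unroll using the recursion on $\gamma_t,\alpha_t$.

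\textbf{Step 1 (Descent on the $x$-sequence).} Since $x_{t+1}=y_t-\tfrac{1}{\hat L}g_1(y_t)$ with $g_1(y_t)=\nabla f(y_t)^\top v_t\cdot v_t$, $\|v_t\|=1$, and $\hat L\ge L$, $L$-smoothness yields
\begin{align*}
    f(x_{t+1})\le f(y_t)-\tfrac{1}{2\hat L}(\nabla f(y_t)^\top v_t)^2.
\end{align*}
Taking $\E_t$ and invoking the defining inequality of $\theta_t$ gives the key one-step bound
\begin{align*}
    \E_t[f(x_{t+1})]\le f(y_t)-\tfrac{\theta_t}{2}\E_t[\|g_2(y_t)\|^2].
\end{align*}

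\textbf{Step 2 (Expanding the momentum update).} From $m_{t+1}=m_t-\tfrac{\theta_t}{\alpha_t}g_2(y_t)$ and $\E_t[g_2(y_t)]=\nabla f(y_t)$,
\begin{align*}
    \E_t[\|m_{t+1}-x^*\|^2]=\|m_t-x^*\|^2-\tfrac{2\theta_t}{\alpha_t}\nabla f(y_t)^\top(m_t-x^*)+\tfrac{\theta_t^2}{\alpha_t^2}\E_t[\|g_2(y_t)\|^2].
\end{align*}
Using the relations $\alpha_t^2=\theta_t(1-\alpha_t)\gamma_t$ and $\gamma_{t+1}=(1-\alpha_t)\gamma_t$, one finds $\tfrac{\theta_t}{\alpha_t^2}=\tfrac{1}{\gamma_{t+1}}$, so multiplying through by $\tfrac{\gamma_{t+1}}{2}$ turns the last term into $\tfrac{\theta_t}{2}\E_t[\|g_2(y_t)\|^2]$—which exactly cancels with the slack in Step 1.

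\textbf{Step 3 (Bounding the cross term via convexity).} The crucial algebraic identity $\alpha_t(m_t-x^*)=(y_t-x^*)-(1-\alpha_t)(x_t-x^*)$ (a direct consequence of $y_t=(1-\alpha_t)x_t+\alpha_t m_t$) lets me write $\alpha_t\nabla f(y_t)^\top(m_t-x^*)=\alpha_t\nabla f(y_t)^\top(y_t-x^*)+(1-\alpha_t)\nabla f(y_t)^\top(y_t-x_t)$. Applying convexity twice,
\begin{align*}
    \alpha_t\nabla f(y_t)^\top(m_t-x^*)\ge \alpha_t(f(y_t)-f(x^*))+(1-\alpha_t)(f(y_t)-f(x_t))=f(y_t)-\alpha_t f(x^*)-(1-\alpha_t)f(x_t).
\end{align*}

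\textbf{Step 4 (Assembling the Lyapunov recursion).} Adding the descent bound from Step 1 to the expansion from Step 2, and substituting the cross-term bound from Step 3, everything collapses to
\begin{align*}
    \E_t[f(x_{t+1})-f(x^*)]+\tfrac{\gamma_{t+1}}{2}\E_t[\|m_{t+1}-x^*\|^2]\le (1-\alpha_t)(f(x_t)-f(x^*))+\tfrac{\gamma_{t+1}}{2}\|m_t-x^*\|^2,
\end{align*}
using $\gamma_{t+1}=(1-\alpha_t)\gamma_t$. Introducing $\lambda_t:=\prod_{s=0}^{t-1}(1-\alpha_s)^{-1}$ so that $\lambda_{t+1}(1-\alpha_t)=\lambda_t$ and $\lambda_t\gamma_t=\gamma_0$, multiplication by $\lambda_{t+1}$ gives the supermartingale property $\E_t[V_{t+1}]\le V_t$ for $V_t:=\lambda_t(f(x_t)-f(x^*))+\tfrac{\gamma_0}{2}\|m_t-x^*\|^2$. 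Taking total expectations yields $\E[\lambda_T(f(x_T)-f(x^*))]\le V_0=f(x_0)-f(x^*)+\tfrac{\gamma_0}{2}\|x_0-x^*\|^2$.

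\textbf{Step 5 (Lower-bounding $\sqrt{\lambda_T}$).} The remaining task is to show $\lambda_T\ge(1+\tfrac{\sqrt{\gamma_0}}{2}\sum_{t=0}^{T-1}\sqrt{\theta_t})^2$. From $\lambda_{t+1}=\lambda_t/(1-\alpha_t)$ and the elementary bound $1-\sqrt{1-\alpha_t}\ge\alpha_t/2$, I get $\sqrt{\lambda_{t+1}}-\sqrt{\lambda_t}\ge \tfrac{\alpha_t}{2}\sqrt{\lambda_{t+1}}$. The identity $\alpha_t^2=\theta_t\gamma_{t+1}=\theta_t\gamma_0/\lambda_{t+1}$ then gives $\alpha_t\sqrt{\lambda_{t+1}}=\sqrt{\theta_t\gamma_0}$, so telescoping produces the desired lower bound and the theorem follows.

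The main obstacle is Step 3: finding the right decomposition of $m_t-x^*$ so that exactly the combination $f(y_t)-\alpha_t f(x^*)-(1-\alpha_t)f(x_t)$ emerges, which is what couples with the $\gamma_{t+1}=(1-\alpha_t)\gamma_t$ recursion to produce a clean contraction. The bookkeeping in Step 5 relating $\theta_t$, $\alpha_t$, $\gamma_t$, and $\lambda_t$ is delicate but essentially forced once Steps 1--3 are in place.
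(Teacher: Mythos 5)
Your proposal is correct and follows essentially the same route as the paper's proof: the same smoothness-based descent step combined with the defining inequality for $\theta_t$, the same Lyapunov function $\rho_t = \frac{\gamma_t}{2}\|m_t-x^*\|^2 + f(x_t)-f(x^*)$ (your $V_t$ is just $\lambda_t\rho_t$ with $\lambda_t\gamma_t=\gamma_0$), the same convexity decomposition of the cross term, and the same telescoping bound on $\sqrt{\lambda_T}=1/\sqrt{\prod_t(1-\alpha_t)}$. The only differences are notational (supermartingale phrasing versus the paper's iterated conditional expectations, and $1-\sqrt{1-\alpha}\geq\alpha/2$ in place of the paper's equivalent algebraic manipulation).
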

\footnotetext{Keys in this extension are: 1) We require $\E_t[g_2(y_t)]=\nabla f(y_t)$. Thus, $g_2(y_t)$ could not be the PRGF estimator as it is biased towards the prior; 2) To accelerate convergence, we need to find an appropriate $\theta_t$ since it appears in Eq.~\eqref{eq:theorem-nag}. If we set $\theta_t$ to the value in ARS baseline, then no potential acceleration is guaranteed.}
\begin{remark}
\label{rem:ars-convergence}
If we let $g_1(y_t)$ be the RGF estimator in Example~\ref{exp:generalized-rgf} and let $g_2(y_t)=\nicefrac{d}{q}\cdot g_1(y_t)$, we can show that $\E[g_2(y_t)]=\nabla f(y_t)$ and $\theta_t$ could be chosen as $\frac{q^2}{\hat{L}d^2}$ since $\frac{\E_t\left[\left(\nabla f(y_t)^\top v_t\right)^2\right]}{\hat{L}\cdot\E_t[\|g_2(y_t)\|^2]}=\frac{q^2}{\hat{L}d^2}$. Then roughly, the convergence rate $\propto q$, so the total query complexity is independent of $q$. When $q=1$, ARS baseline is recovered. For convenience we call the algorithm ARS regardless of the value of $q$.
\end{remark}
\begin{remark}
\label{rem:ars-lower}
If we have a uniform constant lower bound $\theta>0$ such that $\forall t, \theta_t\geq \theta$, then we have
\begin{align}
    \E\left[f(x_T) - f(x^*)\right] \leq \left(1+\frac{\sqrt{\gamma_0}}{2}T\sqrt{\theta}\right)^{-2}\left(f(x_0)-f(x^*)+\frac{\gamma_0}{2}\|x_0-x^*\|^2\right).
\end{align}
\end{remark}

Next we present Prior-Guided ARS (PARS) by specifying the choice of $g_1(y_t)$ and $g_2(y_t)$ in Algorithm~\ref{alg:nag-extended} when prior information $p_t\in\mathbb{R}^d$ is available. Since we want to maximize the value of $\theta_t$, regarding $g_1(y_t)$ we want to maximize $\E_t\left[\left(\nabla f(y_t)^\top v_t\right)^2\right]$. By Proposition~\ref{prop:optimality} and Example~\ref{exp:new-prgf}, it is natural to let $g_1(y_t)$ be the PRGF estimator for $\nabla f(y_t)$. Then by Lemma~\ref{lem:expected-drift}, we have $\E_t\left[\left(\nabla f(y_t)^\top v_t\right)^2\right]=\|\nabla f(y_t)\|^2(D_t+\frac{q}{d-1}(1-D_t))$, where $D_t:=\left(\overline{\nabla f(y_t)}^\top p_t\right)^2$. The remaining problem is to construct $g_2(y_t)$, an unbiased estimator of $\nabla f(y_t)$ ($\E_t[g_2(y_t)]=\nabla f(y_t)$), and make $\E_t[\|g_2(y_t)\|^2]$ as small as possible. We leave the construction of $g_2(y_t)$ in Appendix~\ref{sec:C2}. Finally, we calculate the following expression which appears in Line~\ref{lne:3-alg2} of Algorithm~\ref{alg:nag-extended} to complete the description of PARS:
\begin{align}
    \frac{\E_t\left[\left(\nabla f(y_t)^\top v_t\right)^2\right]}{\hat{L}\cdot\E_t[\|g_2(y_t)\|^2]} = \frac{D_t+\frac{q}{d-1}(1-D_t)}{\hat{L}\left(D_t+\frac{d-1}{q}(1-D_t)\right)}.
\end{align}
Since $D_t\geq 0$, the right-hand side is larger than $\nicefrac{q^2}{\hat{L}d^2}$ (by Remark~\ref{rem:ars-convergence}, this value corresponds to the value of $\theta_t$ in ARS), so by Remark~\ref{rem:ars-lower} PARS is guaranteed a convergence rate of ARS.

In implementation of PARS, we note that there remain two problems to solve. The first is that $D_t$ is not accessible through one oracle query, since $D_t=(\overline{\nabla f(y_t)}^\top p_t)^2=\left(\nicefrac{\nabla f(y_t)^\top p_t}{\|\nabla f(y_t)\|}\right)^2$, and $\|\nabla f(y_t)\|$ requires estimation. Fortunately, the queries used to construct $g_1(y_t)$ and $g_2(y_t)$ can also be used to estimate $D_t$. With a moderate value of $q$, we can prove that considering the error brought by estimation of $D_t$, a modified version of PARS is guaranteed to converge with high probability. We leave related discussion to Appendix~\ref{sec:C3}. The second is that Line~\ref{lne:3-alg2} of Algorithm~\ref{alg:nag-extended} has a subtlety that $y_t$ depends on $\theta_t$, so we cannot directly determine an optimal $\theta_t$ satisfying $\theta_t\leq \frac{\E_t\left[\left(\nabla f(y_t)^\top v_t\right)^2\right]}{\hat{L}\cdot\E_t[\|g_2(y_t)\|^2]}$. Theoretically, we can guess a conservative estimate of $\theta_t$ and verify this inequality, but in practice we adopt a more aggressive strategy to find an approximate solution of $\theta_t$. We leave the actual implementation, named PARS-Impl, in Appendix~\ref{sec:C4}.

In PARS, if we adopt the historical prior as in Section~\ref{sec:history-prgf}, i.e., letting $p_t$ be the previous PRGF gradient estimator $\overline{g_1(y_{t-1})}$, then we arrive at a novel algorithm named History-PARS. Here, we note that unlike the case in History-PRGF, it is more difficult to derive the evolution of $\theta_t$ theoretically, so we currently cannot prove theorems corresponding to Theorem~\ref{thm:prgf-smooth}. However, History-PARS can be guaranteed the convergence rate of ARS, which is desirable since if we adopt line search in ARS to reach robustness against learning rate (e.g. in \cite{stich2013optimization}), currently there is no convergence guarantee. We present the actual implementation History-PARS-Impl in Appendix~\ref{sec:C5} and empirically verify that History-PARS-Impl is robust to learning rate in Section~\ref{sec:experiments}. 
% However, we observe that when $\hat{L}\gg L$, $\theta_t$ often evolves to a large value. We note that when $\hat{L}$ is larger, $\theta_t$ would be smaller which slows the convergence. However, the larger $D_t$ would compensate the negative effect brought by a too large $\hat{L}$ to a large extent, making our algorithm more robust to the estimate of $\hat{L}$. \shuyu{Here needs experimental results of the value of $\theta_t$, e.g. starting period larger, stable period smaller, etc.}
%\vspace{-.2cm}
\section{Experiments}
\label{sec:5}
%\vspace{-.2cm}
\label{sec:experiments}
\subsection{Numerical benchmarks}
\label{sec:5.1}
%\vspace{-.2cm}
\label{sec:experiments_test}
% 1. test functions 2. methods 3. adaptive-restart 4. learning rate 5. prior 6. plot what

We first experiment on several closed-form test functions to support our theoretical claims. We leave more details of experimental settings to Appendix~\ref{sec:D1}.

First, we present experimental results when a general useful prior is provided. The prior-guided methods include PRGF, PARS (refers to PARS-Impl) and PARS-Naive (simply replacing the RGF estimator in ARS with the PRGF estimator). We adopt the setting in Section~4.1 of \cite{maheswaranathan2019guided} in which the prior is a biased version of the true gradient. % Specifically, we let $p_t=\overline{\overline{\nabla f(x_t)}+(b+n)}$, where $b$ is a fixed vector and $n$ is uniformly randomly sampled each iteration, $\|b\|=1$ and $\|n\|=1.5$. 
Our test functions are as follows: 1) $f_1$ is the ``worst-case smooth convex function'' used to construct the lower bound complexity of first-order optimization, as in \cite{nesterov2017random};
%$f_1(x)= \frac{1}{2}(x^{(1)})^2+\frac{1}{2}\sum_{i=1}^{d-1}(x^{(i+1)}-x^{(i)})^2+\frac{1}{2}(x^{(d)})^2-x^{(1)}, $
%where $x_0=\mathbf{0}$. 
2) $f_2$ is a simple smooth and strongly convex function with a worst-case initialization:
$f_2(x)=\frac{1}{d}\sum_{i=1}^d \left(i\cdot (x^{(i)})^2\right)$, where $x_0^{(1)}=d, x_0^{(i)}=0 \text{ for }i\geq 2$; and 3) $f_3$ is the Rosenbrock function ($f_8$ in \cite{hansen2009real}) which is a well-known non-convex function used to test the performance of optimization problems.
%$f_3(x)=\sum_{i=1}^{d-1} \left(100\left((x^{(i)})^2-x^{(i+1)}\right)^2+(x^{(i)}-1)^2\right)$, where $x_0=\mathbf{0}$.
For $f_1$ and $f_2$, we set $\hat{L}$ to ground truth value $L$; for $f_3$, we search $\hat{L}$ for best performance for each algorithm. % such that its performance at the end of the plot is the best.
% We note that ARS, PARS-Naive and PARS could depend on a strong convexity parameter (see the appendix) when applied to a strongly convex function. Therefore, for $f_2$ we set this parameter to the ground truth value. For $f_1$ and $f_3$ we set it to zero, i.e. use the algorithm presented in the main article.
We set $d=256$ for all test functions and set $q$ such that each iteration of these algorithms costs $11$ queries\footnote{That is, for prior-guided algorithms we set $q=10$, and for other algorithms (RGF and ARS) we set $q=11$.} to the directional derivative oracle.\footnote{The directional derivative is approximated by finite differences. In PARS, $2$ additional queries to the directional derivative oracle per iteration are required to find $\theta_t$ (see Appendix~\ref{sec:C4}).} We plot the experimental results in Fig.~\ref{fig:biased}, where the horizontal axis represents the number of iterations divided by $\lfloor\nicefrac{d}{11}\rfloor$, and the vertical axis represents $\log_{10}\frac{f(x_{\mathrm{current}})-f(x^*)}{f(x_0)-f(x^*)}$. Methods without using the prior information are shown with dashed lines. We also plot the 95\% confidence interval in the colored region.
\begin{figure*}[t]
%\captionsetup[subfigure]{font=small,labelfont=small}
\centering
\begin{subfigure}{0.25\textwidth}
\centering
\includegraphics[width=0.95\linewidth]{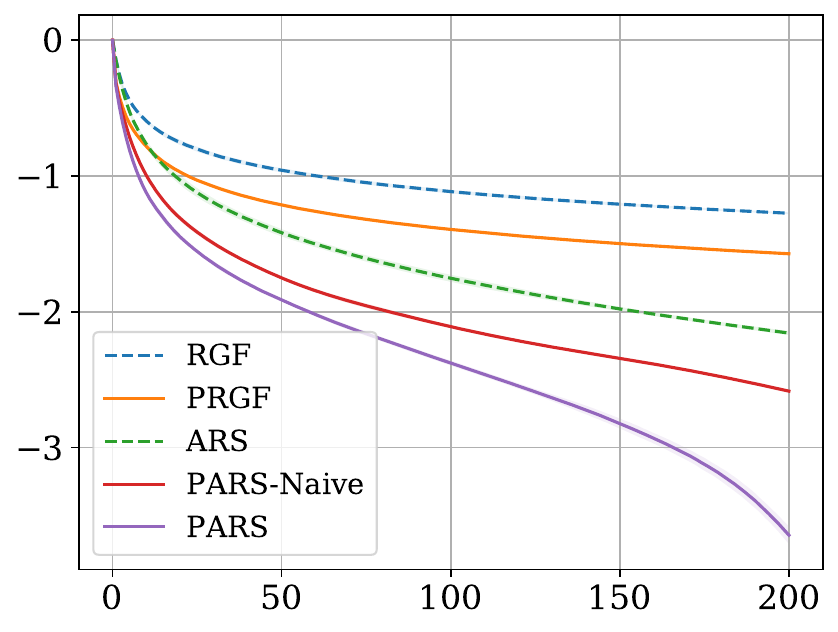}
\caption{$f_1$}
\end{subfigure}
\begin{subfigure}{0.25\textwidth}
\centering
\includegraphics[width=0.95\linewidth]{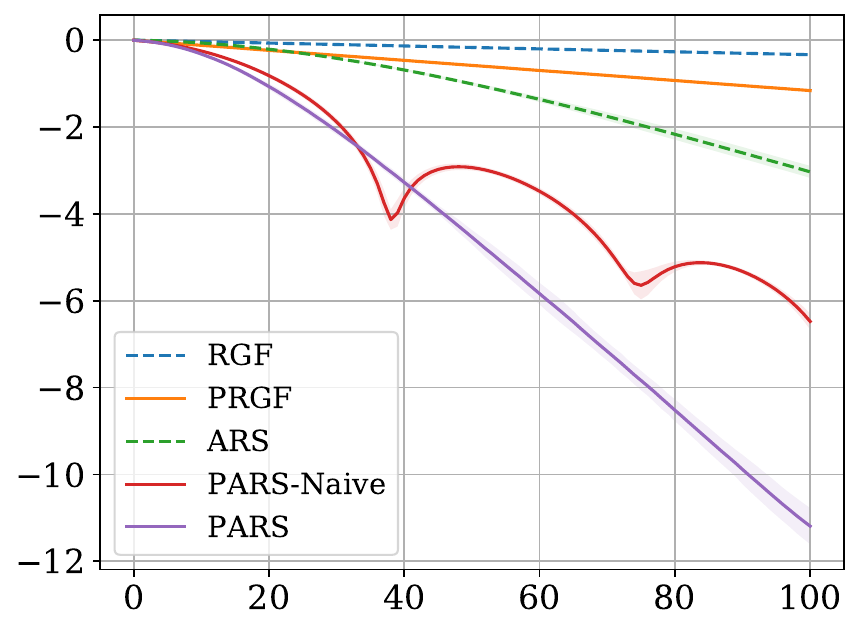}
\caption{$f_2$}
\end{subfigure}
\begin{subfigure}{0.25\textwidth}
\centering
\includegraphics[width=0.95\linewidth]{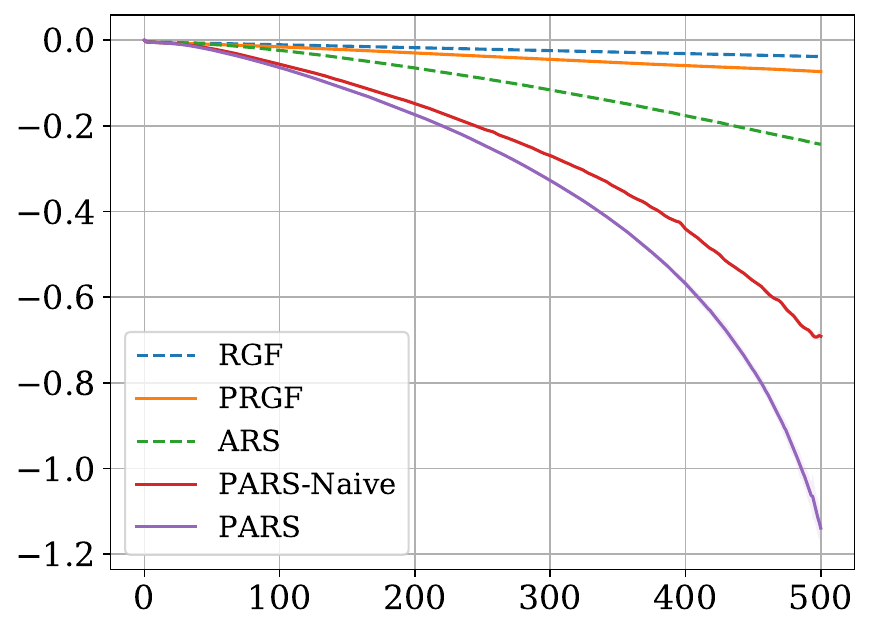}
\caption{$f_3$}
\end{subfigure}
\centering
\small
\caption{Experimental results using biased gradient as the prior (best viewed in color).}
\label{fig:biased}
\end{figure*}
% nag: only change g_1?
The results show that for these functions (which have ill-conditioned Hessians), ARS-based methods perform better than the methods based on greedy descent. Importantly, the utilization of the prior could significantly accelerate convergence for both greedy descent and ARS. We note that the performance of our proposed PARS algorithm is better than PARS-Naive which naively replaces the gradient estimator in the original ARS with the PRGF estimator, demonstrating the value of our algorithm design with convergence analysis.% However, since PARS (PARS-Impl) involves approximation to find $\theta_t$, its performance is not very stable (e.g. see the end of Fig.~\ref{fig:biased}(c)). So we recommend using a larger $q$ to reduce the approximation error. We leave a better implementation of PARS as future work. \shuyu{rerun (b); how to deal with Multi and averaging over norm/alpha2?}

Next, we verify the properties of History-PRGF and History-PARS, i.e., the historical-prior-guided algorithms. In this part we set $d=500$.
% Since when using the historical prior, we aim to build algorithms agnostic to parameters of the objective function, we set the strong convexity parameter in ARS-related methods to $0$ even though $f_2$ is strongly-convex. Correspondingly, we adopt adaptive restart of function scheme\footnote{The process of function value record and adaptive restart is carried out each $5$ iterations.} \cite{o2015adaptive} to reach the ideal performance.
We first verify that they are robust against learning rate on $f_1$ and $f_2$, and plot the results in Fig.~\ref{fig:history}(a)(b).\footnote{In Fig.~\ref{fig:history}, the setting of ARS-based methods are different from that in Fig.~\ref{fig:biased} as explained in Appendix~\ref{sec:D1}, which leads to many differences of the ARS curves between Fig.~\ref{fig:biased} and Fig.~\ref{fig:history}.} In the legend, for example, `RGF' means RGF using the optimal learning rate ($\hat{L}=L$), and `RGF-0.02' means that the learning rate is set to $0.02$ times of the optimal one ($\hat{L}=50L$). We note that for PRGF and PARS, $q=10$, so $\frac{q}{d}=0.02$. From Fig.~\ref{fig:history}(a)(b), we see that: 1) when using the optimal learning rate, the performance of prior-guided algorithms is not worse than that of its corresponding baseline; and 2) the performance of prior-guided algorithms under the sub-optimal learning rate such that $\frac{q}{d}\leq \frac{L}{\hat{L}}\leq 1$ is at least comparable to that of its corresponding baseline with optimal learning rate. However, for baseline algorithms (RGF and ARS), the convergence rate significantly degrades if a smaller learning rate is set. In summary, we verify our claims that History-PRGF and History-PARS are robust to learning rate if $\frac{q}{d}\leq \frac{L}{\hat{L}}\leq 1$. Moreover, we show that they can provide acceleration over baselines with optimal learning rate on functions with varying local smoothness. We design a new test function as follows:
\begin{align}
    f_4(x)=\begin{cases}
        \frac{1}{2}r^2, & r\leq 1\\
        r-\frac{1}{2}, & r > 1
    \end{cases}, r=\sqrt{f_2(x)},\text{ where }x_0^{(1)}=5\sqrt{d}, x_0^{(i)}=0 \text{ for }i\geq 2.
\end{align}
We note that $f_4$ in regions far away from the origin is more smooth than in the region near the origin, and the global smoothness parameter is determined by the worst-case situation (the region near the origin). Therefore, baseline methods using an optimal learning rate could also manifest sub-optimal performance. Fig.~\ref{fig:history}(c) shows the results. We can see that when utilizing the historical prior, the algorithm could show behaviors of adapting to the local smoothness, thus accelerating convergence when the learning rate is locally too conservative.

\begin{figure*}[!htbp]
%\captionsetup[subfigure]{font=small,labelfont=small}
\centering
\begin{subfigure}{0.25\textwidth}
\centering
\includegraphics[width=0.95\linewidth]{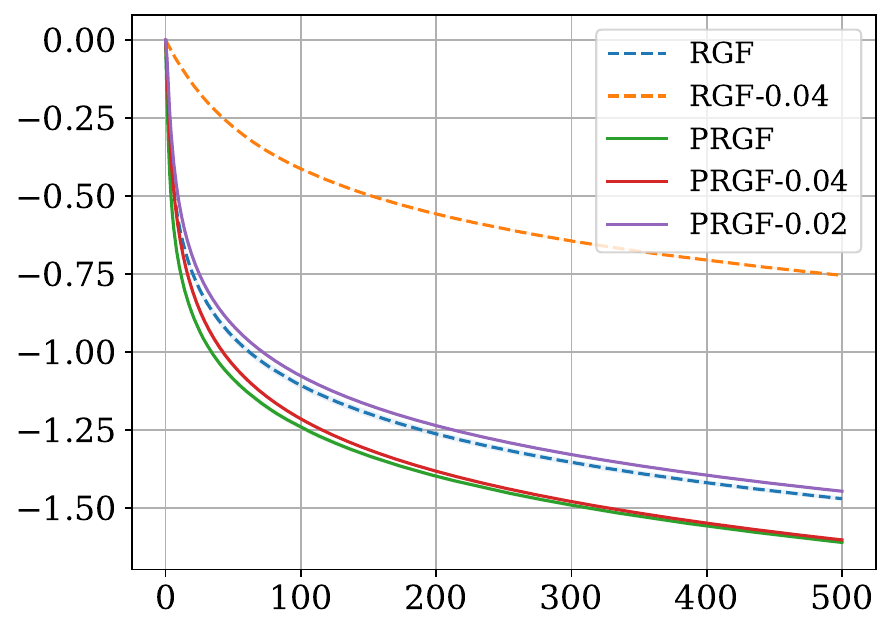}
\end{subfigure}
\begin{subfigure}{0.25\textwidth}
\centering
\includegraphics[width=0.95\linewidth]{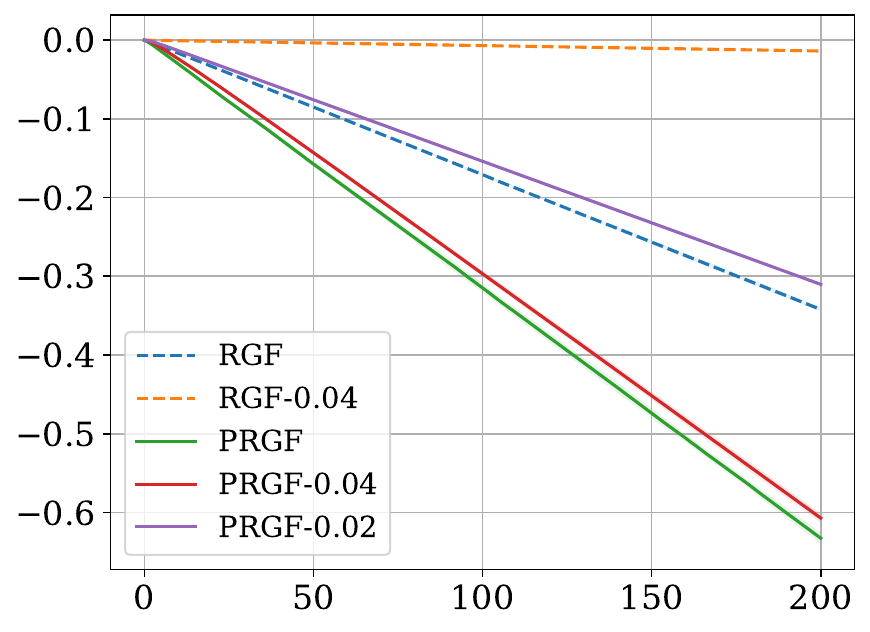}
\end{subfigure}
\begin{subfigure}{0.25\textwidth}
\centering
\includegraphics[width=0.95\linewidth]{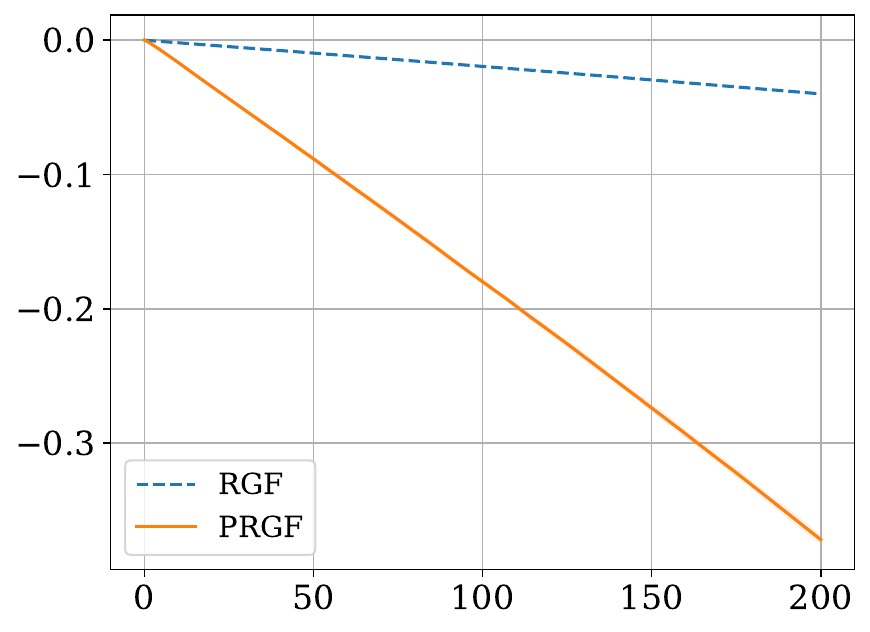}
\end{subfigure}
\centering
\begin{subfigure}{0.25\textwidth}
\centering
\includegraphics[width=0.95\linewidth]{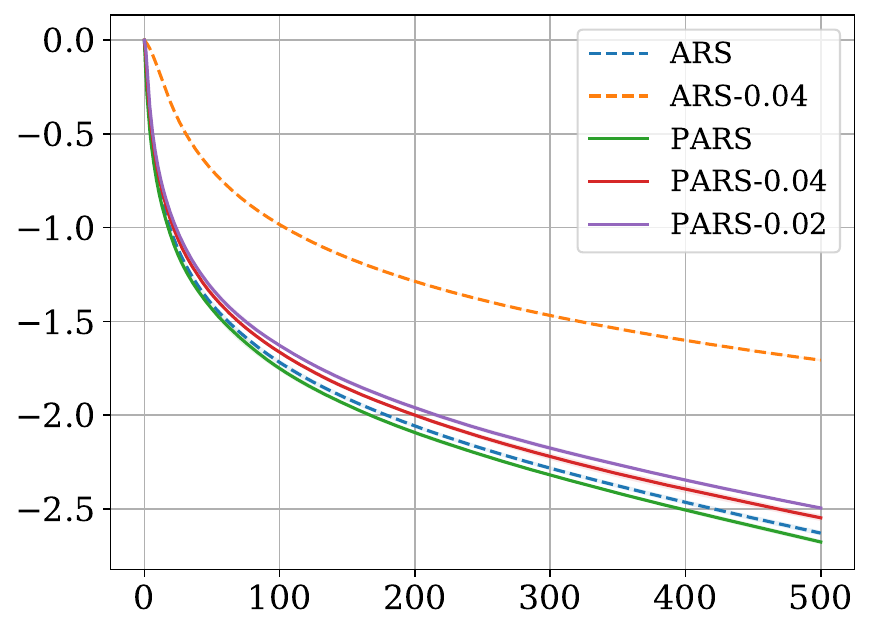}
\caption{$f_1$}
\end{subfigure}
\begin{subfigure}{0.25\textwidth}
\centering
\includegraphics[width=0.95\linewidth]{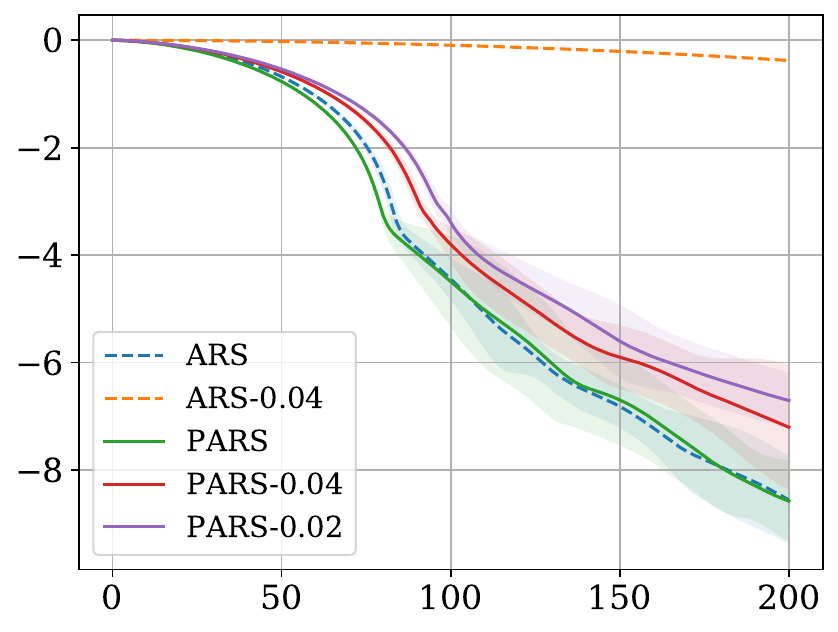}
\caption{$f_2$}
\end{subfigure}
\begin{subfigure}{0.25\textwidth}
\centering
\includegraphics[width=0.95\linewidth]{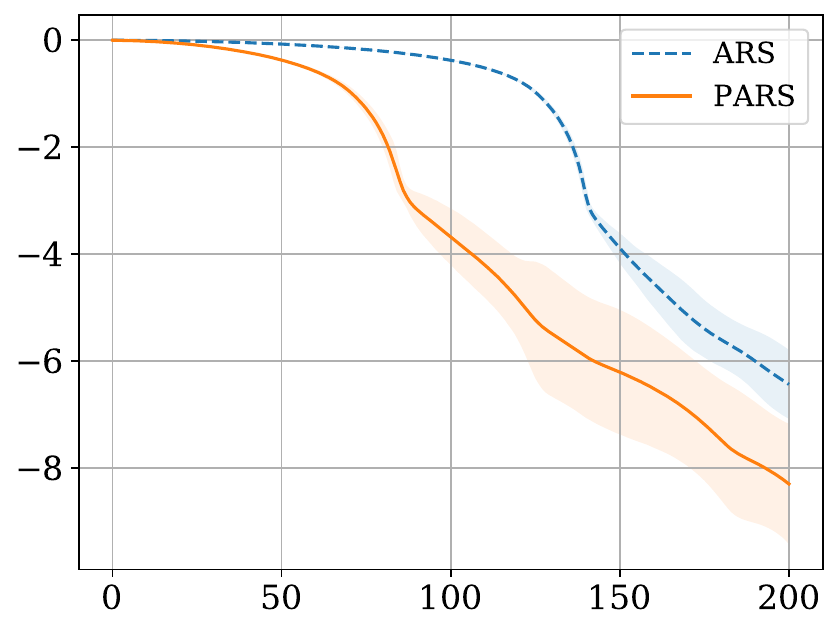}
\caption{$f_4$}
\end{subfigure}
\centering
\small
\caption{Experimental results using the historical prior.}
\label{fig:history}
\vspace{-.2cm}
\end{figure*}
%\vspace{-.2cm}
\subsection{Black-box adversarial attacks}
\label{sec:5.2}
%\shuyu{This section is modified}

In this section, we perform ZO optimization on real-world problems. We conduct score-based black-box targeted adversarial attacks on 500 images from MNIST and leave more details of experimental settings to Appendix~\ref{sec:D2}. In view of optimization, this corresponds to performing constrained maximization over $\{f_i\}_{i=1}^{500}$ respectively, where $f_i$ denotes the loss function to maximize in attacks. For each image $i$, we record the number of queries of $f_i$ used in optimization until the attack succeeds (when using the C\&W loss function \cite{carlini2017towards}, this means $f_i> 0$). For each optimization method, we report the median query number over these images (smaller is better) in Table~\ref{tab:mnist_attack}. The subscript of the method name indicates the learning rate $\nicefrac{1}{\hat{L}}$. For all methods we set $q$ to $20$.
Since \cite{cheng2019improving} has shown that using PRGF estimator with transfer-based prior significantly outperforms using RGF estimator in adversarial attacks, for prior-guided algorithms here we only include the historical prior case.

\begin{table}[htbp]
\vspace{-.2cm}
\centering
\caption{Attack results on MNIST.}
\begin{tabular}{lc|lc}
\toprule
\sc Method    & \sc Median Query & \sc Method    & \sc Median Query \\ \midrule
$\text{RGF}_{0.2}$   & 777   & $\text{ARS}_{0.2}$   & 735        \\% \hline
$\text{RGF}_{0.1}$   & 1596  & $\text{ARS}_{0.1}$   & 1386        \\% \hline
$\text{History-PRGF}_{0.2}$  & 484   & $\text{History-PARS}_{0.2}$   & 484        \\% \hline
$\text{History-PRGF}_{0.1}$  & 572   & $\text{History-PARS}_{0.1}$   & 550        \\% \hline
$\text{History-PRGF}_{0.05}$ & 704   & $\text{History-PARS}_{0.05}$   & 726        \\ \bottomrule
\end{tabular}
\label{tab:mnist_attack}
\end{table}
% ARS Results: ARS_0.2:757, ARS_0.1:1597, PARS_0.2:485, PARS_0.1:551, PARS_0.05:661.

We found that in this task, ARS-based methods perform comparably to RGF-based ones. %\junz{how good are ARS? are they comparable?}
This could be because 1) the numbers of iterations until success of attacks are too small to show the advantage of ARS; 2) currently ARS is not guaranteed to converge faster than RGF under non-convex problems.
We leave more evaluation of ARS-based methods in adversarial attacks and further improvement of their performance as future work.
Experimental results show that History-PRGF is more robust to learning rate than RGF. However, a small learning rate could still lead to its deteriorated performance due to non-smoothness of the objective function. The same statement holds for ARS-based algorithms.
%\vspace{-.2cm}
\section{Conclusion and discussion}
\label{sec:conclusion}
%\vspace{-.2cm}
In this paper, we present a convergence analysis on existing prior-guided ZO optimization algorithms including PRGF and History-PRGF. We further propose a novel prior-guided ARS algorithm with convergence guarantee. Experimental results confirm our theoretical analysis.

Our limitations lie in: 1) we adopt a directional derivative oracle in our analysis, so the error on the convergence bound brought by finite-difference approximation has not been clearly stated; and 2) our implementation of PARS in practice requires an approximate solution of $\theta_t$, and the accuracy and influence of this approximation is not well studied yet. We leave these as future work. Other future work includes extension of the theoretical analysis to non-convex cases, and more empirical studies in various application tasks. %\shuyu{The last sentence is added}

\section*{Acknowledgements}
This work was supported by the National Key Research and Development Program of China (No. 2020AAA0104304), NSFC Projects (Nos. 61620106010, 62061136001, 61621136008, 62076147, U19B2034, U19A2081, U1811461), Beijing NSF Project (No. JQ19016), Beijing Academy of Artificial Intelligence (BAAI), Tsinghua-Huawei Joint Research Program, a grant from Tsinghua Institute for Guo Qiang, Tiangong Institute for Intelligent Computing, and the NVIDIA NVAIL Program with GPU/DGX Acceleration.

% \section*{Broader Impact}

% %Authors are required to include a statement of the broader impact of their work, including its ethical aspects and future societal consequences. 
% %Authors should discuss both positive and negative outcomes, if any. For instance, authors should discuss a) 
% %who may benefit from this research, b) who may be put at disadvantage from this research, c) what are the consequences of failure of the system, and d) whether the task/method leverages
% %biases in the data. If authors believe this is not applicable to them, authors can simply state this.

% %Use unnumbered first level headings for this section, which should go at the end of the paper. {\bf Note that this section does not count towards the eight pages of content that are allowed.}

% As a theoretical research, this work will potentially provide insights for developing better algorithms for ZO optimization, while without explicit negative consequences to our society. Possible concerns include application of these algorithms to black-box adversarial attacks.

%\clearpage
{\small
\bibliographystyle{plainnat}
\bibliography{refs_all}}

\clearpage
\appendix

\section{Supplemental materials for Section~\ref{sec:2}}
\label{sec:A}

\subsection{Definitions in convex optimization}
\label{sec:A1}
\begin{definition}[Convexity]
A differentiable function $f$ is convex if for every $x, y\in \mathbb{R}^d$,
\begin{align*}
    f(y)\geq f(x)+\nabla f(x)^\top (y - x).
\end{align*}
\end{definition}
\begin{definition}[Smoothness]
A differentiable function $f$ is $L$-smooth for some positive constant $L$ if its gradient is $L$-Lipschitz; namely, for every $x, y\in \mathbb{R}^d$, we have
    \begin{align*}
        \| \nabla f(x) - \nabla f(y) \| \leq L \| x - y \|.
    \end{align*}
\end{definition}
\begin{corollary}
If $f$ is $L$-smooth, then for every $x, y \in \mathbb R^d$,
\begin{align}
\label{eq:cor-smooth}
|f(y)-f(x)-\nabla f(x)^\top (y-x)|\leq \frac{1}{2}L\|y-x\|^2.
\end{align}
\end{corollary}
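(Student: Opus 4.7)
The plan is to prove this by reducing the smoothness inequality to a one-dimensional computation along the segment from $x$ to $y$. I would first invoke the fundamental theorem of calculus on the scalar function $\phi(t) := f(x + t(y-x))$ for $t \in [0,1]$, which gives
\[
f(y) - f(x) = \int_0^1 \nabla f(x + t(y-x))^\top (y-x)\, dt.
\]
Subtracting the linear term $\nabla f(x)^\top (y-x) = \int_0^1 \nabla f(x)^\top (y-x)\, dt$ from both sides then yields a clean integral representation of the remainder:
\[
f(y) - f(x) - \nabla f(x)^\top (y-x) = \int_0^1 \bigl[\nabla f(x + t(y-x)) - \nabla f(x)\bigr]^\top (y-x)\, dt.
\]

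Next, I would apply the triangle inequality (for integrals) together with Cauchy--Schwarz inside the integrand to bound the left-hand side by $\int_0^1 \|\nabla f(x+t(y-x)) - \nabla f(x)\| \cdot \|y-x\|\, dt$. At this point the $L$-smoothness assumption (i.e. $L$-Lipschitzness of $\nabla f$) kicks in, giving $\|\nabla f(x+t(y-x)) - \nabla f(x)\| \leq L\|t(y-x)\| = Lt\|y-x\|$. Substituting this bound and evaluating $\int_0^1 t\, dt = \tfrac{1}{2}$ produces the desired $\tfrac{1}{2}L\|y-x\|^2$ on the right-hand side.

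This is a textbook argument with no substantive obstacle; the only small technical point is verifying that $\phi$ is continuously differentiable so the fundamental theorem of calculus applies, which follows immediately from the assumed differentiability of $f$ (and, more than one needs here, from the continuity of $\nabla f$ implied by its Lipschitz property). Accordingly, I would present this as a brief, self-contained derivation in the appendix, noting the two absolute-value directions simultaneously via $|\cdot|$ on both sides of the integral identity.
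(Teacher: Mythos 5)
Your argument is correct and complete: the integral representation of the remainder via the fundamental theorem of calculus, followed by Cauchy--Schwarz and the Lipschitz bound on $\nabla f$, with $\int_0^1 t\,dt = \tfrac12$, is exactly the standard derivation. The paper does not prove this corollary itself but simply cites Lemma~3.4 of \cite{bubeck2014convex}, whose proof is precisely the one you give, so your proposal matches the intended argument.
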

\begin{proof}
See Lemma~3.4 in \cite{bubeck2014convex}.
\end{proof}
\begin{definition}[Strong convexity] 
A differentiable function $f$ is $\tau$-strongly convex for some positive constant $\tau$, if for all $x, y\in \mathbb{R}^d$,
\begin{align*}
    f(y)\geq f(x)+\nabla f(x)^\top (y-x)+\frac{\tau}{2}\|y-x\|^2.
\end{align*}
\end{definition}

\subsection{Proof of Proposition~\ref{prop:finite-error}}
\label{sec:A2}
\begin{proposition_main}
If $f$ is $L$-smooth, then for any $(x,v)$ with $\|v\|=1$, $|g_\mu(v;x)-\nabla f(x)^\top v|\leq \frac{1}{2}L\mu$.
\end{proposition_main}
\begin{proof}
In Eq.~\eqref{eq:cor-smooth}, setting $y=x+\mu v$ and dividing both sides by $\mu$, we complete the proof.
\end{proof}

\section{Supplemental materials for Section~\ref{sec:3}}
\label{sec:B}
\subsection{Proofs of Lemma~\ref{lem:single-progress}, Theorem~\ref{thm:smooth} and Theorem~\ref{thm:strong}}
\label{sec:B1}
\begin{lemma_main}
\label{lem_main:single-progress}
    Let $C_t:=\left(\overline{\nabla f(x_t)}^\top v_t\right)^2$ and  $L':=\frac{L}{1-\left (1-\frac{L}{\hat{L}}\right)^2}$, then in Algorithm~\ref{alg:greedy-descent}, we have
    \begin{align}
    \label{eq_main:single-progress}
        \E_t[\delta_{t+1}] \leq \delta_t -\frac{\E_t[C_t]}{2L'}\|\nabla f(x_t)\|^2.
    \end{align}
\end{lemma_main}
\begin{proof}
We have
\begin{align}
    \delta_{t+1} - \delta_t &= f(x_{t+1}) - f(x_t) \\
    &= f\left(x_t - \frac{1}{\hat{L}}\nabla f(x_t)^\top v_t \cdot v_t\right) - f(x_t) \\
    &\leq -\frac{1}{\hat{L}} \left(\nabla f(x_t)^\top v_t\right)^2 + \frac{1}{2}L \cdot \left(\frac{1}{\hat{L}} \nabla f(x_t)^\top v_t\right)^2 \\
    &= -\frac{1}{2L'}\left(\nabla f(x_t)^\top v_t\right)^2.
\end{align}
Hence,
\begin{align}
\label{eq:value_diff}
    \E_t[\delta_{t+1}]-\delta_t \leq -\frac{1}{2L'}\E_t\left[\left(\nabla f(x_t)^\top v_t\right)^2\right] = -\frac{\E_t[C_t]}{2L'}\|\nabla f(x_t)\|^2,
\end{align}
where the last equality holds because $x_t$ is $\mathcal{F}_{t-1}$-measurable since we require $\mathcal{F}_{t-1}$ to include all the randomness before iteration $t$ in Remark~\ref{rem:dependence-history} (so $\|\nabla f(x_t)\|^2$ is also $\mathcal{F}_{t-1}$-measurable).
\end{proof}
\begin{remark}
The proof actually does not require $f$ to be convex. It only requires $f$ to be $L$-smooth.
\end{remark}
\begin{remark}
\label{rem:decreasing}
From the proof we see that $\delta_{t+1}-\delta_t\leq 0$, so $f(x_{t+1})\leq f(x_t)$. Hence, the sequence $\{f(x_t)\}_{t\geq 0}$ is non-increasing in Algorithm~\ref{alg:greedy-descent}.
\end{remark}
\begin{theorem_main}[Algorithm~\ref{alg:greedy-descent}, smooth and convex]
\label{thm_main:smooth}
Let $R:=\max_{x: f(x)\leq f(x_0)}\|x-x^*\|$ and suppose $R<\infty$. Then, %With the notations introduced before, 
in Algorithm~\ref{alg:greedy-descent}, we have
\begin{align}
\label{eq_main:smooth}
    \E[\delta_T]\leq \frac{2L' R^2\sum_{t=0}^{T-1} \E\left[\frac{1}{\E_t[C_t]}\right]}{T(T+1)}.
\end{align}
% \guoqiang{the meaning of $E$, and $E_t$ for the first time}
\end{theorem_main}
\begin{proof}
Since $f$ is convex, we have
\begin{align}
    \delta_t = f(x_t)-f(x^*)\leq \nabla f(x_t)^\top (x_t - x^*) \leq \|\nabla f(x_t)\|\cdot\|x_t-x^*\| \leq R\|\nabla f(x_t)\|,
\end{align}
%\shuyu{Then we have $\frac{1}{\E_t[\delta_{t+1}]}-\frac{1}{\delta_t} \geq \frac{E_t[C_t]}{2L' R^2}$, can we use it to derive something?}
where the last inequality follows from the definition of $R$ and the fact that $f(x_t)\leq f(x_0)$ (since $\delta_{t+1}\leq \delta_t$ for all $t$). The following proof is adapted from the proof of Theorem 3.2 in~\cite{bansal2019potential}. Define $\Phi_t:=t(t+1)\delta_t$. By Lemma~\ref{lem:single-progress}, we have 
% \guoqiang{checked!}
\begin{align}
    \E_t[\Phi_{t+1}] - \Phi_t &= (t+1)(t+2)\E_t[\delta_{t+1}] - t(t+1)\delta_t \\
    &= (t+1)(t+2)(\E_t[\delta_{t+1}] - \delta_t) + 2(t+1)\delta_t \\
    &\leq -(t+1)(t+2)\frac{\E_t[C_t]}{2L'}\|\nabla f(x_t)\|^2 + 2(t+1)R\|\nabla f(x_t)\|  \\
    &\leq \frac{(2(t+1)R)^2}{4(t+1)(t+2)\frac{\E_t[C_t]}{2L'}}\label{app:proof-lem-quad-sup} \\
    &=\frac{2L'(t+1)R^2}{(t+2)\E_t[C_t]} \\
    &\leq\frac{2L' R^2}{\E_t[C_t]},
\end{align}
where Eq.~\eqref{app:proof-lem-quad-sup} follows from the fact that $-at^2 + bt \leq \frac{b^2}{4a}$ for $a > 0$.
Hence
\begin{align}
    \E[\Phi_{t+1}] - \E[\Phi_t] = \E[\E_t[\Phi_{t+1}] - \Phi_t] \leq 2L' R^2\E\left[\frac{1}{\E_t[C_t]}\right].
\end{align}
Since $\Phi_0=0$, we have $\E[\Phi_T]\leq 2L' R^2 \sum_{t=0}^{T-1} \E\left[\frac{1}{\E_t[C_t]}\right]$. Therefore,
\begin{align}
    \E[\delta_T] = \frac{\E[\Phi_T]}{T(T+1)} \leq \frac{2L' R^2 \sum_{t=0}^{T-1} \E\left[\frac{1}{\E_t[C_t]}\right]}{T(T+1)}.
\end{align}
\end{proof}
\begin{remark}
\label{rem:initial-random}
By inspecting the proof, we note that Theorem~\ref{thm_main:smooth} still holds if we replace the fixed initialization $x_0$ in Algorithm~\ref{alg:greedy-descent} with a random initialization $x_0'$ for which $f(x_0')\leq f(x_0)$ always holds.
% Moreover, the distribution of $v_0$ could also depend on some initial randomness, i.e. depending on the ``history''. In other words, if we let $C_0'=\left(\overline{\nabla f(x_0')}^\top v_0\right)^2$, then both $x_0'$ and $\E_0[C_0']$ could be random variables depending on the history.
We formally summarize this in the following proposition. This proposition will be useful in the proof of Theorem~\ref{thm:prgf-smooth}.
\begin{proposition}
\label{prop:smooth-random}
Let $x_{\mathrm{fix}}$ be a fixed vector, $R:=\max_{x: f(x)\leq f(x_{\mathrm{fix}})}\|x-x^*\|$ and suppose $R<\infty$. Then, in Algorithm~\ref{alg:greedy-descent}, using a random initialization $x_0$, if $f(x_0)\leq f(x_{\mathrm{fix}})$ always hold, we have
\begin{align}
\label{eq:smooth-random}
    \E[\delta_T]\leq \frac{2L' R^2\sum_{t=0}^{T-1} \E\left[\frac{1}{\E_t[C_t]}\right]}{T(T+1)}.
\end{align}
% The statement is still true if the distribution of $v_0$ depends on some initial randomness.
\end{proposition}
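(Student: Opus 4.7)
The plan is to adapt the proof of Theorem~\ref{thm:smooth} with only two small changes accommodating the random initialization and the possibly history-dependent law of $v_0$. First, I will promote the envelope bound $\|x_t - x^*\| \leq R$ to an almost-sure statement; second, I will verify that Lemma~\ref{lem:single-progress} carries over once the conditional expectation $\E_t[\cdot]$ is interpreted so as to absorb the initial randomness.

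For the first step, the key is a sample-path descent property of Algorithm~\ref{alg:greedy-descent}. Substituting $\eta = \nabla f(x_t)^\top v_t / \hat{L}$ into the smoothness inequality \eqref{eq:upper-bound} and using $\hat{L} \geq L$ gives
\begin{align*}
f(x_{t+1}) \leq f(x_t) - \tfrac{1}{2L'}(\nabla f(x_t)^\top v_t)^2 \leq f(x_t)
\end{align*}
along every trajectory. Since $f(x_0) \leq f(x_{\mathrm{fix}})$ almost surely by hypothesis, induction will yield $f(x_t) \leq f(x_{\mathrm{fix}})$ almost surely, so by the definition of $R$ we obtain $\|x_t - x^*\| \leq R$ almost surely and, by convexity, $\delta_t \leq R \|\nabla f(x_t)\|$ almost surely.

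With this pathwise envelope in hand, the potential function argument of Theorem~\ref{thm:smooth} applies essentially verbatim. Setting $\Phi_t := t(t+1)\delta_t$, one has $\Phi_0 = 0$ deterministically regardless of the randomness in $x_0$. Combining the almost-sure bound on $\delta_t$ with Lemma~\ref{lem:single-progress}---whose conditional expectation $\E_0[\cdot]$ is now interpreted to incorporate the initial randomness, making $\E_0[C_0]$ a genuine random variable---will reproduce the single-step recursion $\E_t[\Phi_{t+1}] - \Phi_t \leq 2L'R^2 / \E_t[C_t]$. Taking total expectations, telescoping over $t = 0, \ldots, T-1$, and using $\E[\Phi_0] = 0$ then gives $\E[\Phi_T] \leq 2L'R^2 \sum_{t=0}^{T-1} \E[1/\E_t[C_t]]$, and dividing by $T(T+1)$ yields \eqref{eq:smooth-random}.

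I expect no substantive obstacle: the pathwise descent check is routine thanks to $\hat{L} \geq L$, and no new probabilistic machinery is required. The modifications amount to bookkeeping about where the initial randomness sits in the filtration, together with observing that the definition of $R$ through $f(x_{\mathrm{fix}})$ rather than $f(x_0)$ is precisely what allows the envelope bound to hold uniformly over the randomness of $x_0$.
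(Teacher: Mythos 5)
Your proposal is correct and follows essentially the same route as the paper: the paper's proof likewise notes that the pathwise descent of Algorithm~\ref{alg:greedy-descent} gives $f(x_t)\leq f(x_0)\leq f(x_{\mathrm{fix}})$, hence $\|x_t-x^*\|\leq R$, and then reruns the potential-function argument of Theorem~\ref{thm:smooth} unchanged (with $\Phi_0=0$ regardless of the initial randomness). Your write-up merely makes explicit the bookkeeping about the filtration and the almost-sure descent step that the paper leaves implicit.
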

\begin{proof}
By Remark~\ref{rem:decreasing}, $f(x_t)\leq f(x_0)$. We note that $\|x_t-x^*\|\leq R$ since $f(x_t)\leq f(x_0)\leq f(x_{\mathrm{fix}})$. The remaining proof is the same as the proof of Theorem~\ref{thm_main:smooth}.
\end{proof}
\end{remark}
Next we state the proof regarding the convergence guarantee of Algorithm~\ref{alg:greedy-descent} under smooth and strongly convex case.
\begin{theorem_main}[Algorithm~\ref{alg:greedy-descent}, smooth and strongly convex]
\label{thm_main:strong}
%With the notations introduced before, 
In Algorithm~\ref{alg:greedy-descent}, if we further assume that $f$ is $\tau$-strongly convex, then we have
\begin{align}
\label{eq_main:strong}
    \E\left[\frac{\delta_T}{\exp\left(-\frac{\tau}{L'}\sum_{t=0}^{T-1}\E_t[C_t]\right)}\right] \leq \delta_0.
\end{align}
\end{theorem_main}
\begin{proof}
Since $f$ is $\tau$-strongly convex, we have
\begin{align}
    \delta_t &= f(x_t)-f(x^*)\leq \nabla f(x_t)^\top (x_t - x^*)-\frac{\tau}{2}\|x_t-x^*\|^2 \\
    &\leq \|\nabla f(x_t)\|\cdot\|x_t-x^*\|-\frac{\tau}{2}\|x_t-x^*\|^2 \\
    &\leq \frac{\|\nabla f(x_t)\|^2}{2\tau}.
\end{align}
Therefore we have
\begin{align}
    \label{eq:bound_norm_strong}
    \|\nabla f(x_t)\|^2 \geq 2\tau\delta_t.
\end{align}
By Lemma~\ref{lem:single-progress} and Eq.~\eqref{eq:bound_norm_strong} we have
\begin{align}
    \E_t[\delta_{t+1}] \leq \delta_t-\frac{\E_t[C_t]\tau}{L'}\delta_t = \left(1-\frac{\tau}{L'}\E_t[C_t]\right)\delta_t.
\end{align}
Let $\alpha_t:=\frac{\tau}{L'}\E_t[C_t]$, then $\E_t[\delta_{t+1}]\leq(1-\alpha_t)\delta_t$. We have 
% \guoqiang{checked!}
\begin{align*}
    \delta_0&=\E[\delta_0]\geq \E\left[\frac{1}{1-\alpha_0}\E_0[\delta_1]\right]=\E\left[\E_0\left[\frac{\delta_1}{1-\alpha_0}\right]\right]=\E\left[\frac{\delta_1}{1-\alpha_0}\right] \\
    &\geq \E\left[\frac{\E_1[\delta_2]}{(1-\alpha_0)(1-\alpha_1)}\right]=\E\left[\E_1\left[\frac{\delta_2}{(1-\alpha_0)(1-\alpha_1)}\right]\right]=\E\left[\frac{\delta_2}{(1-\alpha_0)(1-\alpha_1)}\right] \\
    &\geq \ldots \\
    &\geq \E\left[\frac{\delta_T}{\prod_{t=0}^{T-1}(1-\alpha_t)}\right].
\end{align*}

Since $\exp(-x)\geq 1-x \geq 0$ when $0\leq x \leq 1$, the proof is completed.
\end{proof}
\begin{remark}
Indeed, the proof does not require $f$ to be strongly convex or convex. It only requires the Polyak-\L{}ojasiewicz condition (Eq.~\eqref{eq:bound_norm_strong}) which is weaker than strong convexity \cite{polyak1963gradient, lojasiewicz1963topological, karimi2016linear}.
\end{remark}
\subsection{Proof of Proposition~\ref{prop:optimality}}
\label{sec:B2}
We note that in Algorithm~\ref{alg:greedy-descent}, $\|v_t\|=1$. If $v_t\in A$, then we have the following lemma by Proposition~1 in \cite{meier2019improving}.
\begin{lemma}
\label{lem:subspace}
Let $u_1,u_2,\ldots,u_q$ be $q$ fixed vectors in $\mathbb{R}^d$ and $A:=\operatorname{span}\{u_1,u_2,\ldots,u_q\}$ be the subspace spanned by $u_1,u_2,\ldots,u_q$. Let $\nabla f(x_t)_A$ denote the projection of $\nabla f(x_t)$ onto $A$, then $\overline{\nabla f(x_t)_A}=\operatorname{argmax}_{v_t\in A, \|v_t\|=1}C_t$.
\end{lemma}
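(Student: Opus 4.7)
The plan is to reduce the constrained maximization to an application of the Cauchy–Schwarz inequality after decomposing $\nabla f(x_t)$ into its components inside and orthogonal to the subspace $A$. Concretely, since $C_t = (\overline{\nabla f(x_t)}^\top v_t)^2$ and $\|\nabla f(x_t)\|$ does not depend on $v_t$, maximizing $C_t$ over $\{v_t \in A : \|v_t\|=1\}$ is equivalent to maximizing $(\nabla f(x_t)^\top v_t)^2$ over the same set.

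First I would write the orthogonal decomposition $\nabla f(x_t) = \nabla f(x_t)_A + \nabla f(x_t)_{A^\perp}$, where $\nabla f(x_t)_{A^\perp}$ denotes the component orthogonal to $A$. For any unit vector $v_t \in A$, orthogonality gives $\nabla f(x_t)_{A^\perp}^\top v_t = 0$, hence $\nabla f(x_t)^\top v_t = \nabla f(x_t)_A^\top v_t$. Cauchy–Schwarz then yields
\begin{align*}
    (\nabla f(x_t)^\top v_t)^2 = (\nabla f(x_t)_A^\top v_t)^2 \leq \|\nabla f(x_t)_A\|^2\,\|v_t\|^2 = \|\nabla f(x_t)_A\|^2,
\end{align*}
with equality precisely when $v_t$ is (up to sign) the unit vector aligned with $\nabla f(x_t)_A$, i.e.\ $v_t = \overline{\nabla f(x_t)_A}$. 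Since squaring makes the sign irrelevant, this choice attains the maximum, proving the claim.

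The only subtlety — which I expect to be the sole ``obstacle'' worth mentioning — is the degenerate case $\nabla f(x_t)_A = 0$, in which $\overline{\nabla f(x_t)_A}$ is undefined; but in that case $C_t = 0$ for every $v_t \in A$, so the argmax is any unit vector of $A$ and the statement holds vacuously (the upper bound $\|\nabla f(x_t)_A\|^2 = 0$ is attained trivially). Apart from this edge case the proof is a one-line Cauchy–Schwarz argument, so the main work in the write-up is simply introducing the orthogonal decomposition and citing Proposition~1 of \cite{meier2019improving} for the same fact.
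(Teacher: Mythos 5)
Your proof is correct. The paper itself gives no argument for this lemma—it simply cites Proposition~1 of \cite{meier2019improving}—so your orthogonal-decomposition-plus-Cauchy--Schwarz argument supplies the standard, self-contained justification that the paper defers to the reference: restricting to $v_t\in A$ kills the component of $\nabla f(x_t)$ orthogonal to $A$, and Cauchy--Schwarz identifies $\overline{\nabla f(x_t)_A}$ as the (sign-irrelevant) maximizer. Your handling of the degenerate case $\nabla f(x_t)_A=0$ is a reasonable extra precaution that neither the paper nor the cited proposition dwells on.
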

\iffalse
Then we have the following corollary which is straightforward.
\begin{corollary}
\label{cor:subspace}
If we allow $v_t$ be a random vector s.t. $v_t\in A$ and $\|v_t\|=1$, then the optimal $v_t$ maximizing $\E_{v_t}[C_t]$ is still given by $v_t=\overline{\nabla f(x_t)_A}$.
\end{corollary}
\fi
We further note that $\nabla f(x_t)_A$ could be calculated with the values of $\{\nabla f(x_t)^\top u_i\}_{i=1}^q$:
\begin{lemma}
\label{lem:computable}
Let $A:=\operatorname{span}\{u_1,u_2,\ldots,u_q\}$ be the subspace spanned by $u_1,u_2,\ldots,u_q$, and suppose $\{u_1,u_2,\ldots,u_q\}$ is linearly independent (if they are not, then we choose a subset of these vectors which is linearly independent). Then $\nabla f(x_t)_A=\sum_{i=1}^q a_i u_i$, where $a:=(a_1,a_2,\cdots,a_q)^\top$ is given by $a=G^{-1}b$, where $G$ is a $q\times q$ matrix in which $G_{ij}=u_i^\top u_j$, $b$ is a $q$-dimensional vector in which $b_i=\nabla f(x_t)^\top u_i$.
\end{lemma}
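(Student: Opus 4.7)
The plan is to characterize $\nabla f(x_t)_A$ by its defining orthogonality property and then convert this into a linear system in the coefficients $a_i$. Specifically, the projection $\nabla f(x_t)_A$ is uniquely determined by the two conditions (i) $\nabla f(x_t)_A \in A$, and (ii) $\nabla f(x_t) - \nabla f(x_t)_A \perp A$, i.e., its residual is orthogonal to every basis vector $u_j$. Condition (i) lets me write $\nabla f(x_t)_A = \sum_{i=1}^q a_i u_i$ for some coefficients $a_i$ that we want to identify.

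Next I would apply condition (ii) by taking the inner product of $\nabla f(x_t) - \sum_i a_i u_i$ with each $u_j$ and setting it to zero. This produces the system
\begin{equation*}
\nabla f(x_t)^\top u_j \;=\; \sum_{i=1}^q a_i\, u_i^\top u_j, \qquad j=1,\ldots,q,
\end{equation*}
which in matrix form reads $b = G a$ with $G_{ij}=u_i^\top u_j$ and $b_j = \nabla f(x_t)^\top u_j$ (using the symmetry $G_{ij}=G_{ji}$). Solving for $a$ gives the claimed formula $a = G^{-1} b$, once we know $G$ is invertible.

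The one place requiring a separate argument is invertibility of the Gram matrix $G$. The step I would verify is that $G$ is positive definite whenever $\{u_1,\ldots,u_q\}$ is linearly independent: for any $z\in\mathbb{R}^q$, $z^\top G z = \|\sum_i z_i u_i\|^2 \geq 0$, with equality iff $\sum_i z_i u_i = 0$, which by linear independence forces $z=0$. Hence $G \succ 0$, so $G^{-1}$ exists. (The lemma's reduction to a linearly independent subset in case of dependence is just to make this step available; the projection itself is of course independent of the choice of basis for $A$.)

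I don't anticipate any real obstacle here: the argument is essentially the derivation of the normal equations for least-squares projection onto a finite-dimensional subspace, and both the orthogonality characterization of the projection and the positive definiteness of the Gram matrix are standard. The only thing worth being careful about is stating the orthogonality condition in a way that directly matches the matrix $G$ as defined (symmetric, with the prescribed entries), which is immediate from $u_i^\top u_j = u_j^\top u_i$.
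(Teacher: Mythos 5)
Your proposal is correct and follows essentially the same route as the paper's proof: write $\nabla f(x_t)_A=\sum_i a_i u_i$, use the orthogonality of the residual to each $u_j$ (equivalently, $\nabla f(x_t)_A^\top u_j=\nabla f(x_t)^\top u_j$) to obtain the normal equations $Ga=b$, and invoke invertibility of the Gram matrix of a linearly independent set. Your explicit positive-definiteness argument for $G$ is just a slightly more detailed version of the paper's one-line justification.
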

\begin{proof}
Since $\nabla f(x_t)_A\in \operatorname{span}\{u_1,u_2,\ldots,u_q\}$, there exists $a\in \mathbb{R}^q$ such that $\nabla f(x_t)_A=\sum_{i=1}^q a_i u_i$. Since $\nabla f(x_t)_A$ is the projection of $\nabla f(x_t)$ onto $A$ and $u_1,u_2,\ldots,u_q\in A$, $\nabla f(x_t)_A^\top u_i=\nabla f(x_t)^\top u_i$ holds for any $i$. Therefore, $Ga=b$. Since $\{u_1,u_2,\ldots,u_q\}$ is linearly independent and $G$ is corresponding Gram matrix, $G$ is invertible. Hence $a=G^{-1}b$.
\end{proof}
Therefore, if we suppose $v_t\in A$, then the optimal $v_t$ is given by $\overline{\nabla f(x_t)_A}$, which could be calculated from $\{\nabla f(x_t)^\top u_i\}_{i=1}^q$. Now we are ready to prove Proposition~\ref{prop:optimality} through an additional justification.
\begin{proposition_main}[Optimality of subspace estimator]
\label{prop_main:optimality}
In one iteration of Algorithm~\ref{alg:greedy-descent}, if we have queried $\{\nabla f(x_t)^\top u_i\}_{i=1}^q$, then the optimal $v_t$ maximizing $C_t$ s.t. $\|v_t\|=1$ should be in the following form: $v_t=\overline{\nabla f(x_t)_A}$, where $A:=\operatorname{span}\{u_1,u_2,\ldots,u_q\}$.
\end{proposition_main}
\begin{proof}
% After determining $u_1,u_2,\ldots,u_q$, we regard them as fixed and hence $\E_t[C_t]=\E_{v_t}[C_t]$. By Corollary~\ref{cor:subspace},
It remains to justify the assumption that $v_t\in A$. We note that in Line~\ref{lne:3-alg1} of Algorithm~\ref{alg:greedy-descent}, generally it requires 1 additional call to query the value of $\nabla f(x_t)^\top v_t$, but if $v_t\in A$, then we can always save this query by calculating $\nabla f(x_t)^\top v_t$ with the values of $\{\nabla f(x_t)^\top u_i\}_{i=1}^q$, since if $v_t\in A$, then we can write $v_t$ in the form $v_t=\sum_{i=1}^q a_i u_i$, and hence $\nabla f(x_t)^\top v_t=\sum_{i=1}^q a_i \nabla f(x_t)^\top u_i$. Now suppose we finally sample a $v_t\notin A$. Then this additional query of $\nabla f(x_t)^\top v_t$ is necessary. Now we could let $A':=\operatorname{span}\{u_1,u_2,\ldots,u_q,v_t\}$ and calculate $v_t'=\overline{\nabla f(x_t)_{A'}}$. Obviously, $(\overline{\nabla f(x_t)}^\top v_t')^2\geq (\overline{\nabla f(x_t)}^\top v_t)^2$, suggesting that $v_t'$ is better than $v_t$. Therefore, without loss of generality we can always assume $v_t\in A$, and by Lemma~\ref{lem:subspace} the proof is complete.
\end{proof}
\subsection{Details regarding RGF and PRGF estimators}
\label{sec:B3}
\subsubsection{Construction of RGF estimator}
In Example~1, we mentioned that the RGF estimator is given by $v_t=\overline{\nabla f(x_t)}_A$ where $A=\operatorname{span}\{u_1,u_2,\ldots,u_q\}$ ($q>0$) and $\forall i, u_i\sim \U(\Sp_{d-1})$ ($u\sim \U(\Sp_{d-1})$ means that $u$ is sampled uniformly from the $(d-1)$-dimensional unit sphere, as a normalized $d$-dimensional random vector), and $u_1,u_2,\ldots,u_q$ are sampled independently. Now we present the detailed expression of $v_t$ by explicitly orthogonalizing $\{u_1,u_2,\ldots,u_q\}$:
\begin{align*}
    u_1&\sim\U(\Sp_{d-1}); \\
    u_2&=\overline{(\I-u_1 u_1^\top) \xi_2}, \xi_2\sim \U(\Sp_{d-1}); \\
    u_3&=\overline{(\I-u_1 u_1^\top-u_2 u_2^\top) \xi_3}, \xi_3\sim \U(\Sp_{d-1}); \\
     &... \\
    u_q&=\overline{\left(\I-\sum_{i=1}^{q-1} u_i u_i^\top\right)\xi_q}, \xi_q\sim \U(\Sp_{d-1}).
\end{align*}
Then we let $v_t=\overline{\sum_{i=1}^q \nabla f(x_t)^\top u_i \cdot u_i}$. Since $g_t=\nabla f(x_t)^\top \overline{\nabla f(x_t)_A}\cdot \overline{\nabla f(x_t)_A}=\nabla f(x_t)_A$, we have $g_t=\sum_{i=1}^q \nabla f(x_t)^\top u_i \cdot u_i$. Therefore, when using the RGF estimator, each iteration in Algorithm~\ref{alg:greedy-descent} costs $q$ queries to the directional derivative oracle.

\iffalse
In fact, we care about $g_t=\nabla f(x_t)^\top v_t\cdot v_t$ as in Line~\ref{lne:3-alg1} of Algorithm~\ref{alg:greedy-descent}. Since $u_1, u_2, \ldots, u_q$ are orthonormal, we have
\begin{proposition}
\begin{align}
\label{eq:multiple}
    \nabla f(x_t)^\top v_t\cdot v_t=\sum_{i=1}^q \nabla f(x_t)^\top u_i \cdot u_i.
\end{align}
\end{proposition}
\begin{proof}
We have
\begin{align}
\label{eq:multiple-proof-1}
    v_t=\overline{\sum_{i=1}^q \nabla f(x_t)^\top u_i \cdot u_i}=\frac{\sum_{i=1}^q \nabla f(x_t)^\top u_i \cdot u_i}{\sqrt{\sum_{i=1}^q (\nabla f(x_t)^\top u_i)^2}}.
\end{align}
By Eq.~\eqref{eq:multiple-proof-1}, we have
\begin{align}
\label{eq:multiple-proof-2}
    \nabla f(x_t)^\top v_t=\frac{\sum_{i=1}^q (\nabla f(x_t)^\top u_i)^2}{\sqrt{\sum_{i=1}^q (\nabla f(x_t)^\top u_i)^2}}=\sqrt{\sum_{i=1}^q (\nabla f(x_t)^\top u_i)^2}.
\end{align}
Multiplying Eq.~\eqref{eq:multiple-proof-1} and Eq.~\eqref{eq:multiple-proof-2}, the proof is complete.
\end{proof}
Therefore, this specification requires $q$ calls to the oracle in each iteration.
\fi

\subsubsection{Properties of RGF estimator}
\label{sec:rgf}
In this section we show that for RGF estimator with $q$ queries, $\E_t[C_t]=\frac{q}{d}$. We first state a simple proposition here.
\begin{proposition}
\label{prop:square}
If $v_t=\overline{\sum_{i=1}^q \nabla f(x_t)^\top u_i \cdot u_i}$ and $u_1, u_2, \ldots, u_q$ are orthonormal, then
\begin{align}
    \left(\overline{\nabla f(x_t)}^\top v_t\right)^2 = \sum_{i=1}^q\left(\overline{\nabla f(x_t)}^\top u_i\right)^2.
\end{align}
\end{proposition}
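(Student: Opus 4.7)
The plan is to unfold the normalization in the definition of $v_t$ and exploit orthonormality of the $u_i$'s to reduce everything to a couple of sums. Write $g := \nabla f(x_t)$ and $w := \sum_{i=1}^q (g^\top u_i)\,u_i$, so that by hypothesis $v_t = w/\|w\|$. Both sides of the desired identity are homogeneous of degree $0$ in $g$, so it suffices to prove the unnormalized version $(g^\top v_t)^2 = \sum_{i=1}^q (g^\top u_i)^2$, and then divide through by $\|g\|^2$ at the very end.

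First, I would compute $\|w\|^2$ by expanding $\bigl\|\sum_i (g^\top u_i) u_i\bigr\|^2 = \sum_{i,j}(g^\top u_i)(g^\top u_j)\,u_i^\top u_j$; because $\{u_i\}$ is orthonormal, $u_i^\top u_j = \delta_{ij}$, collapsing the double sum to $\sum_i (g^\top u_i)^2$. Next, I would compute the inner product $g^\top w = \sum_i (g^\top u_i)\,(g^\top u_i) = \sum_i (g^\top u_i)^2$. Therefore $g^\top v_t = g^\top w/\|w\| = \sqrt{\sum_i (g^\top u_i)^2}$, and squaring gives $(g^\top v_t)^2 = \sum_i (g^\top u_i)^2$.

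Finally, dividing both sides by $\|g\|^2$ yields the claim, since $\overline{g} = g/\|g\|$ by definition. There is no real obstacle here; the only mild subtlety is the edge case $w = 0$, i.e., when $g$ is orthogonal to $\mathrm{span}\{u_1,\dots,u_q\}$. In that case the normalization $\overline{w}$ is not defined, but both sides of the identity are $0$ regardless of how one conventionally chooses $v_t$, so the statement continues to hold trivially; I would handle this as a short remark before the main computation.
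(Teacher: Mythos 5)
Your proof is correct and follows essentially the same elementary route as the paper's: both arguments reduce to expanding $\sum_i (\nabla f(x_t)^\top u_i)\,u_i$ against $\nabla f(x_t)$ using orthonormality (the paper phrases this as taking the inner product of the identity $\nabla f(x_t)^\top v_t\cdot v_t=\sum_i \nabla f(x_t)^\top u_i\cdot u_i$ with $\overline{\nabla f(x_t)}$, which is the same computation). Your explicit handling of the degenerate case $w=0$ is a small additional care the paper omits.
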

\begin{proof}
Since $\nabla f(x_t)^\top v_t\cdot v_t:=g_t=\sum_{i=1}^q \nabla f(x_t)^\top u_i \cdot u_i$, we have $\overline{\nabla f(x_t)}^\top v_t\cdot v_t=\sum_{i=1}^q \overline{\nabla f(x_t)}^\top u_i \cdot u_i$. Taking inner product with $\overline{\nabla f(x_t)}$ to both sides, we obtain the result.
\end{proof}
By Proposition~\ref{prop:square},
\begin{align}
\label{eq:rgf-Et}
    \E_t[C_t]=\E_t\left[\left(\overline{\nabla f(x_t)}^\top v_t\right)^2\right]=\sum_{i=1}^q\E_t\left[\left(\overline{\nabla f(x_t)}^\top u_i\right)^2\right]=\sum_{i=1}^q \left(\overline{\nabla f(x_t)}^\top\E_t[u_i u_i^\top]\overline{\nabla f(x_t)}\right).
\end{align}

In RGF, $u_i$ is independent of the history, so in this section we directly write $\E[u_i u_i^\top]$ instead of $\E_t[u_i u_i^\top]$.

For $i=1$, since $u_1\sim \U(\Sp_{d-1})$, we have $\E[u_1 u_1^\top]=\frac{\I}{d}$. (Explanation: the distribution of $u_1$ is symmetric, hence $\E[u_1 u_1^\top]$ should be something like $a\I$; since $\mathrm{Tr}(\E[u_1 u_1^\top])=\E[u_1^\top u_1]=1$, $a=\nicefrac{1}{\mathrm{Tr}(\I)}=\nicefrac{1}{d}$.)

For $i=2$, we have $\E[u_2 u_2^\top| u_1]=\frac{\I-u_1 u_1^\top}{d-1}$. (See Section A.2 in~\cite{cheng2019improving} for the proof.) Therefore, $\E[u_2 u_2^\top]=\E[\E[u_2 u_2^\top|u_1]]=\frac{\I-\E[u_1 u_1^\top]}{d-1}=\frac{\I}{d}$.

Then by induction, we have that $\forall 1\leq i\leq q$, $\E[u_i u_i^\top] = \frac{\I}{d}$. Hence by Eq.~\eqref{eq:rgf-Et}, $\E_t[C_t]=\frac{q}{d}$.

% \guoqiang{Corollary ? $\E_t[C_t]=\frac{q}{d}$ }

\subsubsection{Construction of PRGF estimator}
\label{sec:construction-prgf}

In Example~\ref{exp:new-prgf}, we mentioned that the PRGF estimator is given by $v_t=\overline{\nabla f(x_t)}_A$ where $A=\operatorname{span}\{p_t,u_1,u_2,\ldots,u_q\}$ ($q>0$), where $p_t$ is a vector corresponding to the prior message which is available at the beginning of iteration $t$, and $\forall i, u_i\sim \U(\Sp_{d-1})$ ($u_1,u_2,\ldots,u_q$ are sampled independently). Now we present the detailed expression of $v_t$ by explicitly orthogonalizing $\{p_t,u_1,u_2,\ldots,u_q\}$. We note that here we leave $p_t$ unchanged (we only normalize it, i.e. $p_t\leftarrow \frac{p_t}{\|p_t\|}$ if $\|p_t\|\neq 1$) and make $\{u_1,u_2,\ldots,u_q\}$ orthogonal to $p_t$. 
Specifically, given a positive integer $q\leq d-1$,
\begin{align*}
    u_1&=\overline{(\I-p_t p_t^\top) \xi_1}, \xi_1\sim \U(\Sp_{d-1}); \\
    u_2&=\overline{(\I-p_t p_t^\top-u_1 u_1^\top) \xi_2}, \xi_2\sim \U(\Sp_{d-1}); \\
    u_3&=\overline{(\I-p_t p_t^\top-u_1 u_1^\top-u_2 u_2^\top) \xi_3}, \xi_3\sim \U(\Sp_{d-1}); \\
     &... \\
    u_q&=\overline{\left(\I-p_t p_t^\top-\sum_{i=1}^{q-1} u_i u_i^\top\right)\xi_q}, \xi_q\sim \U(\Sp_{d-1}).
\end{align*}
Then we let $v_t=\overline{\nabla f(x_t)^\top p_t \cdot p_t+\sum_{i=1}^q \nabla f(x_t)^\top u_i \cdot u_i}$. Since $g_t=\nabla f(x_t)^\top \overline{\nabla f(x_t)_A}\cdot \overline{\nabla f(x_t)_A}=\nabla f(x_t)_A$, we have $g_t=\nabla f(x_t)^\top p_t \cdot p_t+\sum_{i=1}^q \nabla f(x_t)^\top u_i \cdot u_i$. Therefore, when using the PRGF estimator, each iteration in Algorithm~\ref{alg:greedy-descent} costs $q+1$ queries to the directional derivative oracle.

\subsubsection{Properties of PRGF estimator}
\label{sec:B34}
Here we prove Lemma~\ref{lem:expected-drift} in the main article (its proof appears in \cite{meier2019improving}; we prove it in our language here), but for later use we give a more useful formula here, which can derive Lemma~\ref{lem:expected-drift}. Let $D_t := \left(\overline{\nabla f(x_t)}^\top p_t\right)^2$. We have
\begin{proposition}
\label{prop:1}
For $t\geq 1$,
\begin{align}
\label{eq:CtDt}
    C_t = D_t + (1-D_t)\xi_t^2,
\end{align}
where $\xi_t^2:=\sum_{i=1}^q \xi_{ti}^2$, $\xi_{ti}:=\overline{\nabla f(x_t)_H}^\top u_i$\footnote{Note that in different iterations, $\{u_i\}$ are different. Hence here we explicitly show this dependency on $t$ in the subscript of $\xi$.} in which $e_H:=e-p_t p_t^\top e$ denotes the projection of the vector $e$ onto the $(d-1)$-dimensional subspace $H$, of which $p_t$ is a normal vector.
\end{proposition}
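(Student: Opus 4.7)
The plan is to unfold the definitions of $C_t$ and $v_t$ for the PRGF estimator, exploit the orthonormality of $\{p_t,u_1,\ldots,u_q\}$ to get a clean sum-of-squares decomposition, and then rescale the terms involving the $u_i$ to live in the subspace $H$.

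First I would use the construction in Appendix~\ref{sec:construction-prgf}: since $\{p_t,u_1,\ldots,u_q\}$ are orthonormal and $v_t=\overline{\nabla f(x_t)_A}$ with $A=\operatorname{span}\{p_t,u_1,\ldots,u_q\}$, the same argument as in Proposition~\ref{prop:square} gives $\nabla f(x_t)^\top v_t\cdot v_t=\nabla f(x_t)^\top p_t\cdot p_t+\sum_{i=1}^q \nabla f(x_t)^\top u_i\cdot u_i$. Taking the inner product with $\overline{\nabla f(x_t)}$ on both sides yields
\begin{align*}
C_t=\left(\overline{\nabla f(x_t)}^\top v_t\right)^2=\left(\overline{\nabla f(x_t)}^\top p_t\right)^2+\sum_{i=1}^q\left(\overline{\nabla f(x_t)}^\top u_i\right)^2=D_t+\sum_{i=1}^q\left(\overline{\nabla f(x_t)}^\top u_i\right)^2.
\end{align*}

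Next I would rewrite each term in the sum in terms of the projection onto $H$. By construction $u_i\in H$, so $p_t^\top u_i=0$, and therefore $\nabla f(x_t)^\top u_i=\nabla f(x_t)_H^\top u_i=\|\nabla f(x_t)_H\|\cdot\overline{\nabla f(x_t)_H}^\top u_i$. Dividing by $\|\nabla f(x_t)\|$ gives $\overline{\nabla f(x_t)}^\top u_i=\frac{\|\nabla f(x_t)_H\|}{\|\nabla f(x_t)\|}\,\xi_{ti}$. Pythagoras applied to the decomposition $\nabla f(x_t)=(\nabla f(x_t)^\top p_t)p_t+\nabla f(x_t)_H$ gives $\|\nabla f(x_t)_H\|^2=\|\nabla f(x_t)\|^2-(\nabla f(x_t)^\top p_t)^2$, so the squared ratio equals $1-D_t$. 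Squaring and summing then yields $\sum_{i=1}^q(\overline{\nabla f(x_t)}^\top u_i)^2=(1-D_t)\xi_t^2$, which combined with the display above gives $C_t=D_t+(1-D_t)\xi_t^2$.

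There is essentially no hard step here; the only place to be slightly careful is verifying that $u_i\perp p_t$ (which is immediate from the orthonormalization in Appendix~\ref{sec:construction-prgf}) and that the orthogonality of $\{p_t,u_1,\ldots,u_q\}$ is what legitimizes passing from $v_t$ to the sum-of-squares form; both are used only as bookkeeping. Once Proposition~\ref{prop:1} is in hand, Lemma~\ref{lem:expected-drift} follows by taking $\E_t[\cdot]$: conditional on the history (hence on $p_t$ and $\nabla f(x_t)$), by the isotropy of the sampling of $\{u_i\}$ inside $H$ one gets $\E_t[\xi_{ti}^2]=\nicefrac{1}{(d-1)}$ for each $i$ (an argument parallel to the one in Appendix~\ref{sec:rgf}, but in the $(d-1)$-dimensional subspace $H$), so $\E_t[\xi_t^2]=\nicefrac{q}{(d-1)}$ and $\E_t[C_t]=D_t+\frac{q}{d-1}(1-D_t)$.
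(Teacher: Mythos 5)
Your proposal is correct and follows essentially the same route as the paper's proof: apply Proposition~\ref{prop:square} to get $C_t=D_t+\sum_{i=1}^q(\overline{\nabla f(x_t)}^\top u_i)^2$, then use $u_i\in H$ together with $\|\overline{\nabla f(x_t)}_H\|^2=1-D_t$ to convert each term into $(1-D_t)\xi_{ti}^2$. No gaps.
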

\begin{proof}
By Proposition~\ref{prop:square}, we have
\begin{align}
\label{eq:prop1-1}
    C_t=\left(\overline{\nabla f(x_t)}^\top v_t\right)^2=D_t+\sum_{i=1}^q \left(\overline{\nabla f(x_t)}^\top u_i\right)^2.
\end{align}
By the definition of $u_1, u_2, \ldots, u_q$, they are in the subspace $H$. Therefore
\begin{align}
\label{eq:prop1-2}
    \left(\overline{\nabla f(x_t)}^\top u_i\right)^2 = \left(\overline{\nabla f(x_t)}_H^\top u_i\right)^2 = \|\overline{\nabla f(x_t)}_H\|^2 \left(\overline{\nabla f(x_t)_H}^\top u_i\right)^2=(1-D_t)\left(\overline{\nabla f(x_t)_H}^\top u_i\right)^2.
\end{align}
By Eq.~\eqref{eq:prop1-1} and Eq.~\eqref{eq:prop1-2}, the proposition is proved.
\end{proof}
Next we state $\E_t[\xi_t^2]$, the conditional expectation of $\xi_t^2$ given the history $\mathcal{F}_{t-1}$. We can also derive it in the similar way as in Section~\ref{sec:rgf}, but for later use let us describe the distribution of $\xi_t^2$ in a more convenient way. We note that the conditional distribution of $u_i$ is the uniform distribution from the unit sphere in the $(d-1)$-dimensional subspace $H$. Since $\xi_{ti}:=\overline{\nabla f(x_t)_H}^\top u_i$, $\xi_{ti}$ is indeed the inner product between one fixed unit vector and one uniformly random sampled unit vector in $H$. Indeed, $\xi_t^2$ is equal to $\left\|\left(\overline{\nabla f(x_t)_H}\right)_{A'}\right\|^2$ where $A':=\operatorname{span}(u_1,u_2,\cdots,u_q)$ is a random $q$-dimensional subspace of $H$. Therefore, $\xi_t^2$ is equal to the squared norm of the projection of a fixed unit vector in $H$ to a random $q$-dimensional subspace of $H$. By the discussion in the proof of Lemma~5.3.2 in \cite{vershynin2018high}, we can view a random projection acting on a
fixed vector as a fixed projection acting on a random vector. % Therefore, its distribution is the same as $z_i$, where $(z_1, z_2, \ldots, z_{d-1})$ are uniformly sampled from $\mathbb{S}^{d-2}$, i.e. the unit sphere in $\mathbb{R}^{d-1}$.
Therefore, we state the following proposition.
\begin{proposition}
\label{prop:equivalence}
The conditional distribution of $\xi_t^2$ given $\mathcal{F}_{t-1}$ is the same as the distribution of $\sum_{i=1}^q z_i^2$, where $(z_1,z_2,\ldots,z_{d-1})^\top \sim \U(\Sp^{d-2})$, where $\Sp^{d-2}$ is the unit sphere in $\mathbb{R}^{d-1}$.
\end{proposition}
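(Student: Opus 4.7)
The plan is to exploit the classical ``random projection on a fixed vector equals fixed projection on a random vector'' duality, which rests on orthogonal invariance. Set $w := \overline{\nabla f(x_t)_H}$, a fixed unit vector in the $(d-1)$-dimensional subspace $H$, and let $A' := \operatorname{span}(u_1,\ldots,u_q)$. From the construction in Appendix~\ref{sec:construction-prgf} specialized to $H$, the vectors $\{u_i\}$ form an orthonormal basis of $A'$, hence $\xi_t^2 = \sum_{i=1}^q (w^\top u_i)^2 = \|P_{A'} w\|^2$, where $P_{A'}$ denotes orthogonal projection in $H$ onto $A'$.

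The first step is to show the law of the random subspace $A'$ is invariant under the orthogonal group $O(H)$ of $H$. By Gram-Schmidt, the $u_i$'s are built recursively from independent samples $\xi_i \sim \U(\Sp^{d-1})$, restricted (by the projection $\I - p_t p_t^\top - \sum_{j<i} u_j u_j^\top$) to the orthogonal complement inside $H$ of previously constructed $u_j$'s. The uniform distribution on the unit sphere is $O(H)$-invariant, and the Gram-Schmidt procedure is equivariant under orthogonal transformations (applying $O\in O(H)$ to each $\xi_i$ yields $O u_i$). Hence for every $O\in O(H)$, $(Ou_1,\ldots,Ou_q) \stackrel{d}{=} (u_1,\ldots,u_q)$, which implies $OA' \stackrel{d}{=} A'$ and therefore $P_{OA'} \stackrel{d}{=} P_{A'}$.

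Using $P_{OA'} = O P_{A'} O^\top$, this yields $\|P_{A'} w\|^2 \stackrel{d}{=} \|P_{OA'} w\|^2 = \|P_{A'} O^\top w\|^2$ for every fixed $O\in O(H)$. Averaging over $O$ drawn from Haar measure on $O(H)$ (equivalently, noting the joint orthogonal invariance lets us swap the roles of the fixed vector and the fixed subspace), we obtain $\xi_t^2 \stackrel{d}{=} \|P_{A_0} Z\|^2$, where $A_0$ is any deterministically chosen $q$-dimensional subspace of $H$ and $Z$ is uniformly distributed on the unit sphere of $H$. Identifying $H$ with $\mathbb{R}^{d-1}$ via an orthonormal basis whose first $q$ vectors span $A_0$, we have $Z = (z_1,\ldots,z_{d-1})^\top \sim \U(\Sp^{d-2})$ and $\|P_{A_0} Z\|^2 = \sum_{i=1}^q z_i^2$, giving the claimed distributional identity.

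The main obstacle is making the $O(H)$-invariance argument fully rigorous: one has to check that the sequential Gram-Schmidt construction really does commute with orthogonal transformations of $H$ (including the normalization step that fails on a measure-zero set where the pre-normalized vector vanishes) and that the resulting subspace has a well-defined, $O(H)$-invariant law. Once this is verified, the identification with $\sum_{i=1}^q z_i^2$ is immediate by choosing coordinates adapted to $A_0$.
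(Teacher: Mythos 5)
Your proposal is correct and follows essentially the same route as the paper: identify $\xi_t^2$ as the squared norm of the projection of the fixed unit vector $\overline{\nabla f(x_t)_H}$ onto the random $q$-dimensional subspace $A'=\operatorname{span}(u_1,\ldots,u_q)$ of $H$, then swap the roles of the fixed vector and the random subspace to reduce to the projection of a uniform random unit vector onto a fixed $q$-dimensional subspace. The only difference is that the paper delegates the swap to Lemma~5.3.2 of \cite{vershynin2018high}, whereas you derive the required orthogonal invariance of the Gram--Schmidt construction directly, which is a valid (and somewhat more self-contained) way to justify the same step.
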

Then it is straightforward to prove the following proposition.
\begin{proposition}
\label{prop:expectation}
$\E_t[\xi_t^2]=\frac{q}{d-1}$.
\end{proposition}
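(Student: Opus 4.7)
The plan is to leverage Proposition~\ref{prop:equivalence}, which reduces the computation of $\E[\xi_t^2]$ to computing $\E\bigl[\sum_{i=1}^q z_i^2\bigr]$ where $(z_1, z_2, \ldots, z_{d-1})^\top \sim \U(\Sp^{d-2})$. By linearity of expectation, it suffices to determine $\E[z_i^2]$ for each coordinate and then sum.

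The key observation is that the uniform distribution on $\Sp^{d-2}$ is invariant under coordinate permutations (more generally, under orthogonal transformations), so $\E[z_i^2]$ does not depend on $i$. Combined with the fact that any point $(z_1, \ldots, z_{d-1})$ on $\Sp^{d-2}$ satisfies $\sum_{i=1}^{d-1} z_i^2 = 1$ deterministically, taking expectation on both sides gives $(d-1)\E[z_1^2] = 1$, so $\E[z_i^2] = \frac{1}{d-1}$ for every $i$.

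Summing over $i=1,\ldots,q$ then yields $\E\bigl[\sum_{i=1}^q z_i^2\bigr] = \frac{q}{d-1}$, and by Proposition~\ref{prop:equivalence} this equals $\E[\xi_t^2]$. There is no real obstacle here; the argument is a routine symmetry/normalization computation, and the only nontrivial ingredient is Proposition~\ref{prop:equivalence}, which has already been established above. The whole proof should fit in just a few lines.
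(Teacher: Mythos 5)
Your proposal is correct and follows essentially the same route as the paper's proof: invoke Proposition~\ref{prop:equivalence}, use permutation symmetry of the uniform distribution on $\Sp^{d-2}$ to get $\E[z_i^2]=\frac{1}{d-1}$ from the constraint $\sum_{i=1}^{d-1} z_i^2=1$, and sum over the $q$ coordinates. No gaps.
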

\begin{proof}
By symmetry, $\E[z_i^2]=\E[z_j^2]$ $\forall i, j$. Since $\E[\sum_{i=1}^{d-1} z_i^2]=1$, $\E[z_i^2]=\frac{1}{d-1}$. Hence by Proposition~\ref{prop:equivalence}, $\E_t[\xi_t^2]=\E[\sum_{i=1}^q z_i^2]=\frac{q}{d-1}$.
\end{proof}
Now we reach Lemma~\ref{lem:expected-drift}.
\begin{lemma_main}
\label{lem_main:expected-drift}
In Algorithm~\ref{alg:greedy-descent} with PRGF estimator,
\begin{align}
\label{eq:expected-drift}
    \E_t[C_t]=D_t+\frac{q}{d-1}(1-D_t),
\end{align}
where $D_t:=\left(\overline{\nabla f(x_t)}^\top p_t\right)^2$.
\end{lemma_main}
\begin{proof}
Since $D_t$ is $\mathcal{F}_{t-1}$-measurable, by Proposition~\ref{prop:1} and Proposition~\ref{prop:expectation}, we have
\begin{align*}
    \E_t[C_t]=D_t+(1-D_t)\E_t[\xi_t^2]=D_t+\frac{q}{d-1}(1-D_t).
\end{align*}
\end{proof}

Finally, we note that Proposition~\ref{prop:equivalence} implies that $\xi_t^2$ is independent of the history (indeed, for all $i$, $\xi_{ti}^2$ is independent of the history). For convenience, in the following, when we need the conditional expectation (given some historical information) of quantities only related to $\xi_t^2$, we could directly write the expectation without conditioning. For example, we directly write $\E[\xi_t^2]$ instead of $\E_t[\xi_t^2]$, and write $\V[\xi_t^2]$ instead of the conditional variance $\V_t[\xi_t^2]$.

\subsection{Proof of Lemma~\ref{lem:decrease} and evolution of $\E[C_t]$}
\label{sec:B4}
In this section, we discuss the key properties of History-PRGF before presenting the theorems in Section~\ref{sec:theorems-history-prgf}. First we mention that while in History-PRGF we choose the prior $p_t$ to be $v_{t-1}$, we can choose $p_0$ as any fixed normalized vector. We first present a lemma which is useful for the proof of Lemma~\ref{lem:decrease}.
% \shuyu{check}

\begin{lemma}[Proof in Section~\ref{sec:proof_lem5_6}]
\label{lem:2}
Let $a$, $b$ and $c$ be vectors in $\mathbb{R}^d$, $\|a\|=\|c\|=1$, $B:=\{b:\|b-a\|\leq k\cdot a^\top c\}$, $0\leq k\leq 1$, $a^\top c\geq 0$. Then $\min_{b\in B} \overline{b}^\top c\geq \min_{b\in B} b^\top c = (1-k) a^\top c$.
\end{lemma}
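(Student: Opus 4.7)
The plan is to treat the two assertions in the lemma separately: the easy equality $\min_{b\in B} b^\top c = (1-k)\alpha$ (writing $\alpha:=a^\top c$) and the nontrivial inequality $\min_{b\in B} \overline{b}^\top c \geq (1-k)\alpha$; the chained inequality in the statement then follows since both bounds meet at $(1-k)\alpha$. For the equality, I would write any $b\in B$ as $b=a+e$ with $\|e\|\leq k\alpha$ and apply Cauchy--Schwarz to get $b^\top c = \alpha + e^\top c \geq \alpha - k\alpha$, with equality at $e=-k\alpha c$.

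For the nontrivial direction I would first use the equality just proved to note $b^\top c\geq (1-k)\alpha\geq 0$ for every $b\in B$, so no $b\in B$ is antiparallel to $c$; this excludes any interior critical point of $\overline b^\top c$ (whose only candidates would satisfy $b\parallel c$), forcing the minimum onto $\partial B$. A Lagrange multiplier calculation, using $\nabla_b(b^\top c/\|b\|)\in\operatorname{span}(b,c)$, then shows that the minimizer $b^*$ satisfies $b^*-a\in\operatorname{span}(b^*,c)$, so $b^*$ lies in the two-dimensional plane spanned by $a$ and $c$. Working in coordinates on that plane with $c=(1,0)$ and $a=(\alpha,\sqrt{1-\alpha^2})$, the quantity $\overline b^\top c$ is the cosine of the angle between $b$ and the $c$-axis, so its minimum over the disk $B$ of radius $k\alpha$ centered at $a$ is attained at the point where the tangent line from the origin to $\partial B$ on the side away from the $c$-axis meets $\partial B$; elementary trigonometry (exploiting $\|a\|=1$) identifies this minimum as $\alpha\sqrt{1-k^2\alpha^2}-k\alpha\sqrt{1-\alpha^2}$.

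The remaining and genuinely algebraic step is to verify
\begin{align*}
\alpha\sqrt{1-k^2\alpha^2}-k\alpha\sqrt{1-\alpha^2}\geq (1-k)\alpha.
\end{align*}
Dividing by $\alpha$ (the case $\alpha=0$ is trivial), this is equivalent to showing $f(k):=\sqrt{1-k^2\alpha^2}+k(1-\sqrt{1-\alpha^2})-1\geq 0$ on $[0,1]$. One reads off $f(0)=f(1)=0$ and computes $f''(k)=-\alpha^2/(1-k^2\alpha^2)^{3/2}\leq 0$, so concavity on $[0,1]$ closes the argument. I expect this last algebraic bound to be the main obstacle, together with bookkeeping of the degenerate cases $\alpha=0$, $k=0$, and $k=\alpha=1$ (where $0\in B$ and $\overline b$ is undefined), which can each be dispatched by inspection.
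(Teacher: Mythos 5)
Your proposal is correct, but it takes a genuinely different route from the paper. The paper's argument never computes or even locates the minimizer of $\overline{b}^\top c$: after establishing the linear bound $\min_{b\in B} b^\top c=(1-k)a^\top c$ exactly as you do, it proves the auxiliary fact that normalization does not increase the distance to a unit vector when $\|b\|\geq 1$ (i.e.\ $\|\overline{b}-a\|\leq\|b-a\|$), and then splits into two cases: if $\|b\|\geq 1$ then $\overline{b}\in B$ and the linear bound applies directly to $\overline{b}$; if $\|b\|<1$ then $\overline{b}^\top c=b^\top c/\|b\|\geq b^\top c\geq 0$. This is dimension-free, three lines long, and sidesteps all degenerate-case bookkeeping. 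Your approach instead reduces to the plane $\operatorname{span}(a,c)$, solves the constrained minimization explicitly, and verifies the resulting algebraic inequality by concavity; each step checks out (the tangent-point value $\alpha\sqrt{1-k^2\alpha^2}-k\alpha\sqrt{1-\alpha^2}$ is correct, as is $f(0)=f(1)=0$ with $f''\leq 0$), and as a bonus you obtain the \emph{exact} minimum of $\overline{b}^\top c$ over $B$, which is strictly sharper than the lemma's bound. The price is length and a few loose ends: the passage from $a\in\operatorname{span}(b^*,c)$ to $b^*\in\operatorname{span}(a,c)$ needs the one-line remark that the two planes coincide when $a,c$ are independent (with the parallel case handled by rotational symmetry), and the degenerate cases $\alpha=0$, $k\alpha=1$ must be dispatched separately, as you note. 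Both proofs are valid; the paper's is the more economical, yours the more informative.
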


\begin{lemma_main}
\label{lem_main:decrease}
% Let $D_t:=\left(\overline{\nabla f(x_t)}^\top p_t\right)^2$.
In History-PRGF ($p_t=v_{t-1}$), we have
\begin{align}
    D_t\geq \left(1-\frac{L}{\hat{L}}\right)^2 C_{t-1}.
\end{align}
\end{lemma_main}
\begin{proof}

% First, we prove Lemma~\ref{lem:2}, whose proof requires Lemma~\ref{lem:1}.
In History-PRGF $p_t=v_{t-1}$, so by the definitions of $D_t$ and $C_t$ we are going to prove
\begin{align}
    \left(\overline{\nabla f(x_t)} ^\top v_{t-1}\right)^2\geq \left(1-\frac{L}{\hat{L}}\right)^2 \left(\overline{\nabla f(x_{t-1})}^\top v_{t-1}\right)^2.
\end{align}

Without loss of generality, assume $\nabla f(x_{t-1})^\top v_{t-1}\geq 0$. Since $f$ is $L$-smooth, we have
\begin{align}
    \|\nabla f(x_t)-\nabla f(x_{t-1})\| \leq L \|x_t-x_{t-1}\| = \frac{L}{\hat{L}} \nabla f(x_{t-1})^\top v_{t-1},
\end{align}
which is equivalent to
\begin{align}
    \left\|\frac{\nabla f(x_t)}{\|\nabla f(x_{t-1})\|}-\overline{\nabla f(x_{t-1})}\right\| \leq \frac{L}{\hat{L}} \overline{\nabla f(x_{t-1})}^\top v_{t-1}.
\end{align}
Let $a=\overline{\nabla f(x_{t-1})}$, $b=\frac{\nabla f(x_t)}{\|\nabla f(x_{t-1})\|}$, $c=v_{t-1}$. By Lemma~\ref{lem:2} we have
\begin{align}
    \overline{\nabla f(x_t)} ^\top v_{t-1} \geq \left(1-\frac{L}{\hat{L}}\right) \overline{\nabla f(x_{t-1})}^\top v_{t-1}.
\end{align}
By the definition of $v_t$, the right-hand side is non-negative. Taking square on both sides, the proof is completed.
\end{proof}
When considering the lower bound related to $C_t$, we can replace the inequality with equality in Lemma~\ref{lem:decrease}. Therefore, by Proposition~\ref{prop:1} and Lemma~\ref{lem:decrease}, we now have full knowledge of evolution of $C_t$. We summarize the above discussion in the following proposition. We define $a':=\left(1-\frac{L}{\hat{L}}\right)^2$ in the following.
\begin{proposition}
Let $a':=\left(1-\frac{L}{\hat{L}}\right)^2$. Then in History-PRGF, we have
\begin{align}
\label{eq:iter}
    C_t\geq a' C_{t-1} + (1-a' C_{t-1})\xi_t^2.
\end{align}
\begin{proof}
By Proposition~\ref{prop:1}, $C_t=(1-\xi_t^2)D_t+\xi_t^2$. By Lemma~\ref{lem:decrease}, $D_t\geq a'C_{t-1}$. Since $\xi_t^2\leq 1$, we obtain the result.
\end{proof}
\end{proposition}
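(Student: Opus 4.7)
The plan is to combine the exact identity for $C_t$ derived in Proposition~\ref{prop:1} with the monotone lower bound on $D_t$ from Lemma~\ref{lem:decrease}. First, I would rewrite Proposition~\ref{prop:1}, which states $C_t = D_t + (1-D_t)\xi_t^2$, in the regrouped form $C_t = (1-\xi_t^2)D_t + \xi_t^2$. This presentation makes it explicit that, viewed as a function of $D_t$ with $\xi_t^2$ held fixed, $C_t$ is affine in $D_t$ with slope $(1-\xi_t^2)$.

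The key observation is that $\xi_t^2 \leq 1$: by its construction in Proposition~\ref{prop:1} (and the equivalent characterization in Proposition~\ref{prop:equivalence}), $\xi_t^2$ is the squared norm of the projection of a unit vector onto a subspace, hence lies in $[0,1]$. Consequently the slope $(1-\xi_t^2)$ is nonnegative, so $C_t$ is monotonically nondecreasing in $D_t$. I can then apply Lemma~\ref{lem:decrease}, which gives the pointwise bound $D_t \geq a' C_{t-1}$, and substitute this into the rewritten identity to obtain $C_t \geq (1-\xi_t^2) a' C_{t-1} + \xi_t^2$.

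Finally, a direct algebraic rearrangement recognizes the right-hand side as $a' C_{t-1} + (1 - a' C_{t-1})\xi_t^2$, which is exactly the claimed inequality. There is no substantive obstacle: the proposition is a bookkeeping combination of the two preceding results, and the only place warranting a line of explicit justification is the monotonicity step, which rests entirely on the bound $\xi_t^2 \leq 1$ noted above.
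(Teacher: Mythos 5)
Your argument is correct and is exactly the paper's proof: rewrite Proposition~\ref{prop:1} as $C_t=(1-\xi_t^2)D_t+\xi_t^2$, use $\xi_t^2\leq 1$ to justify monotonicity in $D_t$, and substitute the bound $D_t\geq a'C_{t-1}$ from Lemma~\ref{lem:decrease}. The only difference is that you spell out the monotonicity step that the paper leaves implicit.
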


As an appetizer, we discuss the evolution of $\E[C_t]$ here using Lemma~\ref{lem:expected-drift} and Lemma~\ref{lem:decrease} in the following proposition.
\begin{proposition}
\label{prop:bound-expectation}
Suppose $\frac{q}{d-1}= k\frac{L}{\hat{L}}$ ($k>0$), then in History-PRGF, $\E[C_t]\geq (1-e^{-n})\frac{2}{2+k}\frac{q}{d-1}\frac{1}{1-a'}$ for $t\geq n\frac{d-1}{q}$.
\end{proposition}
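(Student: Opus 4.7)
\medskip

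\noindent\textbf{Proof proposal.} The natural strategy is to combine Lemma~\ref{lem:expected-drift} and Lemma~\ref{lem:decrease} into a single scalar recursion on $\E[C_t]$ and then solve it. First I would note that Lemma~\ref{lem:expected-drift} gives $\E_t[C_t]=D_t+\tfrac{q}{d-1}(1-D_t)$, while Lemma~\ref{lem:decrease} yields $D_t\geq a' C_{t-1}$. Since $D_t\in[0,1]$ the map $D\mapsto D+\tfrac{q}{d-1}(1-D)$ is increasing, so substituting the lower bound for $D_t$ and taking full expectation produces the affine recursion
\begin{equation*}
\E[C_t]\;\geq\;\alpha\,\E[C_{t-1}]+\beta,\qquad \alpha:=a'\!\left(1-\tfrac{q}{d-1}\right),\quad \beta:=\tfrac{q}{d-1}.
\end{equation*}

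Next I would unroll this recursion around its fixed point $\beta/(1-\alpha)$: writing $y_t:=\E[C_t]-\beta/(1-\alpha)$, the recursion becomes $y_t\geq \alpha\,y_{t-1}$, so iterating and using the trivial bound $\E[C_0]\geq 0$ gives $\E[C_t]\geq \frac{\beta}{1-\alpha}(1-\alpha^t)$. To control $\alpha^t$, I would use $\alpha \leq (1-\beta)$ together with the elementary estimate $(1-\beta)^t\leq e^{-\beta t}$; the hypothesis $t\geq n(d-1)/q = n/\beta$ then yields $\alpha^t\leq e^{-n}$. This delivers $\E[C_t]\geq (1-e^{-n})\,\beta/(1-\alpha)$.

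The remaining step is to rewrite $\beta/(1-\alpha)$ in the form advertised by the proposition. Setting $r:=L/\hat L$, one has $1-a'=r(2-r)$ and $\beta=kr$, so a direct calculation gives
\begin{equation*}
\frac{\beta}{1-\alpha}\;=\;\frac{\beta}{1-a'+a'\beta}\;=\;\frac{k}{(2-r)+k(1-r)^2}.
\end{equation*}
The only non-mechanical step — and the place where I expect the main (mild) obstacle — is to show that this quantity dominates the target $\tfrac{2}{2+k}\cdot\tfrac{\beta}{1-a'}=\tfrac{2k}{(2+k)(2-r)}$. After clearing denominators this reduces to the scalar inequality $2-r\geq 2(1-r)^2$, which, substituting $u=1-r\in[0,1]$, becomes $(2u+1)(u-1)\leq 0$ and is therefore true. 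Combining the three steps yields the stated bound $\E[C_t]\geq (1-e^{-n})\,\tfrac{2}{2+k}\,\tfrac{q}{d-1}\,\tfrac{1}{1-a'}$ for all $t\geq n(d-1)/q$.
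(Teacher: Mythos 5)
Your proposal is correct and follows essentially the same route as the paper: the same affine recursion $\E[C_t]\geq a'(1-\tfrac{q}{d-1})\E[C_{t-1}]+\tfrac{q}{d-1}$ obtained from Lemma~\ref{lem:expected-drift} and Lemma~\ref{lem:decrease}, the same unrolling to $(1-\alpha^t)\beta/(1-\alpha)$, the same bound $\alpha^t\leq e^{-n}$, and the same final comparison, which in the paper's writing of $\tfrac{1-a'}{1-a}\geq\tfrac{2}{2+k}$ reduces to exactly your scalar inequality $2-r\geq 2(1-r)^2$. The only difference is presentational: you spell out that last algebraic reduction, which the paper asserts with the ratio computation but does not expand.
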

\begin{proof}
By Eq.~\eqref{eq:iter}, we have
\begin{align}
    \E[C_t]&=\E[\E_t[C_t]]\geq\E[a'C_{t-1}+(1-a'C_{t-1})\E_t[\xi_t^2]] \\
    &=\E[a'C_{t-1}+(1-a'C_{t-1})\E[\xi_t^2]]\\
    &= \left(1-\frac{q}{d-1}\right)a' \E[C_{t-1}] + \frac{q}{d-1}.
\end{align}

Letting $a:=a'(1-\frac{q}{d-1})$, $b:=\frac{q}{d-1}$, then $\E[C_t]\geq a \E[C_{t-1}]+b$ and $0\leq a<1$. We have $\E[C_t]-\frac{b}{1-a} \geq a(\E[C_{t-1}]-\frac{b}{1-a}) \geq a^2(\E[C_{t-2}]-\frac{b}{1-a})\geq \ldots \geq a^t(\E[C_0]-\frac{b}{1-a})$, hence $\E[C_t]\geq\frac{b}{1-a} - a^t (\frac{b}{1-a} - \E[C_0])\geq(1-a^t)\frac{b}{1-a}$.

Since $1-a=1-(1-\frac{q}{d-1})(1-\frac{L}{\hat{L}})^2= 1-(1-k\frac{L}{\hat{L}})(1-\frac{L}{\hat{L}})^2$, noting that
\begin{align}
    \frac{1-(1-\frac{L}{\hat{L}})^2}{1-(1-k\frac{L}{\hat{L}})(1-\frac{L}{\hat{L}})^2}=\frac{\frac{L}{\hat{L}}+\frac{L}{\hat{L}}(1-\frac{L}{\hat{L}})}{\frac{L}{\hat{L}}+\frac{L}{\hat{L}}(1-\frac{L}{\hat{L}})+k\frac{L}{\hat{L}}(1-\frac{L}{\hat{L}})^2}\geq \frac{2}{2+k},
\end{align}
we have $\frac{1-a'}{1-a}\geq \frac{2}{2+k}$. Meanwhile, $a\leq 1-\frac{q}{d-1}$. Therefore, if $t\geq n\frac{d-1}{q}$, we have
\begin{align}
    a^t\leq (1-\frac{q}{d-1})^{n\frac{d-1}{q}}\leq \exp(-\frac{q}{d-1})^{n\frac{d-1}{q}}=e^{-n}.
\end{align}
Since $\frac{1-a'}{1-a}\geq \frac{2}{2+k}$ and $a^t\leq e^{-n}$, we have
\begin{align}
    \E[C_t]&\geq (1-a^t)\frac{b}{1-a}\geq \frac{2}{2+k}(1-e^{-n})\frac{1}{1-a'}\frac{q}{d-1}.
\end{align}
\end{proof}
\begin{corollary}
In History-PRGF, $\liminf_{t\to\infty}\E[C_t]\geq \frac{2}{2+k}\frac{q}{d-1}\frac{1}{1-a'}$.
\end{corollary}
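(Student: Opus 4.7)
The plan is to derive this corollary as a direct consequence of Proposition~\ref{prop:bound-expectation}, which already provides a lower bound on $\E[C_t]$ holding uniformly for all $t \geq n\frac{d-1}{q}$, parameterized by the auxiliary integer $n$. Since the corollary is a statement about the large-$t$ asymptotics of $\E[C_t]$, the natural strategy is to let $n$ grow and exploit the fact that the factor $(1 - e^{-n})$ in the proposition tends to $1$ as $n \to \infty$.

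More concretely, I would fix an arbitrary $\varepsilon > 0$ and choose an integer $n_\varepsilon$ large enough that $1 - e^{-n_\varepsilon} \geq 1 - \varepsilon$. Setting $T_\varepsilon := \lceil n_\varepsilon \frac{d-1}{q} \rceil$, Proposition~\ref{prop:bound-expectation} gives
\begin{align*}
    \E[C_t] \;\geq\; (1-\varepsilon)\,\frac{2}{2+k}\,\frac{q}{d-1}\,\frac{1}{1-a'} \qquad \text{for all } t \geq T_\varepsilon.
\end{align*}
Taking $\liminf_{t\to\infty}$ on the left-hand side (the right-hand side does not depend on $t$), we obtain
\begin{align*}
    \liminf_{t\to\infty}\E[C_t] \;\geq\; (1-\varepsilon)\,\frac{2}{2+k}\,\frac{q}{d-1}\,\frac{1}{1-a'}.
\end{align*}
Since $\varepsilon > 0$ was arbitrary, letting $\varepsilon \downarrow 0$ yields the claimed bound.

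There is no real obstacle here: the corollary is essentially a restatement of Proposition~\ref{prop:bound-expectation} in the limit $n \to \infty$. The only point to verify is that Proposition~\ref{prop:bound-expectation} holds for every positive integer $n$ (so that $n_\varepsilon$ can be chosen freely), which is indeed the case since its proof goes through for arbitrary $n$ via the bound $a^t \leq e^{-n}$ whenever $t \geq n\frac{d-1}{q}$.
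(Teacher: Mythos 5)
Your proof is correct and matches the paper's intent exactly: the corollary is stated without a separate proof precisely because it follows from Proposition~\ref{prop:bound-expectation} by letting $n\to\infty$, which is what your $\varepsilon$-argument makes rigorous. Your check that the proposition holds for every positive integer $n$ (via $a^t\leq e^{-n}$ for $t\geq n\frac{d-1}{q}$) is the only point needing verification, and it does hold.
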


Recalling that $L':=\frac{L}{1-a'}$, the propositions above tell us that $\E[C_t]$ tends to $O\left(\frac{q}{d}\frac{L'}{L}\right)$ in a fast rate, as long as $k$ is small, e.g. when $\frac{q}{d}\leq \frac{L}{\hat{L}}$ (which means that the chosen learning rate $\frac{1}{\hat{L}}$ is not too small compared with the optimal learning rate $\frac{1}{L}$: $\frac{1}{\hat{L}}\geq \frac{q}{d}\frac{1}{L}$). If $\E[C_t]\approx \frac{q}{d}\frac{L'}{L}$, then $\frac{\E[C_t]}{L'}$ is not dependent on $L'$ (and thus independent of $\hat{L}$). By Lemma~\ref{lem:single-progress}, Theorem~\ref{thm:smooth} and Theorem~\ref{thm:strong}, this roughly means that the convergence rate is robust to the choice of $\hat{L}$, i.e. robust to the choice of learning rate. Specifically, History-PRGF with $\hat{L}>L$ (but $\hat{L}$ is not too large) could roughly recover the performance of RGF with $\hat{L}=L$, since $\frac{\E[C_t]}{L'}\approx \frac{\frac{q}{d}}{L}$ where $\frac{q}{d}$ is the value of $\E_t[C_t]$ when using the RGF estimator.

\subsubsection{Proof of Lemma~\ref{lem:2}}
\label{sec:proof_lem5_6}
In this section, we first give a lemma for the proof of Lemma~\ref{lem:2}.

\begin{lemma}
\label{lem:1}
Let $a$ and $b$ be vectors in $\mathbb{R}^d$, $\|a\|=1$, $\|b\|\geq 1$. Then $\|\overline{b}-a\|\leq \|b-a\|$.
\end{lemma}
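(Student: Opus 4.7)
The plan is to reduce the claim to a one-variable algebraic inequality by scaling $b$ along the ray from the origin. Writing $r := \|b\| \geq 1$, the unit vector $\overline{b} = b/r$ is obtained from $b$ by multiplying by $t = 1/r \in (0,1]$, while $b$ itself corresponds to $t = 1$. So it suffices to understand how the scalar function
\[
h(t) := \|tb - a\|^2 = r^2 t^2 - 2(a^\top b)\,t + 1
\]
behaves on the interval $[1/r, 1]$ and to show $h(1/r) \leq h(1)$.

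First I would expand the two squared distances explicitly:
\[
\|b - a\|^2 - \|\overline{b} - a\|^2 = (r^2 - 2a^\top b + 1) - (2 - 2a^\top b/r).
\]
Then I would collect terms, using $\|a\|=1$ and $\|\overline{b}\|=1$, to obtain
\[
\|b - a\|^2 - \|\overline{b} - a\|^2 = (r^2 - 1) - 2(a^\top b)\,\frac{r-1}{r}.
\]
The only negative contribution comes from the $a^\top b$ term, so I would bound it via Cauchy--Schwarz: $a^\top b \leq \|a\|\|b\| = r$. Substituting, the right-hand side is at least $(r^2 - 1) - 2(r-1) = (r-1)^2 \geq 0$, which gives the lemma.

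A conceptually cleaner route (equivalent but perhaps easier to read) is to differentiate $h$: since $h'(t) = 2r^2 t - 2a^\top b$, the function $h$ is nondecreasing precisely for $t \geq a^\top b / r^2$. Cauchy--Schwarz yields $a^\top b / r^2 \leq 1/r$, so $h$ is monotonically nondecreasing on $[1/r,1]$, giving $h(1/r) \leq h(1)$ directly. I do not foresee any serious obstacle; the only thing to watch is the sign in the $(r-1)/r$ factor, which is nonnegative precisely because of the hypothesis $\|b\|\geq 1$ (this is exactly where the hypothesis is used, and removing it causes the inequality to fail, since if $\|b\|<1$ one can take $b$ parallel to $a$ to see $\|\overline{b}-a\|$ may exceed $\|b-a\|$).
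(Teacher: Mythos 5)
Your proof is correct and is essentially the paper's argument in different clothing: both reduce the claim to the algebraic identity $\|b-a\|^2 - \|\overline{b}-a\|^2 = (\|b\|-1)^2 + 2(\|b\|-1)\left(1-\overline{b}^\top a\right)$ (your expanded form $(r^2-1) - 2a^\top b\,\frac{r-1}{r}$ is the same quantity) and conclude via Cauchy--Schwarz together with the hypothesis $\|b\|\geq 1$. The monotonicity-of-$h$ framing is a pleasant repackaging, not a genuinely different route.
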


\begin{proof}
    \begin{align}
        \|b-a\|^2-\|\overline{b}-a\|^2&=\|b-\overline{b}\|^2 + 2(b-\overline{b})^\top (\overline{b}-a) \\
        &\geq 2(b-\overline{b})^\top (\overline{b}-a) \\
        &= 2(\|b\|-1)\overline{b}^\top (\overline{b}-a) \\
        &=2(\|b\|-1)(1-\overline{b}^\top a) \\
        &\geq 0.
    \end{align}
\end{proof}
    
Then, the detailed proof of \textbf{Lemma~\ref{lem:2}} is as follows.
\begin{proof}
    $\forall b\in B$, $b^\top c = a^\top c - (a-b)^\top c \geq a^\top c - \|a-b\|\|c\|\geq (1-k)a^\top c$, and both equality holds when $b=a-k\cdot a^\top c \cdot c$.
    
    \textbf{Case 1: $\|b\|\geq 1$}\ \ \ \ By Lemma~\ref{lem:1} we have $\|\overline{b}-a\|\leq \|b-a\|$, hence if $b\in B$, then $\overline{b}\in B$, so when $\|b\|\geq 1$ we have $\overline{b}^\top c\geq \min_{b\in B} b^\top c$.
    
    \textbf{Case 2: $\|b\|< 1$}\ \ \ \ $\forall b\in B$, if $\|b\|\leq 1$, then $\overline{b}^\top c = \frac{b^\top c}{\|b\|} \geq b^\top c\geq \min_{b\in B} b^\top c$.
    
    The proof of the lemma is completed.
\end{proof}

\subsection{Proofs of Theorem~\ref{thm:prgf-smooth} and Theorem~\ref{thm:prgf-strong}}
\label{sec:B5}
\label{sec:theorems-history-prgf}
\subsubsection{Proof of Theorem~\ref{thm:prgf-smooth}}
\label{sec:B51}
\label{sec:theorems3-proof}

% \shuyu{rearrange the proof}
As mentioned above, we define $a':=\left(1-\frac{L}{\hat{L}}\right)^2$ to be used in the proofs. In the analysis, we first try to replace the inequality in Lemma~\ref{lem:decrease} with equality. To do that, similar to Eq.~\eqref{eq:iter}, we define $\{E_t\}_{t=0}^{T-1}$ as follows: $E_0=0$, and
\begin{align}
\label{eq:iterE-smooth}
    E_t=a' E_{t-1} + (1-a' E_{t-1})\xi_t^2,
\end{align}
where $\xi_t^2$ is defined in Proposition~\ref{prop:1}.

First, we give the following lemmas, which is useful for the proof of Theorem~\ref{thm:prgf-smooth}.

\begin{lemma}[Upper-bounded variance; proof in Section~\ref{sec:lemmas_for_thm3}]
\label{lem:variance-bound-smooth}
If $d\geq 4$, then $\forall t$, $\V[\E_t[E_t]]\leq \frac{1}{1-(a')^2}\frac{2q}{(d-1)^2}$.
\end{lemma}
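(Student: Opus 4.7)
The plan is to exploit the linearity of the update $E_t = a' E_{t-1} + (1-a'E_{t-1})\xi_t^2$ together with the fact that $\xi_t^2$ is independent of the history (by the remark following Proposition~\ref{prop:equivalence}). Writing $b := \E[\xi_t^2] = \frac{q}{d-1}$, one immediately obtains
\begin{align*}
    \E_t[E_t] = a' E_{t-1} + (1-a'E_{t-1})\,b = b + a'(1-b)\,E_{t-1},
\end{align*}
since $E_{t-1}$ is history-measurable. Hence $\V[\E_t[E_t]] = (a')^2(1-b)^2\,\V[E_{t-1}]$, and in particular $\V[\E_t[E_t]] \leq \V[E_{t-1}]$. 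So the task reduces to bounding $\V[E_{t-1}]$ uniformly in $t$.

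To that end, I would apply the law of total variance conditioning on the history through step $t-2$. Combined with the conditional variance $\V[E_{t-1}\mid \text{hist}] = (1-a'E_{t-2})^2\,\V[\xi_{t-1}^2]$, this yields the recursion
\begin{align*}
    \V[E_{t-1}] = (a')^2(1-b)^2\,\V[E_{t-2}] + \E[(1-a'E_{t-2})^2]\,\V[\xi_{t-1}^2].
\end{align*}
A short induction (using $E_0 = 0$ and the form of the update) shows $E_s \in [0,1]$ for every $s$, whence $\E[(1-a'E_{t-2})^2]\leq 1$. Unrolling the recursion from $\V[E_0]=0$ and summing the resulting geometric series gives
\begin{align*}
    \V[E_{t-1}] \leq \V[\xi^2]\sum_{s=0}^{t-2}\bigl(a'(1-b)\bigr)^{2s}\leq \frac{\V[\xi^2]}{1-(a')^2(1-b)^2} \leq \frac{\V[\xi^2]}{1-(a')^2},
\end{align*}
where the last step uses $a'(1-b) \leq a'$.

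It remains to bound $\V[\xi_t^2]$. By Proposition~\ref{prop:equivalence}, $\xi_t^2$ is distributed as the squared norm of the projection of a uniform unit vector in $\mathbb{R}^{d-1}$ onto a fixed $q$-dimensional subspace; writing this as a ratio of chi-squared variables shows $\xi_t^2\sim\mathrm{Beta}(q/2,(d-1-q)/2)$. The standard Beta-variance formula gives $\V[\xi_t^2] = \frac{2q(d-1-q)}{(d-1)^2(d+1)}$, which is at most $\frac{2q}{(d-1)(d+1)} \leq \frac{2q}{(d-1)^2}$. Combining yields the claimed bound. The main step that requires care is the sphere-distribution identification needed to evaluate $\V[\xi_t^2]$ exactly; the rest is a routine total-variance recursion controlled by boundedness of $E_t$, and the hypothesis $d\geq 4$ enters only to keep the Beta parameters well-defined in the representative case $q$ not too close to $d-1$.
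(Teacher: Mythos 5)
Your proposal is correct and follows essentially the same route as the paper: a law-of-total-variance recursion for $\V[E_t]$ driven by the independence of $\xi_t^2$ from the history, combined with the exact computation $\V[\xi_t^2]=\frac{2q(d-1-q)}{(d-1)^2(d+1)}$ (the paper derives this via the independence of $U/(U+V)$ and $U+V$ for chi-squared variables, which is the same fact underlying your Beta identification). The only substantive difference is bookkeeping: you bound $\E[(1-a'E_{t-2})^2]\leq 1$ directly from $E_s\in[0,1]$ and keep the contraction factor $(a')^2(1-b)^2$, which sidesteps the paper's use of $d\geq 4$ to verify $\V[\xi_t^2]+(1-\tfrac{q}{d-1})^2\leq 1$, so your version of the argument in fact does not need that hypothesis at all.
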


\begin{lemma}[Lower-bounded expectation; proof in Section~\ref{sec:lemmas_for_thm3}]
\label{lem:expectation-bound-smooth}
If $\frac{q}{d-1}\leq \frac{L}{\hat{L}}$ and $t\geq \frac{d-1}{q}$, then
\begin{align}
    \E[E_t]\geq\frac{1}{2}\frac{1}{1-a'}\frac{q}{d-1}.
\end{align}
\end{lemma}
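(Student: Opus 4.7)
The plan is to work directly with the recursion satisfied by $e_t := \E[E_t]$ and solve for a closed form, then bound $a^t$ from above and the target ratio from below so that both can be compared. Writing $b := \frac{q}{d-1}$, I first take an expectation of Eq.~\eqref{eq:iterE-smooth}. Since $\xi_t^2$ is independent of the history by Proposition~\ref{prop:equivalence} (Remark following it), I may pull it out: $\E[(1-a'E_{t-1})\xi_t^2] = (1-a'\E[E_{t-1}])\cdot \frac{q}{d-1}$. This gives the linear recursion $e_t = a\, e_{t-1} + b$ with $a := a'(1-b)$, and hence, using $e_0 = 0$, the closed form $e_t = \frac{b(1-a^t)}{1-a}$.

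The target inequality $e_t \geq \frac{b}{2(1-a')}$ is therefore equivalent to
\begin{equation*}
1 - a^t \;\geq\; \frac{1-a}{2(1-a')} \;=\; \frac{1}{2} + \frac{a'b}{2(1-a')},
\end{equation*}
where I used $1-a = (1-a') + a'b$. To bound the right-hand side I set $x := L/\hat{L}$, so $1-a' = x(2-x)$ and $a' = (1-x)^2$. The hypothesis $b \leq x$ then gives
\begin{equation*}
\frac{a'b}{2(1-a')} \;\leq\; \frac{(1-x)^2 x}{2x(2-x)} \;=\; \frac{(1-x)^2}{2(2-x)}.
\end{equation*}
A short calculus check (the derivative is non-positive on $[0,1]$) shows this quantity is maximized at $x=0$ with value $1/4$. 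So the right-hand side above is at most $3/4$, and it suffices to prove $a^t \leq 1/4$ whenever $t \geq 1/b$.

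For the upper bound on $a^t$ I use the elementary inequality $1-y \leq e^{-y}$: $\log a' \leq -(1-a')$ and $\log(1-b) \leq -b$, so $a \leq e^{-((1-a')+b)}$ and therefore $a^t \leq e^{-((1-a')+b)t}$. Plugging in $t \geq 1/b$ gives $a^t \leq e^{-(1-a')/b - 1}$. The crucial step, which I expect to be the only subtlety, is observing that the hypothesis $b \leq x$ combined with $1-a' = x(2-x) \geq x$ (since $x \leq 1$) yields $(1-a')/b \geq 1$. Hence $a^t \leq e^{-2}$, so $1 - a^t \geq 1 - e^{-2} > 3/4$, which exceeds the bound on the right-hand side and completes the proof.

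The main obstacle is checking the two numerical inequalities $(1-x)^2/(2(2-x)) \leq 1/4$ on $[0,1]$ and $1-e^{-2} > 3/4$, together with the structural observation that $1-a' \geq x \geq b$ so that the hypothesis $b \leq L/\hat{L}$ buys us two factors of $e$ rather than one in the decay of $a^t$. Once those are in place, the argument is a routine geometric-series computation.
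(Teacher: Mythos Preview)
Your proof is correct and follows essentially the same approach as the paper: derive the closed form $e_t=\frac{b(1-a^t)}{1-a}$ from the linear recursion with $a=a'(1-b)$, then separately bound $\frac{1-a}{1-a'}$ from above and $a^t$ from above. The paper obtains $\frac{1-a'}{1-a}\geq\frac{2}{3}$ (equivalent to your $\frac{1-a}{2(1-a')}\leq\frac{3}{4}$) and uses the hypothesis $b\leq L/\hat{L}$ to get $a\leq(1-b)^3$, hence $a^t\leq e^{-3}$; you instead bound $a\leq e^{-((1-a')+b)}$ and use $1-a'\geq x\geq b$ to reach $a^t\leq e^{-2}$, which is slightly weaker but still sufficient since $1-e^{-2}>3/4$.
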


\begin{lemma}[Proof in Section~\ref{sec:lemmas_for_thm3}]
\label{lem:chebyshev}
If a random variable $X\geq B>0$ satisfies that $\E[X]\geq\mu B$, $\V[X]\leq(\sigma B)^2$, then
\begin{align}
    \E\left[\frac{1}{X}\right]\leq\frac{1}{\mu B}\left(\frac{4\sigma^2}{\mu}+2\right).
\end{align}
\end{lemma}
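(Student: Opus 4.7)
The plan is to use Chebyshev's inequality together with a threshold split. The intuition is that $\E[1/X]$ can be inflated only by realizations where $X$ is close to the deterministic lower bound $B$, while the variance hypothesis prevents $X$ from straying too far below its mean with nonnegligible probability.

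Concretely, I would introduce the event $\mathcal{A} := \{X \geq \mu B / 2\}$ and split
\[
\E\left[\frac{1}{X}\right] = \E\left[\frac{1}{X}\mathbf{1}_{\mathcal{A}}\right] + \E\left[\frac{1}{X}\mathbf{1}_{\mathcal{A}^c}\right].
\]
On $\mathcal{A}$, the bound $1/X \leq 2/(\mu B)$ is immediate and controls the first term by $2/(\mu B)$. On $\mathcal{A}^c$ we lose the benefit of the mean but still have the deterministic fallback $X \geq B$, giving $1/X \leq 1/B$, so the second term is at most $\Pr(\mathcal{A}^c)/B$.

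To bound $\Pr(\mathcal{A}^c)$, observe that on this event $\E[X] - X \geq \mu B - \mu B/2 = \mu B/2$, hence $\mathcal{A}^c \subseteq \{|X - \E[X]| \geq \mu B/2\}$. Chebyshev's inequality then yields
\[
\Pr(\mathcal{A}^c) \leq \frac{\V[X]}{(\mu B/2)^2} \leq \frac{4\sigma^2}{\mu^2}.
\]
Substituting back gives $\E[1/X] \leq 2/(\mu B) + 4\sigma^2/(\mu^2 B)$, which is precisely the claimed bound after factoring out $1/(\mu B)$.

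There is essentially no serious obstacle here: the argument is a standard truncation. The only design choice is the threshold $\mu B/2$; any threshold of the form $c\mu B$ with $c \in (0,1)$ would work and produce a bound of the same shape with different constants. The choice $c = 1/2$ balances the two contributions and leads to the specific constants $4$ and $2$ appearing in the statement.
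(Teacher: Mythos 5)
Your proof is correct and is essentially the same argument as the paper's: a truncation at half the mean followed by Chebyshev's inequality, with the deterministic floor $X\geq B$ controlling the contribution of the bad event. The only cosmetic difference is that you truncate at $\mu B/2$ while the paper truncates at $\E[X]/2$; since $\E[X]\geq\mu B$, both yield the same constants.
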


Then, we provide the proof of Theorem~\ref{thm:prgf-smooth} in the following.
\begin{theorem_main}[History-PRGF, smooth and convex]
\label{thm_main:prgf-smooth}
In the setting of Theorem~\ref{thm:smooth}, when using the History-PRGF estimator, assuming $d\geq 4$, $\frac{q}{d-1}\leq \frac{L}{\hat{L}} \leq 1$ and $T> \left\lceil\frac{d}{q}\right\rceil$ ($\lceil\cdot\rceil$ denotes the ceiling function), we have
\begin{align}
    \E[f(x_T)] - f(x^*)\leq \left(\frac{32}{q}+2\right)\frac{2L\frac{d}{q} R^2}{T-\left\lceil\frac{d}{q}\right\rceil+1}.
\end{align}
\end{theorem_main}
\begin{proof}
Since $E_0=0\leq C_0$, and if $E_{t-1}\leq C_{t-1}$, then
\begin{align}
    E_t&= a' E_{t-1} + (1-a' E_{t-1})\xi_t^2 \\
    &\leq a' C_{t-1} + (1-a' C_{t-1})\xi_t^2 \\
    &\leq C_t,
\end{align}
in which the first inequality is because $\xi_t^2\leq 1$ and the second inequality is due to Eq.~\eqref{eq:iter}. Therefore by mathematical induction we have that $\forall t, E_t\leq C_t$.

Next, if $d\geq 4$, $\frac{q}{d-1}\leq\frac{L}{\hat{L}}$ and $t\geq \frac{d-1}{q}$, by Lemma~\ref{lem:variance-bound-smooth} and Lemma~\ref{lem:expectation-bound-smooth}, if we set $B=\frac{q}{d-1}$, then $\E[\E_t[E_t]]=\E[E_t]\geq \frac{1}{2}\frac{1}{1-a'}B$, and $\V[\E_t[E_t]]\leq \frac{2}{q}\frac{1}{1-(a')^2}B^2$. Meanwhile, if $t\geq 1$, then $\E_t[E_t]=a'(1-\frac{q}{d-1})E_{t-1}+\frac{q}{d-1}\geq B$. Therefore, by Lemma~\ref{lem:chebyshev} we have
\begin{align}
    \E\left[\frac{1}{\E_t[E_t]}\right]&\leq \frac{1}{\frac{1}{2}\frac{1}{1-a'}\frac{q}{d-1}}\left(\frac{4\frac{2}{q}\frac{1}{1-(a')^2}}{\frac{1}{2}\frac{1}{1-a'}}+2\right) \\
    &=\frac{d-1}{q}(1-a')\left(\frac{32}{q}\frac{1-a'}{1-(a')^2}+2\right) \\
    &\leq\frac{d}{q}(1-a')\left(\frac{32}{q}+2\right) \\
    &=\frac{d}{q}\frac{L}{L'}\left(\frac{32}{q}+2\right).
\end{align}
Since $E_t\leq C_t$, $\E_t[E_t]\leq \E_t[C_t]$. Let $s:=\left\lceil\frac{d}{q}\right\rceil$, then $\forall t\geq s$, $\E\left[\frac{1}{\E_t[C_t]}\right]\leq \E\left[\frac{1}{\E_t[E_t]}\right]\leq \frac{d}{q}\frac{L}{L'}\left(\frac{32}{q}+2\right)$.
Now imagine that we run History-PRGF algorithm with $x_s$ as the random initialization in Algorithm~\ref{alg:greedy-descent}, and set $p_0$ to $v_{s-1}$. Then quantities in iteration $t$ (e.g. $x_t$, $v_t$, $C_t$) in the imaginary setting have the same distribution as quantities in iteration $t+s$ (e.g. $x_{t+s}$, $v_{t+s}$, $C_{t+s}$) in the original algorithm (indeed, the quantities before iteration $t$ in the imaginary setting have the same joint distribution as the quantities from iteration $s$ to iteration $t+s-1$ in the original algorithm), and $\mathcal{F}_{t-1}$ in the imaginary setting corresponds to $\mathcal{F}_{t+s-1}$ in the original algorithm. 
Now we apply Proposition~\ref{prop:smooth-random} to the imaginary setting, and we note that if we set $x_{\mathrm{fix}}$ to the original $x_0$, then the condition in Proposition~\ref{prop:smooth-random} holds (since by Remark~\ref{rem:decreasing}, $f(x_s)\leq f(x_0)$). Since quantities in iteration $t$ in the original algorithm correspond to quantities in iteration $t-s$ in the imaginary setting, Proposition~\ref{prop:smooth-random} tells us that if $T>s$, we have
\begin{align}
    \E[f(x_T)]-f(x^*)\leq \frac{2L' R^2\sum_{t=s}^{T-1}\E\left[\frac{1}{\E_t[C_t]}\right]}{(T-s)(T-s+1)},
\end{align}
so
\begin{align}
    \E[f(x_T)]-f(x^*)\leq \left(\frac{32}{q}+2\right)\frac{2L \frac{d}{q}R^2}{T-\left\lceil\frac{d}{q}\right\rceil+1}.
\end{align}
\end{proof}

\subsubsection{Proofs of Lemma~\ref{lem:variance-bound-smooth}, \ref{lem:expectation-bound-smooth} and \ref{lem:chebyshev}}
\label{sec:lemmas_for_thm3}

In this section, we first present \textbf{Lemma~\ref{lem:numeric}} for the proof of Lemma~\ref{lem:variance-bound-smooth}.

\begin{lemma}
\label{lem:numeric}
Suppose $d\geq 3$, then $\V[\xi_t^2]< \frac{2q}{(d-1)^2}$.
\end{lemma}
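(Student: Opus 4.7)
By Proposition~\ref{prop:equivalence}, the distribution of $\xi_t^2$ equals that of $Y:=\sum_{i=1}^q z_i^2$ where $(z_1,\dots,z_{d-1})^\top\sim \mathcal{U}(\mathbb{S}^{d-2})$. Thus it suffices to compute $\V[Y]$ exactly in closed form and compare it with $2q/(d-1)^2$. The plan is therefore purely a moment calculation on the uniform distribution over the sphere.

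First, I would record the second and fourth moments of the coordinates. Writing $z=g/\|g\|$ with $g\sim \mathcal{N}(0,I_{d-1})$, or equivalently noting that $(z_1^2,\dots,z_{d-1}^2)$ has a Dirichlet$(1/2,\dots,1/2)$ distribution, one obtains by symmetry the standard values
\begin{equation*}
\E[z_i^2]=\frac{1}{d-1},\qquad \E[z_i^4]=\frac{3}{(d-1)(d+1)},\qquad \E[z_i^2 z_j^2]=\frac{1}{(d-1)(d+1)} \ (i\neq j).
\end{equation*}
Each of these is easy to derive: the first follows from $\sum_i \E[z_i^2]=1$ and symmetry; the others from matching $\E[\|z\|^4]=1$ and the known Gaussian ratio identity (or directly from the Dirichlet moment formula $\E[\prod_i z_i^{2a_i}]=\prod\Gamma(1/2+a_i)\cdot\Gamma((d-1)/2)/\bigl(\Gamma(1/2)^{d-1}\Gamma((d-1)/2+\sum a_i)\bigr)$).

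Second, I would expand $\E[Y^2]=\sum_i \E[z_i^4]+\sum_{i\neq j}\E[z_i^2 z_j^2]$, substitute the values above, and subtract $\E[Y]^2=q^2/(d-1)^2$. A short computation gives the exact identity
\begin{equation*}
\V[Y]=\frac{q^2+2q}{(d-1)(d+1)}-\frac{q^2}{(d-1)^2}=\frac{2q(d-1-q)}{(d-1)^2(d+1)}.
\end{equation*}

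Finally, to conclude, observe that since $d\geq 3$ and $q\geq 1$ we have $d-1-q<d+1$ (in fact $d-1-q\le d-2<d+1$), so
\begin{equation*}
\V[Y]=\frac{2q(d-1-q)}{(d-1)^2(d+1)}<\frac{2q}{(d-1)^2},
\end{equation*}
which is the claimed strict inequality. There is no real obstacle here; the only minor care point is ensuring the moment formulas are correct (the factor $3$ in $\E[z_i^4]$ vs.\ $1$ in $\E[z_i^2 z_j^2]$ is crucial), and checking that the final algebraic comparison is strict for all admissible $q$ (including the degenerate corner $q=d-1$, where $\V[Y]=0$ and the bound is trivially satisfied).
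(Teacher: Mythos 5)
Your proof is correct, and it reaches exactly the same closed form as the paper, namely $\V[\xi_t^2]=\frac{2q(d-1-q)}{(d-1)^2(d+1)}$, followed by the same comparison $\frac{d-1-q}{d+1}<1$. The only difference is how the second moment $\E\bigl[(\sum_{i=1}^q z_i^2)^2\bigr]$ is evaluated: the paper writes $z=x/\|x\|$ with $x$ Gaussian and invokes the independence of $\sum_{i=1}^q x_i^2/\|x\|^2$ and $\|x\|^2$ (citing Heijmans) to reduce the computation to a ratio of chi-squared fourth moments, $\frac{q(q+2)}{(d-1)(d+1)}$; you instead use the coordinate moments $\E[z_i^4]=\frac{3}{(d-1)(d+1)}$ and $\E[z_i^2z_j^2]=\frac{1}{(d-1)(d+1)}$ (Dirichlet$(1/2,\dots,1/2)$ moments) and expand the square. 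Your moment values are correct (they sum to $\E[\|z\|^4]=1$ as a sanity check), and your route avoids the external independence theorem at the cost of needing the mixed fourth moment; both are standard and equally rigorous. Your remark about the corner $q=d-1$ and the strictness of the inequality for $q\geq 1$ is a small point the paper glosses over but which indeed holds.
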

\begin{proof}
For convenience, denote $D:=d-1$ in the following. By Proposition~\ref{prop:equivalence}, the distribution of $\xi_t^2$ is the same as the distribution of $\sum_{i=1}^q z_i^2$, where $(z_1,\cdots,z_q)^\top\sim\U(\Sp^{D-1})$.
We note that the distribution of $z$ is the same as the distribution of $\frac{x}{\|x\|}$, where $x\sim\mathcal{N}(0,\mathbf{I})$. Therefore,
\begin{align}
    \E\left[\left(\sum_{i=1}^q z_i^2\right)^2\right]=\E\left[\left(\sum_{i=1}^q \frac{x_i^2}{\|x\|^2}\right)^2\right]=\E\left[\frac{\left(\sum_{i=1}^q x_i^2\right)^2}{\left(\sum_{i=1}^D x_i^2\right)^2}\right].
\end{align}
By Theorem~1 in \cite{heijmans1999does}, $\frac{\sum_{i=1}^q x_i^2}{\|x\|^2}$ and $\|x\|^2$ are independently distributed. Therefore, $\frac{\left(\sum_{i=1}^q x_i^2\right)^2}{\left(\sum_{i=1}^D x_i^2\right)^2}$ and $\left(\sum_{i=1}^D x_i^2\right)^2$ are independently distributed, which implies
\begin{align}
    \E\left[\frac{\left(\sum_{i=1}^q x_i^2\right)^2}{\left(\sum_{i=1}^D x_i^2\right)^2}\right]=\frac{\E\left[\left(\sum_{i=1}^q x_i^2\right)^2\right]}{\E\left[\left(\sum_{i=1}^D x_i^2\right)^2\right]}.
\end{align}
We note that $\sum_{i=1}^q x_i^2$ follows the chi-squared distribution with $q$ degrees of freedom. Therefore, $\E\left[\sum_{i=1}^q x_i^2\right]=q$, and $\V\left[\sum_{i=1}^q x_i^2\right]=2q$. Therefore, $\E\left[\left(\sum_{i=1}^q x_i^2\right)^2\right]=\E\left[\sum_{i=1}^q x_i^2\right]^2+\V\left[\sum_{i=1}^q x_i^2\right]=q(q+2)$. Hence
\begin{align}
    \V\left[\sum_{i=1}^q z_i^2\right]=\E\left[\left(\sum_{i=1}^q z_i^2\right)^2\right]-\E\left[\sum_{i=1}^q z_i^2\right]^2=\frac{q(q+2)}{D(D+2)}-\frac{q^2}{D^2}=\frac{2q(D-q)}{D^2(D+2)}<\frac{2q}{D^2}.
\end{align}
Since $D=d-1$, the proof is complete.
\end{proof}

Then, the detailed proof of \textbf{Lemma~\ref{lem:variance-bound-smooth}} is as follows.

\begin{proof}
By the law of total variance, using Proposition~\ref{prop:expectation} and Lemma~\ref{lem:numeric}, we have
\begin{align}
    \V[E_t]&=\E[\V[E_t|E_{t-1}]]+\V[\E[E_t|E_{t-1}]] \\
    &=\E[(1-a' E_{t-1})^2\V[\xi_t^2]]+\V[a'(1-\E[\xi_t^2])E_{t-1}] \\
    &= \V[\xi_t^2]\E[(1-a' E_{t-1})^2]+(a')^2(1-\E[\xi_t^2])^2\V[E_{t-1}] \\
    &= \V[\xi_t^2](\E[(1-a' E_{t-1})]^2+\V[1-a' E_{t-1}])+(a')^2(1-\E[\xi_t^2])^2\V[E_{t-1}] \\
    &= \V[\xi_t^2](\E[(1-a' E_{t-1})]^2+(a')^2\V[ E_{t-1}])+(a')^2(1-\E[\xi_t^2])^2\V[E_{t-1}] \\
    &= (a')^2(\V[\xi_t^2]+(1-\E[\xi_t^2])^2)\V[E_{t-1}]+\V[\xi_t^2]\E[(1-a' E_{t-1})]^2 \\
    &\leq (a')^2(\V[\xi_t^2]+(1-\E[\xi_t^2])^2)\V[E_{t-1}]+\V[\xi_t^2] \\
    &\leq (a')^2\left(\frac{2q}{(d-1)^2}+\left(1-\frac{q}{d-1}\right)^2\right)\V[E_{t-1}]+\frac{2q}{(d-1)^2}.
\end{align}
If $d\geq 4$, then $\frac{2q}{(d-1)^2}+(1-\frac{q}{d-1})^2=1-\frac{q}{(d-1)^2}(2(d-1)-q-2)\leq 1-\frac{q}{(d-1)^2}(d-q)\leq 1$. Therefore we have
\begin{align}
    \V[E_t]\leq (a')^2\V[E_{t-1}]+\frac{2q}{(d-1)^2}.
\end{align}
Letting $a:=(a')^2, b:=\frac{2q}{(d-1)^2}$, then $\V[E_t]\leq a \V[E_{t-1}]+b$ and $0\leq a<1$. We have $\V[E_t]-\frac{b}{1-a} \leq a(\V[E_{t-1}]-\frac{b}{1-a}) \leq a^2(\V[E_{t-2}]-\frac{b}{1-a})\leq \ldots \leq a^t(\V[E_0]-\frac{b}{1-a})$, hence $\V[E_t]\leq\frac{b}{1-a} - a^t (\frac{b}{1-a} - \V[E_0])=(1-a^t)\frac{b}{1-a}\leq\frac{b}{1-a}=\frac{1}{1-(a')^2}\frac{2q}{(d-1)^2}$.

Finally, since $\V[\E_t[E_t]]=\V[\E[E_t|E_{t-1}]]\leq \V[E_t]$, the proof is completed.
\end{proof}

The detailed proof of \textbf{Lemma~\ref{lem:expectation-bound-smooth}} is as follows.

\begin{proof}
Similar to the proof of Proposition~\ref{prop:bound-expectation}, letting $a:=a'(1-\frac{q}{d-1})$ and $b:=\frac{q}{d-1}$, then $\E[E_t]=(1-a^t)\frac{b}{1-a}$, and $\frac{1-a'}{1-a}\geq \frac{2}{3}$. Meanwhile, since $\frac{q}{d-1}\leq \frac{L}{\hat{L}}$, $a\leq (1-\frac{q}{d-1})^3$. Therefore, if $t\geq \frac{d-1}{q}$, we have
\begin{align}
    a^t\leq (1-\frac{q}{d-1})^{3\frac{d-1}{q}}\leq \exp(-\frac{q}{d-1})^{3\frac{d-1}{q}}=e^{-3}.
\end{align}
Since $\frac{1-a'}{1-a}\geq \frac{2}{3}$ and $a^t\leq e^{-3}$, we have
\begin{align}
    \E[E_t]=(1-a^t)\frac{b}{1-a}\geq \frac{2}{3}(1-e^{-3})\frac{1}{1-a'}\frac{q}{d-1}\geq \frac{1}{2}\frac{1}{1-a'}\frac{q}{d-1}.
\end{align}
\end{proof}

The detailed proof of \textbf{Lemma~\ref{lem:chebyshev}} is as follows.
\begin{proof}
By Chebyshev's Inequality, we have
\begin{align}
    \Pr(X<\frac{1}{2}\E[X])\leq \frac{\V[X]}{(\frac{1}{2}\E[X])^2}\leq\frac{4\sigma^2}{\mu^2}.
\end{align}
Hence
\begin{align}
    \E\left[\frac{1}{X}\right]\leq \frac{1}{B}\Pr\left(X<\frac{1}{2}\E[X]\right)+\frac{1}{\frac{1}{2}\E[X]}\leq \frac{1}{B}\left(\frac{4\sigma^2}{\mu^2}+\frac{2}{\mu}\right)=\frac{1}{\mu B}\left(\frac{4\sigma^2}{\mu}+2\right).
\end{align}
\end{proof}

% \guoqiang{Surprising results}
\subsubsection{Proof of Theorem~\ref{thm:prgf-strong}}
\label{sec:B53}
As in Section~\ref{sec:theorems3-proof}, similar to Eq.~\eqref{eq:iter}, we define $\{E_t\}_{t=0}^{T-1}$ as follows: $E_0=0$, and
\begin{align}
\label{eq:iterE}
    E_t=a' E_{t-1} + (1-a' E_{t-1})\zeta_t^2,
\end{align}
where $\zeta_t^2:=\sum_{i=1}^q \zeta_{ti}^2$, $\zeta_{ti}^2:=\min\{\xi_{ti}^2, \frac{1}{d-1}\}$ and $\xi_{ti}$ is defined in Proposition~\ref{prop:1}. Here we use $\zeta_t^2$ instead of $\xi_t^2$ in Eq.~\eqref{eq:iterE} to obtain a tighter bound when using McDiarmid's inequality in the proof of Lemma~\ref{lem:prob-bound}.

Here we first give Lemma~\ref{lem:prob-bound} for the proof of Theorem~\ref{thm:prgf-strong}.
\begin{lemma}[Proof in Section~\ref{sec:lemmas_for_thm4}]
\label{lem:prob-bound}
$\Pr\left(\frac{1}{T}\sum_{t=0}^{T-1}\E_t[E_t]< 0.1\frac{q}{d}\frac{1}{1-a'}\right)\leq \exp(-0.02T)$.
\end{lemma}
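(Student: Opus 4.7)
The plan is to apply McDiarmid's inequality to the random variable $S_T := \frac{1}{T}\sum_{t=0}^{T-1}\E_t[E_t]$, which I will view as a deterministic function of the iid random variables $\zeta_1^2,\ldots,\zeta_{T-1}^2$. Their independence across $t$ follows from Proposition~\ref{prop:equivalence} (which shows each $\xi_t^2$, hence each $\zeta_t^2$, is distributionally independent of the history), and from the recurrence in Eq.~\eqref{eq:iterE} one sees that $E_{t-1}$—and therefore $\E_t[E_t] = (1-\bar\zeta)a' E_{t-1} + \bar\zeta$ with $\bar\zeta := \E[\zeta_t^2]$—is a measurable function of $\zeta_1^2,\ldots,\zeta_{t-1}^2$ only. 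The argument then decomposes into three steps: lower-bound $\E[S_T]$, bound the bounded-differences constants, and concentrate.

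For the expectation bound I would first show $\bar\zeta \geq c_0\,q/(d-1)$ for an explicit constant $c_0 \in (0,1)$, by computing $\E[\min(\xi_{ti}^2, 1/(d-1))]$ using the Beta$(1/2,(d-2)/2)$ marginal of a uniform random unit vector on $\Sp^{d-2}$ and checking that truncation erases only a constant fraction of the mass. Taking expectations on both sides of Eq.~\eqref{eq:iterE} yields the scalar recurrence $\E[E_t] = (1-\bar\zeta)a'\E[E_{t-1}] + \bar\zeta$ (with $E_0 = 0$), so $\E[E_t]$ converges geometrically at rate $(1-\bar\zeta)a' \leq 1 - q/(d-1)$ to $E^\star := \bar\zeta/(1-(1-\bar\zeta)a')$. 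Under the hypotheses of Theorem~\ref{thm:prgf-strong} we have $\bar\zeta \leq q/(d-1)\leq L/\hat L \leq 1-a'$, hence $E^\star = \Theta(\bar\zeta/(1-a')) = \Omega(q/(d(1-a')))$; combining with the burn-in time $\Theta(d/q)$ and the hypothesis $T\geq 5 d/q$, the Cesàro average $\E[S_T]$ exceeds some concrete constant times $q/(d(1-a'))$, comfortably above the target $0.1\,q/(d(1-a'))$.

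For the Lipschitz bound I would differentiate $E_t = a'(1-\zeta_t^2)E_{t-1} + \zeta_t^2$ to obtain, via the chain rule, $|\partial E_t/\partial \zeta_s^2| \leq (a')^{t-s}\prod_{r=s+1}^{t}(1-\zeta_r^2) \leq (a')^{t-s}$ for $s < t$, and hence $|\partial \E_t[E_t]/\partial \zeta_s^2|\leq (a')^{t-s}$. Since $\zeta_s^2 \in [0, q/(d-1)]$ by the truncation, the McDiarmid constant satisfies
\[
c_s \;\leq\; \frac{q/(d-1)}{T}\sum_{t>s}(a')^{t-s} \;\leq\; \frac{q/(d-1)}{T(1-a')},
\qquad
\sum_{s} c_s^2 \;\leq\; \frac{(q/(d-1))^2}{T(1-a')^2}.
\]
Applying McDiarmid's inequality with the gap $\Delta := \E[S_T] - 0.1\,q/(d(1-a')) = \Omega(q/(d(1-a')))$ as the one-sided deviation, the ratio $2\Delta^2/\sum_s c_s^2$ becomes $\Theta(T)$, yielding a bound of the form $\exp(-c_1 T)$; tightening all constants is then supposed to give $c_1 = 0.02$ as claimed.

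The main obstacle will be the bookkeeping of constants: the threshold $0.1\,q/(d(1-a'))$ and the exponent $0.02$ have to fall out simultaneously from the McDiarmid calculation, which forces a fairly tight lower bound on $\bar\zeta$ and a careful handling of the geometric burn-in via $T\geq 5d/q$. In particular, the truncation $\zeta_{ti}^2 := \min(\xi_{ti}^2, 1/(d-1))$ is crucial here: using $\xi_t^2$ directly would allow range $[0,1]$ in each coordinate, producing an exponent of order $(q/d)^2 T$ rather than $T$, so replacing $\xi_t^2$ by $\zeta_t^2$ is precisely what keeps the concentration rate independent of $d$ and $q$.
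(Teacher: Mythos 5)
Your proposal follows the paper's proof essentially step for step: McDiarmid's inequality applied to $\frac{1}{T}\sum_{t}\E_t[E_t]$ viewed as a function of the independent truncated variables $\zeta_t^2$, the same geometric bounded-difference constant $\frac{1}{T}\frac{1}{1-a'}\frac{q}{d-1}$, and the same expectation lower bound obtained from $\E[\zeta_t^2]\gtrsim \frac{q}{d-1}$, the scalar recurrence for $\E[E_t]$, and the burn-in absorbed via $T\geq 5\frac{d}{q}$ (the paper's Lemmas~\ref{lem:expectation-bound} and~\ref{lem:total_expectation_bound} supply the explicit constants $0.3$ and $0.2$ that make $\epsilon=0.1\frac{q/(d-1)}{1-a'}$ and the exponent $0.02T$ work out). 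Your closing remark on why the truncation $\zeta_{ti}^2=\min\{\xi_{ti}^2,\frac{1}{d-1}\}$ is essential is exactly the paper's stated motivation for introducing $\zeta_t^2$.
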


Then, we provide the proof of Theorem~\ref{thm:prgf-strong} in the following.
\begin{theorem_main}[History-PRGF, smooth and strongly convex]
\label{thm_main:prgf-strong}
Under the same conditions as in Theorem~\ref{thm:strong} ($f$ is $\tau$-strongly convex), when using the History-PRGF estimator, assuming $d\geq 4$, $\frac{q}{d-1}\leq \frac{L}{\hat{L}} \leq 1$, $\frac{q}{d}\leq 0.2\frac{L}{\tau}$, and $T\geq 5\frac{d}{q}$, we have
\begin{align}
\label{eq_main:prgf-strong}
    \E[\delta_T]\leq 2\exp(-0.1\frac{q}{d}\frac{\tau}{L}T)\delta_0.
\end{align}
\end{theorem_main}
\begin{proof}
Since $E_0=0\leq C_0$, and if $E_{t-1}\leq C_{t-1}$, then
\begin{align}
    E_t&=a' E_{t-1} + (1-a' E_{t-1})\sum_{i=1}^q \zeta_{ti}^2 \\
    &\leq a' E_{t-1} + (1-a' E_{t-1})\sum_{i=1}^q \xi_{ti}^2 \\
    &= a' E_{t-1} + (1-a' E_{t-1})\xi_t^2 \\
    &\leq a' C_{t-1} + (1-a' C_{t-1})\xi_t^2 \\
    &\leq C_t,
\end{align}
in which the second inequality is due to that $\xi_t^2\leq 1$ and the third inequality is due to Eq.~\eqref{eq:iter}. Therefore by mathematical induction we have that $\forall t, E_t\leq C_t$.

% First, we have the following property.

Then, since $\forall t$, $E_t\leq C_t$, we have $\frac{1}{T}\sum_{t=0}^{T-1}\E_t[E_t]\leq \frac{1}{T}\sum_{t=0}^{T-1}\E_t[C_t]$. Therefore, by Lemma~\ref{lem:prob-bound} we have $\Pr\left(\frac{1}{T}\sum_{t=0}^{T-1}\E_t[C_t]< 0.1\frac{q}{d}\frac{1}{1-a'}\right)\leq \exp(-0.02T)$. Let $k_T=\exp\left(\frac{\tau}{L'}\sum_{t=0}^{T-1}\E_t[C_t]\right)$. Since $\frac{1}{1-a'}=\frac{L'}{L}$,
\begin{align}
    \Pr\left(k_T<\exp\left(0.1\frac{q}{d}\frac{\tau}{L}T\right)\right)\leq\exp(-0.02T).
\end{align}
Meanwhile, let $\delta_t:=f(x_t)-f(x^*)$, Theorem~\ref{thm:strong} tells us that
\begin{align}
    \E[\delta_T k_T]\leq\delta_0.
\end{align}
Noting that $\delta_0\geq \delta_T$, we have 
% \guoqiang{check?}
\begin{align}
    \delta_0&\geq \E[\delta_T k_T] \\
    &\geq \E[\delta_T k_T 1_{k_T\geq\exp\left(0.1\frac{q}{d}\frac{\tau}{L}T\right)}] \\
    &\geq \exp\left(0.1\frac{q}{d}\frac{\tau}{L}T\right)\E[\delta_T  1_{k_T\geq\exp\left(0.1\frac{q}{d}\frac{\tau}{L}T\right)}] \\
    &= \exp\left(0.1\frac{q}{d}\frac{\tau}{L}T\right)(\E[\delta_T]-\E[\delta_T  1_{k_T<\exp\left(0.1\frac{q}{d}\frac{\tau}{L}T\right)}]) \\
    &\geq \exp\left(0.1\frac{q}{d}\frac{\tau}{L}T\right)(\E[\delta_T]-\delta_0\E[1_{k_T<\exp\left(0.1\frac{q}{d}\frac{\tau}{L}T\right)}])\\
    &=\exp\left(0.1\frac{q}{d}\frac{\tau}{L}T\right)\left(\E[\delta_T]-\delta_0\Pr\left(k_T<\exp\left(0.1\frac{q}{d}\frac{\tau}{L}T\right)\right)\right)\\
    &\geq \exp\left(0.1\frac{q}{d}\frac{\tau}{L}T\right)(\E[\delta_T]-\delta_0 \exp(-0.02T)),
\end{align}
in which $1_B$ denotes the indicator function of the event $B$ ($1_B=1$ when $B$ happens and $1_B=0$ when $B$ does not happen). By rearranging we have
\begin{align}
    \E[\delta_T]\leq \exp\left(-0.1\frac{q}{d}\frac{\tau}{L}T\right)\delta_0+\exp(-0.02T)\delta_0.
\end{align}
When $\frac{q}{d}\leq 0.2\frac{L}{\tau}$, $0.1\frac{q}{d}\frac{\tau}{L}T\leq 0.02T$, and hence $\exp\left(-0.1\frac{q}{d}\frac{\tau}{L}T\right)\geq \exp(-0.02T)$. Therefore, $\E[\delta_T]\leq 2\exp\left(-0.1\frac{q}{d}\frac{\tau}{L}T\right)\delta_0$. 
% \shuyu{Can also modify the condition to $\frac{q}{d}\frac{\tau}{L}\leq 0.2$, or write the theorem using a $\min\{\ldots, \ldots\}$.} 
The proof of the theorem is completed.
\end{proof}

\subsubsection{Proof of Lemma~\ref{lem:prob-bound}}
\label{sec:lemmas_for_thm4}

In this section, we first give two lemmas for the proof of Lemma~\ref{lem:prob-bound}.
\begin{lemma}
\label{lem:expectation-bound}
If $d\geq 4$, then $\E[\zeta_t^2]\geq0.3\frac{q}{d-1}$.
\end{lemma}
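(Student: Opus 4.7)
The plan is to reduce the statement to a one-dimensional tail bound for the marginal distribution of $\xi_{ti}^2$, and then establish that tail bound via Jensen's inequality applied to a convex function.

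First, by the exchangeability of $\{u_1,\dots,u_q\}$ (they arise from Gram-Schmidt applied to i.i.d.\ samples on the unit sphere of $H$), the variables $\xi_{t1}^2,\dots,\xi_{tq}^2$ are identically distributed, and therefore so are $\zeta_{t1}^2,\dots,\zeta_{tq}^2$. Hence $\E[\zeta_t^2]=q\,\E[\zeta_{t1}^2]$, and it suffices to prove $\E[\zeta_{t1}^2]\geq 0.3/(d-1)$. Since $\zeta_{t1}^2=\min(\xi_{t1}^2,1/(d-1))\geq \tfrac{1}{d-1}\mathbf{1}[\xi_{t1}^2\geq 1/(d-1)]$, the problem further reduces to showing the tail bound $\Pr(\xi_{t1}^2\geq 1/(d-1))\geq 0.3$.

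Next, by Proposition~\ref{prop:equivalence} together with the Gaussian normalization $z_i\stackrel{d}{=}W_i/\|W\|$ with $W\sim\mathcal{N}(0,I_{d-1})$, we have $\xi_{t1}^2 \stackrel{d}{=} W_1^2/(W_1^2+\dots+W_{d-1}^2)$. A direct rearrangement shows that the event $\xi_{t1}^2\geq 1/(d-1)$ is equivalent to $W_1^2\geq V$, where $V:=(W_2^2+\dots+W_{d-1}^2)/(d-2)$ is independent of $W_1$ and satisfies $\E[V]=1$. Conditioning on $V$ and then taking the outer expectation gives
\begin{equation*}
\Pr\!\left(\xi_{t1}^2\geq \tfrac{1}{d-1}\right)=\E[g(V)],\qquad g(v):=\Pr(W_1^2\geq v)=2\Phi(-\sqrt{v}),
\end{equation*}
for $v>0$, where $\Phi$ is the standard normal CDF.

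The final step is to apply Jensen's inequality to the map $g$. A routine calculation gives $g'(v)=-\phi(\sqrt{v})/\sqrt{v}$ and then $g''(v)=\phi(\sqrt{v})(v+1)/(2v^{3/2})$, which is strictly positive for all $v>0$. Hence $g$ is convex on $(0,\infty)$, so $\E[g(V)]\geq g(\E[V])=g(1)=2\Phi(-1)$. Since $\Phi(1)<0.85$, we obtain $2\Phi(-1)>0.3$, which yields the desired bound $\E[\zeta_{t1}^2]\geq 0.3/(d-1)$ and hence $\E[\zeta_t^2]\geq 0.3q/(d-1)$.

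The main technical step is the convexity verification for $g$: once that is in hand, Jensen immediately transfers the problem to evaluating $g$ at the mean of $V$, which is exactly the interpretable quantity $2\Phi(-1)\approx 0.3173$. Note that nothing in this argument uses $d\geq 4$ beyond ensuring $V$ is well-defined ($d-2\geq 1$); the hypothesis $d\geq 4$ is inherited from surrounding lemmas.
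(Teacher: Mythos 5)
Your proof is correct, but it takes a genuinely different route from the paper's. The paper works directly with the marginal density of one coordinate $z_1$ of a uniform point on $\Sp^{d-2}$: it bounds the density at the origin via a Gamma-function ratio inequality, deduces the two tail bounds $\Pr(z_1^2\geq \tfrac{1}{d-1})\geq 0.2$ and $\Pr(z_1^2\geq \tfrac{0.25}{d-1})\geq 0.6$, and combines them in a two-level layer-cake estimate to get $\E[\min\{z_1^2,\tfrac{1}{d-1}\}]\geq \tfrac{0.75\cdot 0.2+0.25\cdot 0.6}{d-1}=\tfrac{0.3}{d-1}$. You instead keep only the top layer, $\min\{\xi_{t1}^2,\tfrac{1}{d-1}\}\geq \tfrac{1}{d-1}\mathbf{1}[\xi_{t1}^2\geq\tfrac{1}{d-1}]$, and compensate with a sharper single tail bound: via the Gaussian representation the event becomes $W_1^2\geq V$ with $V$ independent of $W_1$ and $\E[V]=1$, and Jensen applied to the convex map $v\mapsto 2\Phi(-\sqrt{v})$ gives $\Pr(\xi_{t1}^2\geq\tfrac{1}{d-1})\geq 2\Phi(-1)\approx 0.3173$. (Your convexity computation and the reduction to identically distributed marginals of the $\xi_{ti}^2$ both check out; the latter is the same fact the paper uses to prove $\E[\xi_t^2]=\tfrac{q}{d-1}$.) What each buys: your argument is shorter, avoids the external Gamma-ratio bound of \cite{mortici2010new}, yields a cleaner dimension-free constant that is asymptotically the right one, and in fact only needs $d\geq 3$; the paper's argument is more elementary in that it never invokes Jensen on a nontrivial convex function, at the cost of the density bound and the two-level splitting.
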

\begin{proof}
We note that the distribution of $u_i$ is the uniform distribution from the unit sphere in the $(d-1)$-dimensional subspace $A$. Since $\xi_{ti}:=\overline{\nabla f(x_t)_A}^\top u_i$, $\xi_{ti}$ is indeed the inner product between one fixed unit vector and one uniformly random sampled unit vector. Therefore, its distribution is the same as $z_1$, where $(z_1, z_2, \ldots, z_{d-1})$ are uniformly sampled from $\mathbb{S}^{d-2}$, i.e. the unit sphere in $\mathbb{R}^{d-1}$. Now it suffices to prove that $\E[\min\{z_1^2,\frac{1}{d-1}\}]\geq\frac{0.3}{d-1}$.

Let $p(\cdot)$ denote the probability density function of $z_1$. For convenience let $D:=d-1$. We have $p(0)=\frac{S_{D-2}}{S_{D-1}}$, where $S_{D-1}$ denotes the surface area of $\mathbb{S}_{D-1}$. Since $S_{D-1}=\frac{2\pi^\frac{D}{2}}{\Gamma(\frac{D}{2})}$ where $\Gamma(\cdot)$ is the Gamma function, and by \cite{mortici2010new} we have $\frac{\Gamma(\frac{D}{2})}{\Gamma(\frac{D-1}{2})}\leq \sqrt{\frac{D-1}{2}}$, we have $p(0)\leq \sqrt{\frac{D-1}{2\pi}}\leq \sqrt{\frac{d-1}{2\pi}}$.
Meanwhile, we have $p(x)=p(0)\cdot \frac{(\sqrt{1-x^2})^{D-2}}{\sqrt{1-x^2}}=p(0)\cdot (\sqrt{1-x^2})^{D-3}$.
If $d\geq 4$, then $D\geq 3$, and we have $\forall x\in[-1,1], p(0)\geq p(x)$. Therefore, 
\begin{align}
    \Pr\left(z_1^2\geq \frac{1}{d-1}\right)=1-\Pr\left(|z_1|< \frac{1}{\sqrt{d-1}}\right)\geq 1-\frac{2}{\sqrt{d-1}}p(0)=1-\sqrt{\frac{2}{\pi}}\geq 0.2.
\end{align}
Similarly we have
\begin{align}
    \Pr\left(z_1^2\geq \frac{0.25}{d-1}\right)=1-\Pr\left(|z_1|< \frac{0.5}{\sqrt{d-1}}\right)\geq 1-\frac{1}{\sqrt{d-1}}p(0)=1-\sqrt{\frac{1}{2\pi}}\geq 0.6.
\end{align}
Let ${z'_1}^2:=\min\{z_1^2,\frac{1}{d-1}\}$. Then $\Pr\left({z'_1}^2\geq \frac{1}{d-1}\right)\geq 0.2$ and $\Pr\left({z'_1}^2\geq \frac{0.25}{d-1}\right)\geq 0.6$. Then
\begin{align}
    \E[{z'_1}^2]&\geq \frac{1}{d-1}\Pr\left({z'_1}^2\geq \frac{1}{d-1}\right)+\frac{0.25}{d-1}\Pr\left(\frac{1}{d-1}\geq {z'_1}^2\geq \frac{0.25}{d-1}\right) \\
    &=\frac{0.75}{d-1}\Pr\left({z'_1}^2\geq \frac{1}{d-1}\right)+\frac{0.25}{d-1}\Pr\left({z'_1}^2\geq \frac{0.25}{d-1}\right) \\
    &\geq \frac{0.3}{d-1}.
\end{align}
Hence $\E[\zeta_{ti}^2]\geq \frac{0.3}{d-1}$. By the definition of $\zeta_t^2$ the lemma is proved.
\end{proof}

\begin{lemma}
\label{lem:total_expectation_bound}
If $\frac{q}{d-1}\leq \frac{L}{\hat{L}}$, $T\geq 5\frac{d}{q}$, then $\frac{1}{T}\sum_{t=0}^{T-1}\E[E_t]\geq 0.2\frac{\frac{q}{d-1}}{1-a'}$.
\end{lemma}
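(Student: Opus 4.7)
The plan is to take expectations in the recurrence~\eqref{eq:iterE}, solve for $\E[E_t]$ in closed form, and then bound the Ces\`aro average carefully under the hypotheses $q/(d-1)\le L/\hat L$ and $T\ge 5d/q$.

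First I would set $c:=\E[\zeta_t^2]$ and record that (i) $c$ does not depend on $t$, because $\zeta_t^2$ is a deterministic function of $u_1,\dots,u_q$ drawn in step $t$, whose distribution is independent of the history (the remark following Proposition~\ref{prop:equivalence}); (ii) by Lemma~\ref{lem:expectation-bound}, $c\ge 0.3\,q/(d-1)$, while trivially $c\le \E[\xi_t^2]=q/(d-1)$ by Proposition~\ref{prop:expectation}. Taking expectation in~\eqref{eq:iterE} and using that $\zeta_t^2$ is independent of $E_{t-1}$ gives the affine recursion $e_t=\rho\,e_{t-1}+c$, where $e_t:=\E[E_t]$, $\rho:=a'(1-c)$, and $e_0=0$. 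This solves to $e_t=c(1-\rho^t)/(1-\rho)$, and hence
\[
\frac{1}{T}\sum_{t=0}^{T-1}\E[E_t]=\frac{c}{1-\rho}\left(1-\frac{1-\rho^T}{T(1-\rho)}\right)\ \ge\ \frac{c}{1-\rho}\left(1-\frac{1}{T(1-\rho)}\right).
\]

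It remains to lower-bound each factor. Set $b:=q/(d-1)$, $\alpha:=c/b\in[0.3,1]$, and $\mu:=L/\hat L\in(0,1]$, so $a'=(1-\mu)^2$ and $1-a'=\mu(2-\mu)$. The hypothesis $q/(d-1)\le L/\hat L$ reads $b\le\mu$, which combined with $\mu\le 1$ yields $b\le\mu\le 1-a'$. For the transient factor I use $1-\rho\ge 1-a'\ge\mu\ge b$, so that $T(1-\rho)\ge Tb\ge 5d/(d-1)\ge 5$ by the hypothesis $T\ge 5d/q$; therefore $1-1/(T(1-\rho))\ge 4/5$.

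The key step, and the main obstacle, is getting a sharp enough lower bound on $c/(1-\rho)$ to beat the constant $0.2$. Writing $c/(1-\rho)=\alpha b/((1-a')+a'c)$ and dividing through by $1-a'$ reduces the task to upper-bounding $a'c/(1-a')$. With $c=\alpha b$ and $b\le\mu$ this becomes $a'c/(1-a')\le \alpha\,(1-\mu)^2/(2-\mu)$, and a short monotonicity check on $v\mapsto v^2/(1+v)$ with $v=1-\mu\in[0,1]$ gives $(1-\mu)^2/(2-\mu)\le 1/2$ on the relevant range. Thus
\[
\frac{c}{1-\rho}\ \ge\ \frac{\alpha}{1+\alpha/2}\cdot\frac{b}{1-a'}\ =\ \frac{2\alpha}{2+\alpha}\cdot\frac{b}{1-a'}\ \ge\ \frac{0.6}{2.3}\cdot\frac{b}{1-a'}\ \ge\ 0.26\,\frac{b}{1-a'},
\]
using that $2\alpha/(2+\alpha)$ is increasing in $\alpha$ and $\alpha\ge 0.3$. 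Combining with the transient estimate yields $\tfrac{1}{T}\sum_{t=0}^{T-1}\E[E_t]\ge(4/5)(0.26)\,b/(1-a')>0.2\,b/(1-a')$, as required. The delicate point is precisely the inequality $(1-\mu)^2/(2-\mu)\le 1/2$: a cruder estimate such as $a'c/(1-a')\le 1$ would only give roughly $0.12\,b/(1-a')$, below the target constant $0.2$.
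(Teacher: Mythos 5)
Your proposal is correct and follows essentially the same route as the paper's proof: take expectations in the recursion~\eqref{eq:iterE} to get an affine recurrence for $\E[E_t]$, bound the Ces\`aro average by $\frac{c}{1-\rho}\left(1-\frac{1}{T(1-\rho)}\right)$, show the transient factor is at least $4/5$ from $T\geq 5\frac{d}{q}$, and control $\frac{c}{1-\rho}$ via the same inequality $(1-\mu)^2/(2-\mu)\leq\frac12$ that underlies the paper's constant $\frac{2}{2.3}$ (the paper substitutes the lower bound $0.3\frac{q}{d-1}$ for $c$ up front rather than carrying $\alpha=c/b$, but the resulting constants, $\frac{2.4}{11.5}$ versus your $0.26\cdot\frac45$, coincide). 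No gaps.
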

\begin{proof}
By Eq.~\eqref{eq:iterE} and Lemma~\ref{lem:expectation-bound}, we have $\E_t[E_t]\geq \left(1-0.3\frac{q}{d-1}\right)a' E_{t-1} + 0.3\frac{q}{d-1}$.
Taking expectation to both sides, we have
\begin{align}
    \E[E_t]\geq \left(1-0.3\frac{q}{d-1}\right)a' \E[E_{t-1}] + 0.3\frac{q}{d-1}.
\end{align}
Let $a:=\left(1-0.3\frac{q}{d-1}\right)a', b:=0.3\frac{q}{d-1}$, then $\E[E_t]\geq a \E[E_{t-1}]+b$ and $0\leq a<1$. We have $\E[E_t]-\frac{b}{1-a} \geq a(\E[E_{t-1}]-\frac{b}{1-a}) \geq a^2(\E[E_{t-2}]-\frac{b}{1-a})\geq \ldots \geq a^t(\E[E_0]-\frac{b}{1-a})$, hence $\E[E_t]\geq\frac{b}{1-a} - a^t (\frac{b}{1-a} - \E[E_0])=(1-a^t)\frac{b}{1-a}$. Hence we have
\begin{align}
\label{eq:lower-bound-of-Et}
    \frac{1}{T}\sum_{t=0}^{T-1} \E[E_t] \geq \frac{b}{1-a} \left(1-\frac{1-a^T}{(1-a)T}\right) \geq \frac{b}{1-a} \left(1-\frac{1}{(1-a)T}\right).
\end{align}
Since $1-a=1-(1-0.3\frac{q}{d-1})(1-\frac{L}{\hat{L}})^2\leq 1-(1-0.3\frac{L}{\hat{L}})(1-\frac{L}{\hat{L}})^2$, noting that
\begin{align}
    \frac{1-(1-\frac{L}{\hat{L}})^2}{1-(1-0.3\frac{L}{\hat{L}})(1-\frac{L}{\hat{L}})^2}=\frac{\frac{L}{\hat{L}}+\frac{L}{\hat{L}}(1-\frac{L}{\hat{L}})}{\frac{L}{\hat{L}}+\frac{L}{\hat{L}}(1-\frac{L}{\hat{L}})+0.3\frac{L}{\hat{L}}(1-\frac{L}{\hat{L}})^2}\geq \frac{2}{2.3},
\end{align}
we have $\frac{1-a'}{1-a}\geq \frac{2}{2.3}$. Letting $T\geq 5\frac{d}{q}$, then $T\geq 5\frac{1}{\frac{q}{d}}\geq 5\frac{1}{\frac{L}{\hat{L}}}\geq 5\frac{1}{1-\sqrt{a'}}\geq 5\frac{1}{1-a}$. By Eq.~\eqref{eq:lower-bound-of-Et} we have
\begin{align}
    \frac{1}{T}\sum_{t=0}^{T-1} \E[E_t] \geq \frac{2}{2.3}\frac{b}{1-a'}\frac{4}{5} = \frac{2.4}{11.5}\frac{\frac{q}{d-1}}{1-a'}\geq 0.2 \frac{\frac{q}{d-1}}{1-a'}.
\end{align}
\end{proof}

Finally, the detailed proof of \textbf{Lemma~\ref{lem:prob-bound}} is as follows.

\begin{proof}
Let $\overline{E}:=\frac{1}{T}\sum_{t=0}^{T-1}\E_t[E_t]$. We note that $\{\zeta_1^2, \zeta_2^2, \ldots, \zeta_{T-1}^2\}$ are independent, and $\overline{E}$ is a function of them. Now suppose that the value of $\zeta_t^2$ is changed by $\Delta \zeta_t^2$, while the value of $\{\zeta_1^2, \ldots, \zeta_{t-1}^2, \zeta_{t+1}^2, \ldots, \zeta_{T-1}^2\}$ are unchanged. Then
\begin{align}
\Delta E_s&=0, &0\leq s \leq t-1; \\
\Delta E_s&=(1-a' E_{t-1})\Delta \zeta_t^2\leq \Delta \zeta_t^2, &s=t; \\
\Delta E_s&=(1-\zeta_s^2)a'\Delta E_{s-1}\leq a'\Delta E_{s-1}, &s\geq t+1.
\end{align}
Therefore, for $s\geq t$, $\Delta E_s\leq (a')^{s-t}\Delta E_t\leq (a')^{s-t}\Delta \zeta_t^2$; for $s<t$, $\Delta E_s=0$. By Eq.~\eqref{eq:iterE}, $\E_s[E_s]=a'(1-\E[\zeta_s^2]) E_{s-1}+\E[\zeta_s^2]$, so $\Delta \E_s[E_s]\leq a' \Delta E_{s-1}\leq \Delta E_{s-1}$. Hence
\begin{align}
    \Delta \overline{E}=\frac{1}{T}\sum_{s=0}^{T-1}\Delta \E_s[E_s]\leq \frac{1}{T}\sum_{s=t+1}^{T-1}(a')^{s-1-t}\Delta \zeta_t^2\leq \frac{1}{T}\frac{1}{1-a'}\Delta \zeta_t^2.
\end{align}
Since $\zeta_{ti}^2:=\min\{\xi_{ti}^2, \frac{1}{d-1}\}$, $0\leq \zeta_t^2\leq \frac{q}{d-1}$. Therefore $\Delta \overline{E}\leq \frac{1}{T}\frac{1}{1-a'}\frac{q}{d-1}$. Therefore, by McDiarmid's inequality, we have
\begin{align}
    \Pr(\overline{E}<\E[\overline{E}]-\epsilon)\leq \exp\left(-\frac{2\epsilon^2}{T\left(\frac{1}{T}\frac{1}{1-a'}\frac{q}{d-1}\right)^2}\right)\leq \exp\left(-2T\left(\epsilon(1-a')\frac{d-1}{q}\right)^2\right).
\end{align}
Let $\epsilon=0.1\frac{\frac{q}{d-1}}{1-a'}$, we have $\Pr(\overline{E}<\E[\overline{E}]-0.1\frac{\frac{q}{d-1}}{1-a'})\leq \exp(-0.02T)$. By Lemma~\ref{lem:total_expectation_bound}, $\E[\overline{E}]\geq 0.2\frac{\frac{q}{d-1}}{1-a'}$. Noting that $\frac{q}{d}\leq \frac{q}{d-1}$, the proof is completed.
\end{proof}

\section{Supplemental materials for Section~\ref{sec:4}}
\label{sec:C}
\subsection{Proof of Theorem~\ref{thm:nag}}
\label{sec:C1}
% \guoqiang{check??}
\begin{theorem_main}
\label{thm_main:nag}
In Algorithm~\ref{alg:nag-extended}, if $\theta_t$ is $\mathcal{F}_{t-1}$-measurable, we have
\begin{align}
\label{eq_main:theorem-nag}
    \E\left[(f(x_T)-f(x^*))\left(1+\frac{\sqrt{\gamma_0}}{2}\sum_{t=0}^{T-1}\sqrt{\theta_t}\right)^2\right]\leq f(x_0)-f(x^*)+\frac{\gamma_0}{2}\|x_0-x^*\|^2.
\end{align}
\end{theorem_main}
To help understand the design of Algorithm~\ref{alg:nag-extended}, we present the proof sketch below, where the part which is the same as the original proof in \cite{nesterov2017random} is omitted.
\begin{proof}[Sketch of the proof]
Since $x_{t+1}= y_t - \frac{1}{\hat{L}} g_1(y_t)$ and $g_1(y_t)= \nabla f(y_t)^\top v_t \cdot v_t$, by Lemma~\ref{lem:single-progress},
\begin{align}
\label{eq:single-process-nag}
    \E_t[f(x_{t+1})]\leq f(y_t) - \frac{\E_t\left[\left(\nabla f(y_t)^\top v_t\right)^2\right]}{2L'} &\leq f(y_t) - \frac{\E_t\left[\left(\nabla f(y_t)^\top v_t\right)^2\right]}{2\hat{L}}.
\end{align}
Define $\rho_t:=\frac{\gamma_t}{2}\|m_t-x^*\|^2 + f(x_t)-f(x^*)$. The same as in original proof, we have
\begin{align}
\label{eq:before_expectation}
    \rho_{t+1} &= \frac{\gamma_{t+1}}{2}\|m_t-x^*\|^2 - \alpha_t g_2(y_t)^\top (m_t-x^*) + \frac{\theta_t}{2}\|g_2(y_t)\|^2 + f(x_{t+1}) - f(x^*).
\end{align}
Then we derive $\E_t[\rho_{t+1}]$. We mentioned in Remark~\ref{rem:dependence-history} that the notation $\E_t[\cdot]$ means the conditional expectation $\E[\cdot|\mathcal{F}_{t-1}]$, where $\mathcal{F}_{t-1}$ is a sub $\sigma$-algebra modelling the historical information, and we require that $\mathcal{F}_{t-1}$ includes all the randomness before iteration $t$. Therefore, $\gamma_t$ and $m_t$ are $\mathcal{F}_{t-1}$-measurable. The assumption in Theorem~\ref{thm_main:nag} requires that $\theta_t$ is $\mathcal{F}_{t-1}$-measurable. Since $\alpha_t$ is determined by $\gamma_t$ and $\theta_t$ (through a Borel function), $\alpha_t$ is also $\mathcal{F}_{t-1}$-measurable. We have
\begin{align}
    &\ \ \ \ \ \E_t[\rho_{t+1}] \nonumber \\
    &= \frac{\gamma_{t+1}}{2}\|m_t-x^*\|^2 - \alpha_t \E_t[g_2(y_t)]^\top (m_t-x^*) + \frac{\theta_t}{2}\E_t[\|g_2(y_t)\|^2] + \E_t[f(x_{t+1})] - f(x^*) \label{eq:key-in-pars-proof-measurable} \\
    &= \frac{\gamma_{t+1}}{2}\|m_t-x^*\|^2 - \alpha_t \nabla f(y_t)^\top (m_t-x^*) + \frac{\theta_t}{2}\E_t[\|g_2(y_t)\|^2] + \E_t[f(x_{t+1})] - f(x^*) \label{eq:key-in-pars-proof-2} \\
    &\leq \frac{\gamma_{t+1}}{2}\|m_t-x^*\|^2 - \alpha_t \nabla f(y_t)^\top (m_t-x^*) + \frac{\E_t\left[\left(\nabla f(y_t)^\top v_t\right)^2\right]}{2\hat{L}} + \E_t[f(x_{t+1})] - f(x^*) \label{eq:key-in-pars-proof-1} \\
    &\leq \frac{\gamma_{t+1}}{2}\|m_t-x^*\|^2 - \alpha_t \nabla f(y_t)^\top (m_t-x^*) + f(y_t) - f(x^*) \label{eq:key-in-pars-proof-3} \\
    &\leq (1-\alpha_t) \rho_t,
\end{align}
where the first equality is because $m_t$, $\alpha_t$ and $\theta_t$ are $\mathcal{F}_{t-1}$-measurable, the second equality is because $\E_t[g_2(y_t)]=\nabla f(y_t)$, the first inequality is because $\theta_t\leq \frac{\E_t\left[\left(\nabla f(y_t)^\top v_t\right)^2\right]}{\hat{L}\cdot\E_t[\|g_2(y_t)\|^2]}$, the second inequality is because of Eq.~\eqref{eq:single-process-nag}, and the last inequality is the same as in original proof. By the similar reasoning to the proof of Theorem~\ref{thm_main:strong}, we have $\E\left[\frac{\rho_T}{\prod_{t=0}^{T-1}(1-\alpha_t)}\right]\leq \rho_0$. By the original proof, $\prod_{t=0}^{T-1}(1-\alpha_t)\leq \frac{1}{\left(1+\frac{\sqrt{\gamma_0}}{2}\sum_{t=0}^{T-1}\sqrt{\theta_t}\right)^2}$, completing the proof.
\end{proof}
From the proof sketch, we see that
\begin{itemize}
    \item The requirement that $\E_t[g_2(y_t)]=\nabla f(y_t)$ is to ensure that Eq.~\eqref{eq:key-in-pars-proof-2} holds.
    \item The constraint on $\theta_t$ in Line~\ref{lne:3-alg2} of Algorithm~\ref{alg:nag-extended} is to ensure that Eq.~\eqref{eq:key-in-pars-proof-1} holds.
    \item The choice of $g_1(y_t)$ ($g_1(y_t)= \nabla f(y_t)^\top v_t \cdot v_t$) and update of $x_t$ ($x_{t+1}= y_t - \frac{1}{\hat{L}} g_1(y_t)$) is the same as in Algorithm~\ref{alg:greedy-descent}, i.e. the greedy descent framework. This is since Eq.~\eqref{eq:key-in-pars-proof-3} requires that $\E_t[f(x_{t+1})]$ decreases as much as possible from $f(y_t)$.
    \item From Eq.~\eqref{eq:before_expectation} to Eq.~\eqref{eq:key-in-pars-proof-measurable}, we require $\E_t[\alpha_t g_2(y_t)^\top (m_t-x)]=\alpha_t\E_t[g_2(y_t)]^\top (m_t-x)$ and $\E_t[\theta_t \|g_2(y_t)\|^2]=\theta_t\E_t[\|g_2(y_t)\|^2]$. Therefore, to make the two above identities hold, by the property of ``pulling out known factors'' in taking conditional expectation, we require that $m_t$, $\alpha_t$ and $\theta_t$ are $\mathcal{F}_{t-1}$-measurable. Since we make sure in Remark~\ref{rem:dependence-history} that $\mathcal{F}_{t-1}$ always includes all the randomness before iteration $t$, and $\alpha_t$ is determined by $\gamma_t$ and $\theta_t$, it suffices to let $\theta_t$ be $\mathcal{F}_{t-1}$-measurable. We note that ``being $\mathcal{F}_{t-1}$-measurable'' means ``being determined by the history, i.e. fixed given the history''. 
\end{itemize}

Now we present the complete proof of Theorem~\ref{thm_main:nag}.
\begin{proof}
Since $x_{t+1}= y_t - \frac{1}{\hat{L}} g_1(y_t)$ and $g_1(y_t)= \nabla f(y_t)^\top v_t \cdot v_t$, by Lemma~\ref{lem:single-progress},
\begin{align}
    \E_t[f(x_{t+1})]&\leq f(y_t) - \frac{\E_t\left[\left(\nabla f(y_t)^\top v_t\right)^2\right]}{2L'} \\
    &\leq f(y_t) - \frac{\E_t\left[\left(\nabla f(y_t)^\top v_t\right)^2\right]}{2\hat{L}}.
\end{align}
For an arbitrary fixed $x$, define $\rho_t(x):=\frac{\gamma_t}{2}\|m_t-x\|^2 + f(x_t)-f(x)$. Then
\begin{align}
    \rho_{t+1}(x) &= \frac{\gamma_{t+1}}{2}\|m_{t+1}-x\|^2 + f(x_{t+1}) - f(x) \\
    &= \frac{\gamma_{t+1}}{2}\|m_t-x\|^2 - \frac{\gamma_{t+1}\theta_t}{\alpha_t}g_2(y_t)^\top (m_t-x) + \frac{\gamma_{t+1}\theta_t^2}{2\alpha_t^2}\|g_2(y_t)\|^2 + f(x_{t+1}) - f(x) \\
    &= \frac{\gamma_{t+1}}{2}\|m_t-x\|^2 - \alpha_t g_2(y_t)^\top (m_t-x) + \frac{\theta_t}{2}\|g_2(y_t)\|^2 + f(x_{t+1}) - f(x).
\end{align}

We make sure in Remark~\ref{rem:dependence-history} that $\mathcal{F}_{t-1}$ always includes all the randomness before iteration $t$. Therefore, $\gamma_t$ and $m_t$ are $\mathcal{F}_{t-1}$-measurable. The assumption in Theorem~\ref{thm_main:nag} requires that $\theta_t$ is $\mathcal{F}_{t-1}$-measurable. Since $\alpha_t$ is determined by $\gamma_t$ and $\theta_t$ (through a Borel function), $\alpha_t$ is also $\mathcal{F}_{t-1}$-measurable. Since $m_t$, $\alpha_t$ and $\theta_t$ are $\mathcal{F}_{t-1}$-measurable, we have $\E_t[\alpha_t g_2(y_t)^\top (m_t-x)]=\alpha_t\E_t[g_2(y_t)]^\top (m_t-x)$ and $\E_t[\theta_t \|g_2(y_t)\|^2]=\theta_t\E_t[\|g_2(y_t)\|^2]$. Hence

\begin{align}
    \E_t[\rho_{t+1}(x)] &= \frac{\gamma_{t+1}}{2}\|m_t-x^*\|^2 - \alpha_t \E_t[g_2(y_t)]^\top (m_t-x^*) + \frac{\theta_t}{2}\E_t[\|g_2(y_t)\|^2] + \E_t[f(x_{t+1})] - f(x^*) \\
    &= \frac{\gamma_{t+1}}{2}\|m_t-x\|^2 - \alpha_t \nabla f(y_t)^\top (m_t-x) + \frac{\theta_t}{2}\E_t[\|g_2(y_t)\|^2] + \E_t[f(x_{t+1})] - f(x) \\
    &\leq \frac{\gamma_{t+1}}{2}\|m_t-x\|^2 - \alpha_t \nabla f(y_t)^\top (m_t-x) + \frac{\E_t\left[\left(\nabla f(y_t)^\top v_t\right)^2\right]}{2\hat{L}} + \E_t[f(x_{t+1})] - f(x) \\
    &\leq \frac{\gamma_{t+1}}{2}\|m_t-x\|^2 - \alpha_t \nabla f(y_t)^\top (m_t-x) + f(y_t) - f(x) \\
    &= \frac{\gamma_{t+1}}{2}\|m_t-x\|^2 - \nabla f(y_t)^\top (\alpha_t m_t-\alpha_t x) + f(y_t) - f(x) \\
    &= \frac{\gamma_{t+1}}{2}\|m_t-x\|^2 + \nabla f(y_t)^\top (-y_t+(1-\alpha_t)x_t+\alpha_t x) + f(y_t) - f(x) \\
    &\leq \frac{\gamma_{t+1}}{2}\|m_t-x\|^2 + f\left((1-\alpha_t)x_t+\alpha_t x\right) - f(x) \\
    &\leq \frac{\gamma_{t+1}}{2}\|m_t-x\|^2 + (1-\alpha_t)f(x_t)-(1-\alpha_t) f(x) \\
    &= (1-\alpha_t) \left(\frac{\gamma_t}{2}\|m_t-x\|^2+f(x_t)-f(x)\right) \\
    &= (1-\alpha_t) \rho_t(x).
\end{align}

Therefore, 
\begin{align*}
    \rho_0(x)&=\E[\rho_0(x)]\geq \E\left[\frac{\E_0[\rho_1(x)]}{1-\alpha_0}\right]=\E\left[\E_0\left[\frac{\rho_1(x)}{1-\alpha_0}\right]\right]=\E\left[\frac{\rho_1(x)}{1-\alpha_0}\right] \\
    &\geq \E\left[\frac{\E_1[\rho_2(x)]}{(1-\alpha_0)(1-\alpha_1)}\right]=\E\left[\E_1\left[\frac{\rho_2(x)}{(1-\alpha_0)(1-\alpha_1)}\right]\right]=\E\left[\frac{\rho_2(x)}{(1-\alpha_0)(1-\alpha_1)}\right] \\
    &\geq \ldots \\
    &\geq \E\left[\frac{\rho_T(x)}{\prod_{t=0}^{T-1}(1-\alpha_t)}\right].
\end{align*}

We have $\rho_T(x)\geq f(x_T)-f(x)$. To prove the theorem, let $x=x^*$. The remaining is to give an upper bound of $\prod_{t=0}^{T-1}(1-\alpha_t)$. Let $\psi_k:=\prod_{t=0}^{k-1}(1-\alpha_t)$ and $a_k:=\frac{1}{\sqrt{\psi_k}}$, we have
\begin{align}
    a_{k+1}-a_k &= \frac{1}{\sqrt{\psi_{k+1}}}-\frac{1}{\sqrt{\psi_k}} = \frac{\sqrt{\psi_k}-\sqrt{\psi_{k+1}}}{\sqrt{\psi_k\psi_{k+1}}} = \frac{\psi_k-\psi_{k+1}}{\sqrt{\psi_k\psi_{k+1}}(\sqrt{\psi_k}+\sqrt{\psi_{k+1}})} \\
    &\geq \frac{\psi_k-\psi_{k+1}}{\sqrt{\psi_k\psi_{k+1}}(2\sqrt{\psi_k})} \\
    &= \frac{\psi_k-(1-\alpha_k)\psi_k}{2\psi_k\sqrt{\psi_{k+1}}}=\frac{\alpha_k}{2\sqrt{\psi_{k+1}}}=\frac{\sqrt{\gamma_{k+1}\theta_k}}{2\sqrt{\psi_{k+1}}}=\frac{\sqrt{\theta_k}}{2}\sqrt{\frac{\gamma_{k+1}}{\psi_{k+1}}} \\
    &= \frac{\sqrt{\gamma_0\theta_k}}{2}.
\end{align}
Since $\psi_0=1$, $a_0=1$. Hence $a_T\geq 1+\frac{\sqrt{\gamma_0}}{2}\sum_{t=0}^{T-1}\sqrt{\theta_t}$. Therefore, $\psi_T\leq \frac{1}{\left(1+\frac{\sqrt{\gamma_0}}{2}\sum_{t=0}^{T-1}\sqrt{\theta_t}\right)^2}$. The proof is completed.
\end{proof}
% drawbacks of let g_1(y_t) determined by the history and let g_2(y_t) not (the latter is necessary): need two sets of queries, no matter how we estimate \theta_t (let it determined by the history (not depend on e.g. g_2(y_t)) or not (depend on e.g. g_2(y_t))). advantage: could let \theta_t depend on f(x_{t+1})-f(y_t), hence allowing line search. 

\subsection{Construction of $g_2(y_t)$}
\label{sec:C2}
We first note that in PARS, the specification of $\mathcal{F}_{t-1}$ is similar to that in Example~\ref{exp:new-prgf}. That is, we suppose that $p_t$ is determined before sampling $\{u_1,u_2,\ldots,u_q\}$, but it could depend on extra randomness in iteration $t$. We let $\mathcal{F}_{t-1}$ also includes the extra randomness of $p_t$ in iteration $t$ (not including the randomness of $\{u_1,u_2,\ldots,u_q\}$) besides the randomness before iteration $t$. Meanwhile, we note that the assumption in Theorem~\ref{thm_main:nag} requires that $\theta_t$ is $\mathcal{F}_{t-1}$-measurable, and this is satisfied if the algorithm to find $\theta_t$ in Algorithm~\ref{alg:nag-extended} is deterministic given randomness in $\mathcal{F}_{t-1}$ (does not use $\{u_1,u_2,\ldots,u_q\}$ in iteration $t$). Since $\mathcal{F}_{t-1}$ includes randomness before iteration $t$, if $\theta_t$ is $\mathcal{F}_{t-1}$-measurable, we can show that $y_t$ is $\mathcal{F}_{t-1}$-measurable.

We also note that in Section~\ref{sec:4} and Appendix~\ref{sec:C}, we let $D_t:=\left(\overline{\nabla f(y_t)}^\top p_t\right)^2$, which is different from the previous definition $D_t:=\left(\overline{\nabla f(x_t)}^\top p_t\right)^2$ in Section~\ref{sec:3} and Appendix~\ref{sec:B}. This is because in ARS-based algorithms, we care about gradient estimation at $y_t$ instead of that at $x_t$.

In Algorithm~\ref{alg:nag-extended}, we need to construct $g_2(y_t)$ as an unbiased estimator of $\nabla f(y_t)$ satisfying $\E_t[g_2(y_t)]=\nabla f(y_t)$. Since Theorem~\ref{thm:nag} tells us that a larger $\theta_t$ could potentially accelerate convergence, by Line~\ref{lne:3-alg2} of Algorithm~\ref{alg:nag-extended}, we want to make $\E_t[\|g_2(y_t)\|^2]$ as small as possible. To save queries, we hope that we can reuse the queries $\nabla f(y_t)^\top p_t$ and $\{\nabla f(y_t)^\top u_i\}_{i=1}^q$ used in the process of constructing $g_1(y_t)$. % We have two choices regarding how we orthogonalize $\{p_t,u_1,\ldots,u_q\}$ before querying the directional derivative oracle:

% We first determine $p_t$ (and normalize it), and then sample $q$ orthonormal vectors $\{u_i\}_{i=1}^q$ in $H$, the $(d-1)$-dimensional subspace perpendicular to $p_t$, with $p_t$ unchanged. That is, 
Here we adopt the construction process in Section~\ref{sec:construction-prgf}, and leave the discussion of alternative ways in Section~\ref{sec:alternative}. We note that if we let $H$ be the $(d-1)$-dimensional subspace perpendicular to $p_t$, then
\begin{align}
\label{eq:decomp}
    \nabla f(y_t) = \nabla f(y_t)^\top p_t\cdot p_t+(\I-p_t p_t^\top)\nabla f(y_t) = \nabla f(y_t)^\top p_t\cdot p_t + \nabla f(y_t)_H.
\end{align}
Therefore, we need to obtain an unbiased estimator of $\nabla f(y_t)_H$. This is straightforward since we can utilize $\{u_i\}_{i=1}^q$ which is uniformly sampled from the $(d-1)$-dimensional space $H$.
\begin{proposition}
\label{eq:expectation-subspace}
For any $1\leq i\leq q$, $\E_t[\nabla f(y_t)^\top u_i\cdot u_i]=\frac{1}{d-1}\nabla f(y_t)_H$.
\end{proposition}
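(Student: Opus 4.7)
The plan is to reduce the claim to computing the matrix $\E_t[u_i u_i^\top]$. Indeed, writing $\nabla f(y_t)^\top u_i \cdot u_i = u_i u_i^\top \nabla f(y_t)$ and noting that the only randomness relevant to step $t$ enters through $u_1,\ldots,u_q$ (since $y_t$ and $p_t$ are determined by the history), linearity of expectation gives
\begin{align*}
\E_t[\nabla f(y_t)^\top u_i \cdot u_i] = \E_t[u_i u_i^\top]\,\nabla f(y_t).
\end{align*}
So it suffices to show $\E_t[u_i u_i^\top] = \frac{1}{d-1}(\I - p_t p_t^\top)$, because then the right-hand side becomes $\frac{1}{d-1}(\I - p_t p_t^\top)\nabla f(y_t) = \frac{1}{d-1}\nabla f(y_t)_H$ by the decomposition in Eq.~\eqref{eq:decomp}.

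The plan is to prove $\E_t[u_i u_i^\top] = \frac{1}{d-1}(\I - p_t p_t^\top)$ by induction on $i$, mirroring the argument used for the RGF estimator in Appendix~\ref{sec:rgf}. For the base case $i=1$, the construction in Section~\ref{sec:construction-prgf} shows that $u_1$ is uniform on the unit sphere of the $(d-1)$-dimensional subspace $H$ perpendicular to $p_t$. By rotational symmetry inside $H$, $\E_t[u_1 u_1^\top]$ must be of the form $c(\I - p_t p_t^\top)$; taking the trace and using $\|u_1\|=1$ gives $c(d-1)=1$, hence $c=\frac{1}{d-1}$. For the inductive step, conditionally on $u_1,\ldots,u_{i-1}$ the vector $u_i$ is uniform on the unit sphere of the $(d-i)$-dimensional subspace $H \cap \operatorname{span}(u_1,\ldots,u_{i-1})^\perp$, so the same symmetry argument gives
\begin{align*}
\E_t[u_i u_i^\top \mid u_1,\ldots,u_{i-1}] = \frac{1}{d-i}\Bigl(\I - p_t p_t^\top - \sum_{j=1}^{i-1} u_j u_j^\top\Bigr).
\end{align*}
Taking unconditional expectation, plugging in the inductive hypothesis $\E_t[u_j u_j^\top] = \frac{1}{d-1}(\I - p_t p_t^\top)$ for $j<i$, and simplifying $\I - p_t p_t^\top - \frac{i-1}{d-1}(\I - p_t p_t^\top) = \frac{d-i}{d-1}(\I - p_t p_t^\top)$ yields $\E_t[u_i u_i^\top] = \frac{1}{d-1}(\I - p_t p_t^\top)$, closing the induction.

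There is no real obstacle here; the only mild care is in justifying that once conditioned on $u_1,\ldots,u_{i-1}$, the remaining randomness inside $H \cap \operatorname{span}(u_1,\ldots,u_{i-1})^\perp$ really is uniform on its unit sphere, which follows directly from the Gram–Schmidt construction in Section~\ref{sec:construction-prgf} applied to an isotropic $\xi_i$. Combining the two displayed equations completes the proof.
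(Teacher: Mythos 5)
Your proof is correct and follows essentially the same route as the paper: both reduce the claim to the identity $\E_t[u_i u_i^\top]=\frac{1}{d-1}(\I-p_tp_t^\top)$ and then apply it to $\nabla f(y_t)$ using the decomposition in Eq.~\eqref{eq:decomp}. The only difference is that the paper cites Section~A.2 of \cite{cheng2019improving} for that identity, whereas you prove it yourself by the symmetry-plus-induction argument already used for the RGF estimator in Appendix~\ref{sec:rgf}; your version is self-contained and the induction is carried out correctly.
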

\begin{proof}
We have $\E_t[u_i u_i^\top]=\frac{\I-p_t p_t^\top}{d-1}$ (See Section A.2 in~\cite{cheng2019improving} for the proof.). Therefore,
\begin{align}
    \E_t[\nabla f(y_t)^\top u_i\cdot u_i]=\frac{\I-p_t p_t^\top}{d-1} \nabla f(y_t) = \frac{1}{d-1}\nabla f(y_t)_H.
\end{align}
\end{proof}
Therefore,
\begin{align}
\label{eq:g2}
    g_2(y_t)=\nabla f(y_t)^\top p_t\cdot p_t+\frac{d-1}{q}\sum_{i=1}^q \nabla f(y_t)^\top u_i \cdot u_i
\end{align}
satisfies that $\E_t[g_2(y_t)]=\nabla f(y_t)$. Then
\begin{align}
    \E_t[\|g_2(y_t)\|^2]&=\|\nabla f(y_t)\|^2 \E_t\left[\left\|\overline{\nabla f(y_t)}^\top p_t\cdot p_t+\frac{d-1}{q}\sum_{i=1}^q \overline{\nabla f(y_t)}^\top u_i \cdot u_i\right\|^2\right] \\
    &= \|\nabla f(y_t)\|^2 \left(\left(\overline{\nabla f(y_t)}^\top p_t\right)^2+\frac{(d-1)^2}{q^2}\sum_{i=1}^q \E_t\left[\left(\overline{\nabla f(y_t)}^\top u_i\right)^2\right]\right) \\
    &= \|\nabla f(y_t)\|^2 \left(D_t+\frac{d-1}{q}(1-D_t)\right), \label{eq:second-moment}
\end{align}
where the last equality is due to the fact that $\E_t[u_i u_i^\top]=\frac{\I-p_t p_t^\top}{d-1}$ (hence $\E_t\left[\left(\overline{\nabla f(y_t)}^\top u_i\right)^2\right]=\frac{1-D_t}{d-1}$). Meanwhile, if we adopt an RGF estimator as $g_2(y_t)$, then $\E_t[\|g_2(y_t)\|^2]=\frac{d}{q}\|\nabla f(y_t)\|^2$. Noting that $D_t+\frac{d-1}{q}(1-D_t)<\frac{d}{q}$, our proposed unbiased estimator results in a smaller $\E_t[\|g_2(y_t)\|^2]$ especially when $D_t$ is closed to $1$, since it utilizes the prior information.

\iffalse
Then $\frac{d}{q-1}\sum_{i=1}^q \nabla f(y_t)^\top u_i\cdot u_i$ is an unbiased estimator of $\nabla f(y_t)_{H}$, the projection of $\nabla f(y_t)$ in $H$. Since $\nabla f(y_t)=\nabla f(y_t)_{H}+\nabla f(y_t)^\top p_t\cdot p_t$, to construct an unbiased estimator of $\nabla f(y_t)$, we make $g_2(y_t)=\frac{d-1}{q}\sum_{i=1}^q \nabla f(y_t)^\top u_i\cdot u_i+\nabla f(y_t)^\top p_t\cdot p_t$, and we prove that $\E_t[\|g_2(y_t)\|^2]= \|\nabla f(y_t)\|^2 \left(\frac{d-1}{q}(1-D_t)+D_t\right)$.
\begin{proof}

\end{proof}
\fi

% Summarizing above discussion, we find that $\E_t[\|g_2(y_t)\|^2]$ given by the second formulation and by the first formulation with variance reduction are similar, and both are smaller than the baseline $\frac{d}{q}\|\nabla f(y_t)\|^2$, especially when $D_t$ is close to $1$. For cleanness we choose the second formulation. Then
Finally, using $g_2(y_t)$ in Eq.~\eqref{eq:g2}, when calculating the following expression which appears in Line~\ref{lne:3-alg2} of Algorithm~\ref{alg:nag-extended}, the term $\|\nabla f(y_t)\|^2$ would be cancelled:
\begin{align}
\label{eq:including-Dt}
    \frac{\E_t\left[\left(\nabla f(y_t)^\top v_t\right)^2\right]}{\hat{L}\cdot\E_t[\|g_2(y_t)\|^2]} = \frac{\E_t\left[\left(\overline{\nabla f(y_t)}^\top v_t\right)^2\right]}{\hat{L}\left(D_t+\frac{d-1}{q}(1-D_t)\right)} = \frac{D_t+\frac{q}{d-1}(1-D_t)}{\hat{L}\left(D_t+\frac{d-1}{q}(1-D_t)\right)},
\end{align}
where the last equality is due to Lemma~\ref{lem_main:expected-drift}.

\subsubsection{Alternative way to construct $g_2(y_t)$}
\label{sec:alternative}
Instead of using the orthogonalization process in Section~\ref{sec:construction-prgf}, when constructing $g_1(y_t)$ as the PRGF estimator, we can also first sample $q$ orthonormal vectors $\{u_i\}_{i=1}^q$ uniformly from $\Sp_{d-1}$, and then let $p_t$ be orthogonal to them with $\{u_i\}_{i=1}^q$ unchanged. Then we can construct $g_2(y_t)$ using this set of $\{\nabla f(y_t)^\top u_i\}_{i=1}^q$ and $\nabla f(y_t)^\top p_t$.

\begin{example}[RGF]
Since $\{u_i\}_{i=1}^q$ are uniformly sampled from $\Sp_{d-1}$, we can directly use them to construct an unbiased estimator of $\nabla f(y_t)$. We let $g_2(y_t)=\frac{d}{q}\sum_{i=1}^q \nabla f(y_t)^\top u_i\cdot u_i$. We show that it is an unbiased estimator of $\nabla f(y_t)$, and $\E_t[\|g_2(y_t)\|^2]= \frac{d}{q} \|\nabla f(y_t)\|^2$.
\begin{proof}
    In Section~\ref{sec:rgf} we show that $\E[u_i u_i^\top]=\frac{\I}{d}$. Therefore
    \begin{align*}
        \E_t[g_2(y_t)]&=\frac{d}{q}\sum_{i=1}^q \E_t[u_i u_i^\top]\nabla f(y_t) =\frac{d}{q}\sum_{i=1}^q \frac{1}{d}\nabla f(y_t)=\nabla f(y_t).
    \end{align*}
    \begin{align*}
        \E_t[\|g_2(y_t)\|^2]&=\frac{d^2}{q^2}\sum_{i=1}^q \E_t[(\nabla f(y_t)^\top u_i)^2]=\frac{d^2}{q^2}\sum_{i=1}^q \nabla f(y_t)^\top \E_t[u_i u_i^\top]\nabla f(y_t) \\ &=\frac{d^2}{q^2}\sum_{i=1}^q \frac{1}{d}\|\nabla f(y_t)\|^2=\frac{d}{q}\|\nabla f(y_t)\|^2.
    \end{align*}
\end{proof}
We see that $\E_t[\|g_2(y_t)\|^2]$ here is larger than Eq.~\eqref{eq:second-moment}.
\end{example}
\begin{example}[Variance reduced RGF]
To reduce the variance of RGF estimator, we could use $p_t$ to construct a control variate. Here we use $p_t$ to refer to the original $p_t^{ori}$ before orthogonalization so that it is fixed w.r.t. randomness of $\{u_1,\ldots,u_q\}$ (then it requires one additional query to obtain $\nabla f(y_t)^\top p_t^{ori}$). Specifically, we can let $g_2(y_t)=\frac{d}{q}\sum_{i=1}^q (\nabla f(y_t)^\top u_i\cdot u_i-(\nabla f(y_t)^\top p_t\cdot p_t)^\top u_i\cdot u_i)+\nabla f(y_t)^\top p_t\cdot p_t$. We show that it is unbiased, and $\E_t[\|g_2(y_t)\|^2]=\|\nabla f(y_t)\|^2 \left(D_t+\frac{d}{q}(1-D_t)\right)$.
\begin{proof}
    \begin{align*}
        \E_t[(\nabla f(y_t)^\top p_t\cdot p_t)^\top u_i\cdot u_i]=\E_t[u_i u_i^\top]\nabla f(y_t)^\top p_t\cdot p_t=\frac{1}{d}\nabla f(y_t)^\top p_t\cdot p_t.
    \end{align*}
    Therefore,
    \begin{align*}
        \E_t[g_2(y_t)]=\E_t\left[\frac{d}{q}\sum_{i=1}^q \nabla f(y_t)^\top u_i\cdot u_i\right]=\nabla f(y_t).
    \end{align*}
    Let $\nabla f(y_t)_H:=\nabla f(y_t)-\nabla f(y_t)^\top p_t\cdot p_t$. We define that $\V[x]$ for a stochastic vector $x$ is such that $\V[x]=\sum_i\V[x_i]$. Then for any stochastic vector $x$, $\E[\|x\|^2]=\|\E[x]\|^2+\V[x]$. We have $\V_t[g_2(y_t)]=\V_t\left[\frac{d}{q}\sum_{i=1}^q \nabla f(y_t)_H^\top u_i\cdot u_i\right]$.
    Let $g_2'(y_t):=\frac{d}{q}\sum_{i=1}^q \nabla f(y_t)_H^\top u_i\cdot u_i$. Then $\E_t[g_2'(y_t)]=\nabla f(y_t)_H$, $\E_t[\|g_2'(y_t)\|^2]=\frac{d}{q}\|\nabla f(y_t)_H\|^2$. Therefore, $\V_t[g_2(y_t)]=\E_t[\|g_2'(y_t)\|^2]-\|\E_t[g_2'(y_t)\|^2]=\left(\frac{d}{q}-1\right)\|\nabla f(y_t)_H\|^2=(1-D_t)\left(\frac{d}{q}-1\right)\|\nabla f(y_t)\|^2$. Hence,
    \begin{align*}
        \E_t[\|g_2(y_t)\|^2]&=\|\E_t[g_2(y_t)]\|^2+\V_t[g_2(y_t)]=\left(1+(1-D_t)\left(\frac{d}{q}-1\right)\right)\|\nabla f(y_t)\|^2 \\
        &=\left(D_t+\frac{d}{q}(1-D_t)\right)\|\nabla f(y_t)\|^2.
    \end{align*}
\end{proof}
We see that $\E_t[\|g_2(y_t)\|^2]$ here is comparable but slightly worse (slightly larger) than Eq.~\eqref{eq:second-moment}.
\end{example}
In summary, we still favor Eq.~\eqref{eq:g2} as the construction of $g_2(y_t)$ due to its simplicity and smallest value of $\E_t[\|g_2(y_t)\|^2]$ among the ones we propose.

\subsection{Estimation of $D_t$ and proof of convergence of PARS using this estimator}
\label{sec:C3}
\label{sec:estimation-Dt}
In zeroth-order optimization, $D_t$ is not accessible since $D_t=\left(\frac{\nabla f(y_t)^\top p_t}{\|\nabla f(y_t)\|}\right)^2$. For the term inside the square, while the numerator can be obtained from the oracle, we do not have access to the denominator. Therefore, our task is to estimate $\|\nabla f(y_t)\|^2$.

To save queries, it is ideal to reuse the oracle query results used to obtain $v_t$ and $g_2(y_t)$: $\nabla f(y_t)^\top p_t$ and $\{\nabla f(y_t)^\top u_i\}_{i\in\{1,2,\ldots,q\}}$. Again, we suppose $p_t,u_1,\cdots,u_q$ are obtained from the process in Section~\ref{sec:construction-prgf}. By Eq.~\eqref{eq:decomp}, we have
\begin{align}
    \|\nabla f(y_t)\|^2=(\nabla f(y_t)^\top p_t)^2+\|\nabla f(y_t)_H\|^2.
\end{align}
Since $\{u_i\}_{i=1}^q$ is uniformly sampled from the $(d-1)$-dimensional space $H$,
\begin{proposition}
For any $1\leq i\leq q$, $\E_t[(\nabla f(y_t)^\top u_i)^2]=\frac{1}{d-1}\|\nabla f(y_t)_H\|^2$.
\end{proposition}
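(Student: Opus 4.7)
The plan is to exploit the fact that $u_i$ lies in the $(d-1)$-dimensional subspace $H$ perpendicular to $p_t$, so the inner product $\nabla f(y_t)^\top u_i$ only sees the component of $\nabla f(y_t)$ that lies in $H$. Concretely, since $u_i \in H$ and $\nabla f(y_t) = \nabla f(y_t)^\top p_t \cdot p_t + \nabla f(y_t)_H$ with $p_t \perp u_i$, I would first write
\begin{align*}
    \nabla f(y_t)^\top u_i = \nabla f(y_t)_H^\top u_i,
\end{align*}
and hence $(\nabla f(y_t)^\top u_i)^2 = (\nabla f(y_t)_H^\top u_i)^2 = \nabla f(y_t)_H^\top (u_i u_i^\top) \nabla f(y_t)_H$.

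Next, I would invoke the identity $\E_t[u_i u_i^\top] = \frac{\I - p_t p_t^\top}{d-1}$, which has already been used in Proposition~\ref{eq:expectation-subspace} and is justified in Section~A.2 of \cite{cheng2019improving}. Taking conditional expectation,
\begin{align*}
    \E_t[(\nabla f(y_t)^\top u_i)^2] = \nabla f(y_t)_H^\top \E_t[u_i u_i^\top] \nabla f(y_t)_H = \frac{1}{d-1}\nabla f(y_t)_H^\top (\I - p_t p_t^\top)\nabla f(y_t)_H.
\end{align*}

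The final step is to observe that $\nabla f(y_t)_H$ already lies in $H$, so $(\I - p_t p_t^\top)\nabla f(y_t)_H = \nabla f(y_t)_H$, which gives $\nabla f(y_t)_H^\top (\I - p_t p_t^\top)\nabla f(y_t)_H = \|\nabla f(y_t)_H\|^2$ and completes the proof. There is no substantive obstacle here; the whole argument is a direct computation once the two facts (that $u_i \in H$ and that $\E_t[u_i u_i^\top]$ equals the projector onto $H$ scaled by $1/(d-1)$) are in hand, both of which have been established earlier in the paper. The only mild care needed is in the sampling order: $u_i$ is constructed after $p_t$ via the Gram–Schmidt process in Section~\ref{sec:construction-prgf}, so conditioning on $p_t$ is legitimate and the distribution of $u_i$ is genuinely uniform on the unit sphere of $H$.
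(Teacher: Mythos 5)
Your proposal is correct and follows essentially the same route as the paper: both arguments reduce to the identity $\E_t[u_i u_i^\top]=\frac{\I-p_tp_t^\top}{d-1}$ and a quadratic-form computation, with the paper simply citing its preceding proposition $\E_t[\nabla f(y_t)^\top u_i\cdot u_i]=\frac{1}{d-1}\nabla f(y_t)_H$ and pairing with $\nabla f(y_t)$, while you first replace $\nabla f(y_t)$ by $\nabla f(y_t)_H$ before taking the expectation. The two derivations are interchangeable.
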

\begin{proof}
By Proposition~\ref{eq:expectation-subspace}, $\E_t[\nabla f(y_t)^\top u_i\cdot u_i]=\frac{1}{d-1}\nabla f(y_t)_H$. Therefore,
\begin{align}
    \E_t[(\nabla f(y_t)^\top u_i)^2]&=\nabla f(y_t)^\top \E_t[\nabla f(y_t)^\top u_i\cdot u_i]=\frac{1}{d-1}\nabla f(y_t)^\top \nabla f(y_t)_H \\
    &=\frac{1}{d-1}\|\nabla f(y_t)_H\|^2.
\end{align}
\end{proof}
Thus, we adopt the following unbiased estimate:
\begin{align}
    \|\nabla f(y_t)_H\|^2\approx \frac{d-1}{q}\sum_{i=1}^q \left(\nabla f(y_t)^\top u_i\right)^2
\end{align}
By Johnson-Lindenstrauss Lemma (see Lemma 5.3.2 in \cite{vershynin2018high}), this approximation is rather accurate given a moderate value of $q$. Therefore, we have
\begin{align}
\label{eq:est-norm}
\|\nabla f(y_t)\|^2\approx \left(\nabla f(y_t)^\top p_t\right)^2+\frac{d-1}{q}\sum_{i=1}^q \left(\nabla f(y_t)^\top u_i\right)^2
\end{align}
and
\begin{align}
\label{eq:est-Dt}
    D_t=\frac{(\nabla f(y_t)^\top p_t)^2}{\|\nabla f(y_t)\|^2}\approx\frac{\left(\nabla f(y_t)^\top p_t\right)^2}{\left(\nabla f(y_t)^\top p_t\right)^2+\frac{d-1}{q}\sum_{i=1}^q \left(\nabla f(y_t)^\top u_i\right)^2}.
\end{align}

\subsubsection{PARS-Est algorithm with theoretical guarantee}
In fact, the estimator of $D_t$ concentrates well around the true value of $D_t$ given a moderate value of $q$. To reach an algorithm with theoretical guarantee, we could adopt a conservative estimate of $D_t$, such that the constraint of $\theta_t$ in Line~\ref{lne:3-alg2} of Algorithm~\ref{alg:nag-extended} is satisfied with high probability. We show the prior-guided implementation of Algorithm~\ref{alg:nag-extended} with estimation of $D_t$ in Algorithm~\ref{alg:nag-impl-theory}, call it PARS-Est, and show that it admits a theoretical guarantee.
\begin{algorithm}[!htbp]
\small
\caption{Prior-guided ARS with a conservative estimator of $D_t$ (PARS-Est)}
\label{alg:nag-impl-theory}
\begin{algorithmic}[1]
\Require $L$-smooth convex function $f$; initialization $x_0$; $\hat{L} \geq L$; Query count per iteration $q$; iteration number $T$; $\gamma_0>0$.
\Ensure $x_T$ as the approximate minimizer of $f$.
\State $m_0\leftarrow x_0$;
\For {$t = 0$ to $T-1$}
\State Obtain the prior $p_t$;
\State Find a $\theta_t$ such that $\theta_t\leq \theta_t'$ in which $\theta_t'$ is defined in the following steps: \label{lne:finding-begin}
\State Step 1: $y_t\leftarrow(1-\alpha_t)x_t+\alpha_t m_t$, where $\alpha_t\geq0$ is a positive root of the equation $\alpha_t^2=\theta_t (1-\alpha_t)\gamma_t$; $\gamma_{t+1}\leftarrow (1-\alpha_t)\gamma_t$;\label{lne:blue-begin}
\State Step 2: Sample an orthonormal set of $\{u_i\}_{i=1}^q$ in the subspace perpendicular to $p_t$ uniformly, as in Section~\ref{sec:construction-prgf}; \label{lne:first-sample}
\State Step 3: $\hat{D}_t\leftarrow\frac{\left(\nabla f(y_t)^\top p_t\right)^2}{\left(\nabla f(y_t)^\top p_t\right)^2+\frac{{\color{red} 2 }(d-1)}{q}\sum_{i=1}^q \left(\nabla f(y_t)^\top u_i\right)^2}$; $\theta_t'\leftarrow\frac{\hat{D}_t+\frac{q}{d-1}(1-\hat{D}_t)}{\hat{L}\left(\hat{D}_t+\frac{d-1}{q}(1-\hat{D}_t)\right)}$; \label{lne:finding-end}
\State Resample $\{u_i\}_{i=1}^q$ and calculate $v_t$ as in Section~\ref{sec:construction-prgf}; \label{lne:resample}
\State $g_1(y_t)\leftarrow \nabla f(y_t)^\top v_t \cdot v_t=\sum_{i=1}^q \nabla f(y_t)^\top u_i \cdot u_i+\nabla f(y_t)^\top p_t\cdot p_t$;
\State $g_2(y_t)\leftarrow\frac{d-1}{q}\sum_{i=1}^q \nabla f(y_t)^\top u_i \cdot u_i+\nabla f(y_t)^\top p_t\cdot p_t$;
\State $x_{t+1}\leftarrow y_t - \frac{1}{\hat{L}} g_1(y_t)$, $m_{t+1}\leftarrow m_t - \frac{\theta_t}{\alpha_t} g_2(y_t)$;
\EndFor
\Return $x_T$.
\end{algorithmic}
\end{algorithm}

\begin{theorem}
\label{thm:pars-est-theory}
Let
\begin{align}
    p=\Pr\left(\sum_{i=1}^q x_i^2< \frac{q}{2(d-1)}\right)
\end{align}
where $(x_1,x_2,...,x_{d-1})^\top\sim\U(\Sp_{d-2})$, i.e. follows a uniform distribution over the unit $(d-1)$-dimensional sphere. Then, in Algorithm~\ref{alg:nag-impl-theory}, for any $\delta\in(0,1)$, choosing a $q$ such that $p\leq \frac{\delta}{T}$, there exists an event $M$ such that $\Pr(M)\geq 1-\delta$, and
\begin{align}
\label{eq:convergence}
    \E\left[(f(x_T) - f(x^*))\left(1+\frac{\sqrt{\gamma_0}}{2}\sum_{t=0}^{T-1}\sqrt{\theta_t}\right)^2\middle|M\right]\leq f(x_0)-f(x^*)+\frac{\gamma_0}{2}\|x_0-x^*\|^2.
\end{align}
\end{theorem}
\begin{proof}
We first explain the definition of $\mathcal{F}_{t-1}$ in the proof (recall that $\E_t[\cdot]$ is $\E[\cdot|\mathcal{F}_{t-1}]$). Since in Theorem~\ref{thm_main:nag} we require $\theta_t$ to be $\mathcal{F}_{t-1}$-measurable, we let $\mathcal{F}_{t-1}$ also includes the randomness in Line~\ref{lne:first-sample} of Algorithm~\ref{alg:nag-impl-theory} in iteration $t$, besides randomness before iteration $t$ and randomness of $p_t$. We note that $\mathcal{F}_{t-1}$ does not include the randomness in Line~\ref{lne:resample}.

Let $M$ be the event that: For each $t\in\{0,1,...,T-1\}$, $\frac{d-1}{q}\sum_{i=1}^q \left(\nabla f(y_t)^\top u_i\right)^2\geq \frac{1}{2}\|\nabla f(y_t)_H\|^2$. When $M$ is true, we have that $\forall t$,
\begin{align}
    \hat{D}_t&=\frac{\left(\nabla f(y_t)^\top p_t\right)^2}{\left(\nabla f(y_t)^\top p_t\right)^2+\frac{2(d-1)}{q}\sum_{i=1}^q \left(\nabla f(y_t)^\top u_i\right)^2} \\
    &\leq \frac{\left(\nabla f(y_t)^\top p_t\right)^2}{\left(\nabla f(y_t)^\top p_t\right)^2+\|\nabla f(y_t)_H\|^2}\\
    &=\frac{\left(\nabla f(y_t)^\top p_t\right)^2}{\|\nabla f(y_t)\|^2}=D_t.
\end{align}
Therefore,
\begin{align}
    \theta_t &\leq\theta_t'=\frac{\hat{D}_t+\frac{q}{d-1}(1-\hat{D}_t)}{\hat{L}\left(\hat{D}_t+\frac{d-1}{q}(1-\hat{D}_t)\right)}\leq\frac{D_t+\frac{q}{d-1}(1-D_t)}{\hat{L}\left(D_t+\frac{d-1}{q}(1-D_t)\right)}=\frac{\E_t\left[\left(\nabla f(y_t)^\top v_t\right)^2\right]}{\hat{L}\cdot\E_t[\|g_2(y_t)\|^2]}. \label{eq:theta_t_alg_3}
\end{align}
Since $\mathcal{F}_{t-1}$ includes the randomness in Line~\ref{lne:first-sample} of Algorithm~\ref{alg:nag-impl-theory} in iteration $t$, $\E_t[\cdot]$ refer to only taking expectation w.r.t. the randomness of $v_t$ and $g_2(y_t)$ in iteration $t$, i.e. w.r.t. $\{u_1,\ldots,u_q\}$ in Line~\ref{lne:resample} of Algorithm~\ref{alg:nag-impl-theory}. Since $\{u_1,\ldots,u_q\}$ in Line~\ref{lne:resample} is independent of $\{u_1,\ldots,u_q\}$ in Line~\ref{lne:first-sample}, adding $\{u_1,\ldots,u_q\}$ in Line~\ref{lne:first-sample} to the history does not change the distribution of $\{u_1,\ldots,u_q\}$ in Line~\ref{lne:resample} given the history. Therefore according to the analysis in 
Section~\ref{sec:C2}, the last equality of Eq.~\eqref{eq:theta_t_alg_3} holds, and $\E_t[g_2(y_t)]=\nabla f(y_t)$. By Theorem~\ref{thm_main:nag}, Eq.~\eqref{eq:convergence} is proved.

Next we give a lower bound of $\Pr(M)$. Let us fix $t$. Then
\begin{align*}
    \Pr\left(\frac{d-1}{q}\sum_{i=1}^q \left(\nabla f(y_t)^\top u_i\right)^2< \frac{1}{2}\|\nabla f(y_t)_H\|^2\right)=p.
\end{align*}
Since for different $t$ the events inside the brackets are independent, by union bound we have $\Pr(M)\geq 1-pT$. Since $p\leq \frac{\delta}{T}$, the proof is completed.
\end{proof}
\begin{remark}
To save queries, one may think that when constructing $v_t$ and $g_2(y_t)$, we could omit the procedure of resampling $\{u_i\}_{i=1}^q$ in Line~\ref{lne:resample}, and reuse $\{u_i\}_{i=1}^q$ sampled in Line~\ref{lne:first-sample} to utilize the queries of relevant directional derivatives in Line~\ref{lne:finding-end}. Our theoretical analysis does not support this yet, as explained below.

If we reuse $\{u_i\}_{i=1}^q$ sampled in Line~\ref{lne:first-sample} to construct $v_t$ and $g_2(y_t)$, then both $\theta_t$ and $\{g_2(y_t), v_t\}$ depend on the same set of $\{u_i\}_{i=1}^q$. Since Theorem~\ref{thm_main:nag} requires $\theta_t$ to be $\mathcal{F}_{t-1}$-measurable, we have to make $\mathcal{F}_{t-1}$ include randomness of this set of $\{u_i\}_{i=1}^q$. Then both $g_2(y_t)$ and $v_t$ are fixed given the history $\mathcal{F}_{t-1}$, which is not desired (e.g. $\E_t[g_2(y_t)]=\nabla f(y_t)$ generally does not hold since $\E_t[g_2(y_t)]=g_2(y_t)$ now, making the proof of Theorem~\ref{thm_main:nag} fail).
\end{remark}
\begin{remark}
For given $d$ and $q$, $p$ can be calculated in closed form with the aid of softwares such as Mathematica. When $d=2000$, if $q=50$, then $p\approx 7.5\times 10^{-4}$. If $q=100$, then $p\approx 3.5\times 10^{-6}$. Hence $p$ is rather small so that a moderate value of $q$ is enough to let $p\leq\frac{\delta}{T}$.

In fact, $p$ can be bounded by $O(\exp(-cq))$ by Johnson-Lindenstrauss Lemma where $c$ is an absolute constant (see Lemma 5.3.2 in \cite{vershynin2018high}). Note that the bound is exponentially decayed w.r.t. $q$ and independent of $d$.
\end{remark}
\begin{remark}
We give an analysis of the influence of the additional factor {\color{red} 2} in Line~\ref{lne:finding-end} of Algorithm~\ref{alg:nag-impl-theory}. Let
\begin{align*}
    \hat{D}_{t2}&=\frac{\left(\nabla f(y_t)^\top p_t\right)^2}{\left(\nabla f(y_t)^\top p_t\right)^2+\frac{2(d-1)}{q}\sum_{i=1}^q \left(\nabla f(y_t)^\top u_i\right)^2}, \\
    \hat{D}_{t1}&=\frac{\left(\nabla f(y_t)^\top p_t\right)^2}{\left(\nabla f(y_t)^\top p_t\right)^2+\frac{d-1}{q}\sum_{i=1}^q \left(\nabla f(y_t)^\top u_i\right)^2}, \\
    \theta_{t2}&=\frac{\hat{D}_{t2}+\frac{q}{d-1}(1-\hat{D}_{t2})}{\hat{L}\left(\hat{D}_{t2}+\frac{d-1}{q}(1-\hat{D}_{t2})\right)}, \\
    \theta_{t1}&=\frac{\hat{D}_{t1}+\frac{q}{d-1}(1-\hat{D}_{t1})}{\hat{L}\left(\hat{D}_{t1}+\frac{d-1}{q}(1-\hat{D}_{t1})\right)}.
\end{align*}
Then $\hat{D}_{t1}\geq \hat{D}_{t2}$ and $1-\hat{D}_{t1}\leq 1-\hat{D}_{t2}$. We have
\begin{align}
    \frac{\theta_{t2}}{\theta_{t1}}&=\frac{\hat{D}_{t2}+\frac{q}{d-1}(1-\hat{D}_{t2})}{\hat{D}_{t1}+\frac{q}{d-1}(1-\hat{D}_{t1})}\cdot\frac{\hat{D}_{t1}+\frac{d-1}{q}(1-\hat{D}_{t1})}{\hat{D}_{t2}+\frac{d-1}{q}(1-\hat{D}_{t2})} \\
    &\geq \frac{\hat{D}_{t2}}{\hat{D}_{t1}}\cdot \frac{1-\hat{D}_{t1}}{1-\hat{D}_{t2}} \\
    &= \frac{\frac{\hat{D}_{t2}}{1-\hat{D}_{t2}}}{\frac{\hat{D}_{t1}}{1-\hat{D}_{t1}}} \\
    &= \frac{\frac{\left(\nabla f(y_t)^\top p_t\right)^2}{\frac{2(d-1)}{q}\sum_{i=1}^q \left(\nabla f(y_t)^\top u_i\right)^2}}{\frac{\left(\nabla f(y_t)^\top p_t\right)^2}{\frac{d-1}{q}\sum_{i=1}^q \left(\nabla f(y_t)^\top u_i\right)^2}} \\
    &= \frac{1}{2}.
\end{align}
Therefore, we have $\theta_{t2}\geq \frac{1}{2}\theta_{t1}$.

Meanwhile, since $\hat{D}_{t2}\geq 0$, we have $\theta_{t2}\geq \frac{q^2}{\hat{L}(d-1)^2}$. Hence $\theta_{t2}\geq\max\left\{\frac{1}{2}\theta_{t1}, \frac{q^2}{\hat{L}(d-1)^2}\right\}$.
\end{remark}

\subsection{Approximate solution of $\theta_t$ and implementation of PARS in practice (PARS-Impl)}
\label{sec:C4}
We note that in Line~\ref{lne:3-alg2} of Algorithm~\ref{alg:nag-extended}, it is not straightforward to obtain an ideal solution of $\theta_t$, since $y_t$ depends on $\theta_t$. Theoretically speaking, $\theta_t>0$ satisfying the inequality $\theta_t\leq \frac{\E_t\left[\left(\nabla f(y_t)^\top v_t\right)^2\right]}{\hat{L}\cdot\E_t[\|g_2(y_t)\|^2]}$ always exists, since by Eq.~\eqref{eq:including-Dt}, $\frac{\E_t\left[\left(\nabla f(y_t)^\top v_t\right)^2\right]}{\hat{L}\cdot\E_t[\|g_2(y_t)\|^2]}\geq \frac{q^2}{\hat{L}(d-1)^2}:=\theta$ always holds, so we can always let $\theta_t=\theta$. However, such estimate of $\theta_t$ is too conservative and does not benefit from a good prior (when $D_t$ is large). Therefore, one can guess a value of $D_t$, and then compute the value of $\theta_t$ by Eq.~\eqref{eq:including-Dt}, and then estimate the value of $D_t$ and verify that $\theta_t\leq \frac{\E_t\left[\left(\nabla f(y_t)^\top v_t\right)^2\right]}{\hat{L}\cdot\E_t[\|g_2(y_t)\|^2]}$ holds. If it does not hold, we need to try a smaller $\theta_t$ until the inequality is satisfied. For example, in Algorithm~\ref{alg:nag-impl-theory}, if we implement its Line~\ref{lne:finding-begin} to Line~\ref{lne:finding-end} with a guessing procedure,\footnote{For example, (1) compute $\theta_t'$ with $\theta_t\leftarrow 0$ by running Line~\ref{lne:blue-begin} to Line~\ref{lne:finding-end}; (2) 
we guess $\theta_t\leftarrow \kappa\theta_t'$ to compute a new $\theta_t'$ by rerunning Line~\ref{lne:blue-begin} to Line~\ref{lne:finding-end}, where $0<\kappa< 1$ is a discount factor to obtain a more conservative estimate of $\theta_t$; (3) if $\theta_t\leq \theta_t'$, then we have found $\theta_t$ as required; else, we go to step (2).} we could obtain an runnable algorithm with convergence guarantee. However, in practice such procedure could require multiple runs from Line~\ref{lne:blue-begin} to Line~\ref{lne:finding-end} in Algorithm~\ref{alg:nag-impl-theory}, which requires many additional queries; on the other hand, due to the additional factor $2$ in Line~\ref{lne:finding-end} of Algorithm~\ref{alg:nag-impl-theory}, we would always find a conservative estimate of $\theta_t$.

In this section, we introduce the algorithm we use to find an approximate solution to find $\theta_t$ in Line~\ref{lne:3-alg2} of Algorithm~\ref{alg:nag-extended}, which does not have theoretical guarantee but empirically performs well. The full algorithm PARS-Impl is shown in Algorithm~\ref{alg:pars-impl}. It stills follow the PARS framework (Algorithm~\ref{alg:nag-extended}), and our procedure to find $\theta_t$ is shown in Line~\ref{lne:approx-start} to Line~\ref{lne:approx-end}.\footnote{Line~\ref{lne:approx-start} and Line~\ref{lne:approx-end} require the query of $\nabla f(y_t^{(0)})^\top p_t$ and $\nabla f(y_t^{(1)})^\top p_t$ respectively, so each iteration of Algorithm~\ref{alg:pars-impl} requires $2$ additional queries to the directional derivative oracle, or requires $4$ additional queries to the function value oracle using finite difference approximation of the directional derivative.} Next we explain the procedure to find $\theta_t$ in detail.
% Compared with the footnote: 1. not checking \leq; 2. use the previous gradient norm as the norm estimate instead of using new queries to estimate it; 3. absence of the factor 2.

\begin{algorithm}[!htbp]
\small
\caption{Prior-Guided Accelerated Random Search in implementation (PARS-Impl)}
\label{alg:pars-impl}
\begin{algorithmic}[1]
\Require $L$-smooth convex function $f$; initialization $x_0$; $\hat{L} \geq L$; Query count per iteration $q$ (cannot be too small); iteration number $T$; $\gamma_0>0$.
\Ensure $x_T$ as the approximate minimizer of $f$.
\State $m_0\leftarrow x_0$;
\State $\|\hat{\nabla} f_{-1}\|^2\leftarrow +\infty$;
\For {$t = 0$ to $T-1$}
\State Obtain the prior $p_t$;
\State $y_t^{(0)}\leftarrow x_t$; $\hat{D}_t\leftarrow \frac{(\nabla f(y_t^{(0)})^\top p_t)^2}{\|\hat{\nabla}f_{t-1}\|^2}$; $\theta_t\leftarrow \frac{\hat{D}_t+\frac{q}{d-1}(1-\hat{D}_t)}{\hat{L}\left(\hat{D}_t+\frac{d-1}{q}(1-\hat{D}_t)\right)}$; \label{lne:approx-start}
\State $y_t^{(1)}\leftarrow(1-\alpha_t)x_t+\alpha_t m_t$, where $\alpha_t\geq0$ is a positive root of the equation $\alpha_t^2=\theta_t (1-\alpha_t)\gamma_t$; \label{lne:approx-middle}
\State $\hat{D}_t\leftarrow \frac{(\nabla f(y_t^{(1)})^\top p_t)^2}{\|\hat{\nabla}f_{t-1}\|^2}$; $\theta_t\leftarrow \frac{\hat{D}_t+\frac{q}{d-1}(1-\hat{D}_t)}{\hat{L}\left(\hat{D}_t+\frac{d-1}{q}(1-\hat{D}_t)\right)}$; \label{lne:approx-end}
\State $y_t\leftarrow(1-\alpha_t)x_t+\alpha_t m_t$, where $\alpha_t\geq0$ is a positive root of the equation $\alpha_t^2=\theta_t (1-\alpha_t)\gamma_t$; $\gamma_{t+1}\leftarrow (1-\alpha_t)\gamma_t$;
\State Sample an orthonormal set of $\{u_i\}_{i=1}^q$ in the subspace perpendicular to $p_t$ uniformly, as in Section~\ref{sec:construction-prgf};
\State $g_1(y_t)\leftarrow \sum_{i=1}^q \nabla f(y_t)^\top u_i \cdot u_i+\nabla f(y_t)^\top p_t\cdot p_t$;
\State $g_2(y_t)\leftarrow\frac{d-1}{q}\sum_{i=1}^q \nabla f(y_t)^\top u_i \cdot u_i+\nabla f(y_t)^\top p_t\cdot p_t$;
\State $\|\hat{\nabla} f_t\|^2\leftarrow \left(\nabla f(y_t)^\top p_t\right)^2+\frac{d-1}{q}\sum_{i=1}^q \left(\nabla f(y_t)^\top u_i\right)^2$; \label{lne:est-norm}
\State $x_{t+1}\leftarrow y_t - \frac{1}{\hat{L}} g_1(y_t)$, $m_{t+1}\leftarrow m_t - \frac{\theta_t}{\alpha_t} g_2(y_t)$;
\EndFor
\Return $x_T$.
\end{algorithmic}
\end{algorithm}

Specifically, we try to find an approximated solution of $\theta_t$ satisfying the equation $\theta_t=\frac{\E_t\left[\left(\nabla f(y_t)^\top v_t\right)^2\right]}{\hat{L}\cdot\E_t[\|g_2(y_t)\|^2]}$ to find a $\theta_t$ as large as possible and approximately satisfies the inequality $\theta_t\leq\frac{\E_t\left[\left(\nabla f(y_t)^\top v_t\right)^2\right]}{\hat{L}\cdot\E_t[\|g_2(y_t)\|^2]}$. Since $\frac{\E_t\left[\left(\nabla f(y_t)^\top v_t\right)^2\right]}{\hat{L}\cdot\E_t[\|g_2(y_t)\|^2]}=\frac{D_t+\frac{q}{d-1}(1-D_t)}{\hat{L}\left(D_t+\frac{d-1}{q}(1-D_t)\right)}$, we try to solve the equation
\begin{align}
    \theta_t=g(\theta_t):=\frac{D_t+\frac{q}{d-1}(1-D_t)}{\hat{L}\left(D_t+\frac{d-1}{q}(1-D_t)\right)},
\end{align}
where $D_t=(\overline{\nabla f(y_t)}^\top p_t)^2$ and $y_t$ depends on $\theta_t$. This corresponds to finding the fixed-point of $g$, so we apply the fixed-point iteration method. Specifically, we first let $\theta_t=0$, then $y_t=x_t$, and let $\theta_t\leftarrow g(\theta_t)$ (the above corresponding to Line~\ref{lne:approx-start} of Algorithm~\ref{alg:pars-impl}); then we calculate $y_t$ again using the new value of $\theta_t$ (corresponding to Line~\ref{lne:approx-middle}), and let $\theta_t\leftarrow g(\theta_t)$ (corresponding to Line~\ref{lne:approx-end}). We find that two iterations are able to lead to satisfactory performance. Note that then two additional queries to the directional derivative oracle are required to obtain $\nabla f(y_t^{(0)})^\top p_t$ and $\nabla f(y_t^{(1)})^\top p_t$ used in Line~\ref{lne:approx-start} and Line~\ref{lne:approx-end}.

Since $D_t=(\overline{\nabla f(y_t)}^\top p_t)^2=\frac{(\nabla f(y_t)^\top p_t)^2}{\|\nabla f(y_t)\|^2}$, we need to estimate $\|\nabla f(y_t)\|^2$ as introduced in Section~\ref{sec:estimation-Dt}. However, $y_t^{(0)}$ and $y_t^{(1)}$ in Algorithm~\ref{alg:pars-impl} are different from both $y_t$ and $y_{t-1}$, so to estimate $\|\nabla f(y_t^{(0)})\|^2$ and $\|\nabla f(y_t^{(1)})\|^2$ as in Section~\ref{sec:estimation-Dt}, many additional queries are required (since the query results of the directional derivative at $y_{t-1}$ or $y_t$ cannot be reused). Therefore, we introduce one additional approximation: we use the estimate of $\|\nabla f(y_{t-1})\|^2$ as the approximation of $\|\nabla f(y_t^{(0)})\|^2$ and $\|\nabla f(y_t^{(1)})\|^2$. Since the gradient norm itself is relatively large (compared with e.g. directional derivatives) and in zeroth-order optimization, the single-step update is relatively small, we expect that $\|\nabla f(y_t^{(0)})\|^2$ and $\|\nabla f(y_t^{(1)})\|^2$ are closed to $\|\nabla f(y_{t-1})\|^2$. In Algorithm~\ref{alg:pars-impl}, Line~\ref{lne:est-norm} estimates $\|\nabla f(y_t)\|^2$ by Eq.~\eqref{eq:est-norm}, and the estimator is denoted $\|\hat{\nabla}f_t\|^2$. Given this, $\|\nabla f(y_t^{(0)})\|^2$ and $\|\nabla f(y_t^{(1)})\|^2$ are approximated by $\|\hat{\nabla}f_{t-1}\|^2$ for calculations of $\hat{D}_t$ in Line~\ref{lne:approx-start} and Line~\ref{lne:approx-end} as approximations of $\big(\overline{\nabla f(y_t^{(0)})}^\top p_t\big)^2$ and $\big(\overline{\nabla f(y_t^{(1)})}^\top p_t\big)^2$.

Finally we note that in the experiments, we find that when using Algorithm~\ref{alg:pars-impl}, the error brought by approximation of $\|\nabla f(y_t^{(0)})\|^2$ and $\|\nabla f(y_t^{(1)})\|^2$ sometimes makes the performance of the algorithm not robust, especially when $q$ is small (e.g. $q=10$), which could lead the algorithm to divergence. Therefore, we propose two tricks to suppress the influence of approximation error (we note that in practice, the second trick is more important, while the first trick is often not necessary given the application of the second trick):
\begin{itemize}
    \item To reduce the variance of $\|\hat{\nabla} f_t\|$ when $q$ is small, we let
    \begin{align}
        \|\hat{\nabla} f_t^{\mathrm{avg}}\|^2=\frac{1}{k}\sum_{s=t-k+1}^t \|\hat{\nabla} f_s\|^2,
    \end{align}
    and use $\|\hat{\nabla} f_{t-1}^{\mathrm{avg}}\|^2$ to replace $\|\hat{\nabla} f_{t-1}\|^2$ in Line~\ref{lne:approx-start} and Line~\ref{lne:approx-end}. In our experiments we choose $k=10$. Compared with $\|\hat{\nabla} f_{t-1}\|^2$, using $\|\hat{\nabla} f_{t-1}^{\mathrm{avg}}\|^2$ to estimate $\|\nabla f(y_t^{(0)})\|^2$ and $\|\nabla f(y_t^{(1)})\|^2$ could reduce the variance at the cost of increased bias. We note that the increased bias sometimes brings problems, so one should be careful when applying this trick.
    
    \item Although $D_t\leq 1$, It is possible that $\hat{D}_t$ in Line~\ref{lne:approx-start} and Line~\ref{lne:approx-end} is larger than $1$, which could lead to a negative $\theta_t$. Therefore, a clipping of $\hat{D}_t$ is required. In our experiments, we observe that a $\hat{D}_t$ which is less than but very close to $1$ (when caused by the accidental large approximation error) could also lead to instability of optimization, perhaps because that it leads to a too large value of $\theta_t$ used to determine $y_t$ and to update $m_t$. Therefore, we let $\hat{D}_t\leftarrow\min\{\hat{D}_t, B_{\mathrm{ub}}\}$  in Line~\ref{lne:approx-start} and Line~\ref{lne:approx-end} before calculating $\theta_t$, where $0< B_{\mathrm{ub}}\leq 1$ is fixed. In our experiments we set $B_{\mathrm{ub}}$ to $0.6$.
\end{itemize}

We leave a more systematic study of the approximation error as future work.

\iffalse
\begin{algorithm}[!htbp]
\small
\caption{Accelerated random search}
\label{alg:nag}
\begin{algorithmic}[1]
\Require The $L$-smooth black-box function $f$; initialization $x_0$; $\hat{L}$ as an upper bound of $L$ ($\hat{L} \geq L$); iteration number $T$; $\gamma_0>0$.
\Ensure $x_T$ as the approximate minimizer of $f$.
\State $m_0\leftarrow x_0$;
\For {$t = 0$ to $T-1$}
\State $\theta\leftarrow\frac{1}{\hat{L}\cdot d^2}$;

\State $y_t\leftarrow(1-\alpha_t)x_t+\alpha_t m_t$, where $\alpha_t\geq0$ is a positive root of the equation $\alpha_t^2=\theta (1-\alpha_t)\gamma_t$; $\gamma_{t+1}\leftarrow (1-\alpha_t)\gamma_t$;
\State $g_1(y_t)\leftarrow \nabla f(y_t)^\top u_1 \cdot u_1$, where $u_1\sim\mathcal{U}(\mathbb{S}^{d-1})$; $g_2(y_t)\leftarrow d\cdot g_1(y_t)$;
\State $x_{t+1}\leftarrow y_t - \frac{1}{\hat{L}} g_1(y_t)$, $m_{t+1}\leftarrow m_t - \frac{\theta}{\alpha_t} g_2(y_t)$;
\EndFor
\Return $x_T$.
\end{algorithmic}
\end{algorithm}
\fi

\subsection{Implementation of History-PARS in practice (History-PARS-Impl)}
\label{sec:C5}
In PARS, when using a specific prior instead of the prior from a general source, we can utilize some properties of the prior. When using the historical prior ($p_t=v_{t-1}$), we find that $D_t$ is usually similar to $D_{t-1}$, and intuitively it happens when the smoothness of the objective function does not change quickly along the optimization trajectory. Therefore, the best value of $\theta_t$ should also be similar to the best value of $\theta_{t-1}$. Based on this observation, we can directly use $\theta_{t-1}$ as the value of $\theta_t$ in iteration $t$, and the value of $\theta_{t-1}$ is obtained with $y_{t-1}$ in iteration $t-1$. Following this thread, we present our implementation of History-PARS, i.e. History-PARS-Impl, in Algorithm~\ref{alg:hist-pars-impl}.

\begin{algorithm}[!htbp]
\small
\caption{History-PARS in implementation (History-PARS-Impl)}
\label{alg:hist-pars-impl}
\begin{algorithmic}[1]
\Require $L$-smooth convex function $f$; initialization $x_0$; $\hat{L} \geq L$; Query count per iteration $q$ (cannot be too small); iteration number $T$; $\gamma_0>0$.
\Ensure $x_T$ as the approximate minimizer of $f$.
\State $m_0\leftarrow x_0$;
% \State $\theta_{-1}\leftarrow \frac{q^2}{\hat{L}(d-1)^2}$;
\State $\theta_{-1}\leftarrow$ a very small positive number close to $0$;
\State $v_{-1}\sim\U(\Sp_{d-1})$;
\For {$t = 0$ to $T-1$}
\State $y_t\leftarrow(1-\alpha_t)x_t+\alpha_t m_t$, where $\alpha_t\geq0$ is a positive root of the equation $\alpha_t^2=\theta_{t-1} (1-\alpha_t)\gamma_t$; $\gamma_{t+1}\leftarrow (1-\alpha_t)\gamma_t$;
\State Sample an orthonormal set $\{u_i\}_{i=1}^q$ in the subspace perpendicular to $v_{t-1}$, as in Section~\ref{sec:construction-prgf} with $p_t=v_{t-1}$;
\State $g_1(y_t)\leftarrow \sum_{i=1}^q \nabla f(y_t)^\top u_i \cdot u_i+\nabla f(y_t)^\top v_{t-1}\cdot v_{t-1}$; $v_t\leftarrow \overline{g_1(y_t)}$;
\State $g_2(y_t)\leftarrow\frac{d-1}{q}\sum_{i=1}^q \nabla f(y_t)^\top u_i \cdot u_i+\nabla f(y_t)^\top v_{t-1}\cdot v_{t-1}$;
\State $\theta_t\leftarrow\frac{D_t+\frac{q}{d-1}(1-D_t)}{\hat{L}\left(D_t+\frac{d-1}{q}(1-D_t)\right)}$, where $D_t$ is estimated using Eq.~\eqref{eq:est-Dt} with $p_t=v_{t-1}$;
\State $x_{t+1}\leftarrow y_t - \frac{1}{\hat{L}} g_1(y_t)$, $m_{t+1}\leftarrow m_t - \frac{\theta_{t-1}}{\alpha_t} g_2(y_t)$;
\EndFor
\Return $x_T$.
\end{algorithmic}
\end{algorithm}
\subsection{Full version of Algorithm~\ref{alg:nag-extended} considering the strong convexity parameter and its convergence theorem}
\label{sec:C6}
\label{sec:ars-strong}
\begin{algorithm}[!t]
\small
\caption{Extended accelerated random search framework for $\tau\geq 0$}
\label{alg:nag-strong}
\begin{algorithmic}[1]
\Require $L$-smooth and $\tau$-strongly convex function $f$; initialization $x_0$; $\hat{L} \geq L$; $\hat{\tau}$ such that $0\leq\hat{\tau} \leq \tau$; iteration number $T$; a positive $\gamma_0\geq \hat{\tau}$.
\Ensure $x_T$ as the approximate minimizer of $f$.
\State $m_0\leftarrow x_0$;
\For {$t = 0$ to $T-1$}
\State Find a $\theta_t>0$ such that $\theta_t\leq \frac{\E_t\left[\left(\nabla f(y_t)^\top v_t\right)^2\right]}{\hat{L}\cdot\E_t[\|g_2(y_t)\|^2]}$ where $\theta_t$, $y_t$ and $g_2(y_t)$ are defined in following steps:
\State Step 1: $y_t\leftarrow(1-\beta_t)x_t+\beta_t m_t$, where $\beta_t := \frac{\alpha_t\gamma_t}{\gamma_t+\alpha_t\hat{\tau}}$, $\alpha_t$ is a positive root of the equation $\alpha_t^2=\theta_t ((1-\alpha_t)\gamma_t+\alpha_t\hat{\tau})$; $\gamma_{t+1}\leftarrow (1-\alpha_t)\gamma_t+\alpha_t\hat{\tau}$;
\State Step 2: Let $v_t$ be a random vector s.t. $\|v_t\|=1$; $g_1(y_t)\leftarrow \nabla f(y_t)^\top v_t \cdot v_t$;
\State Step 3: Let $g_2(y_t)$ be an unbiased estimator of $\nabla f(y_t)$, i.e. $\E_t[g_2(y_t)]=\nabla f(y_t)$;
\State $\lambda_t\leftarrow\frac{\alpha_t}{\gamma_{t+1}}\hat{\tau}$;
\State $x_{t+1}\leftarrow y_t - \frac{1}{\hat{L}} g_1(y_t)$, $m_{t+1}\leftarrow (1-\lambda_t)m_t+\lambda_t y_t - \frac{\theta_t}{\alpha_t} g_2(y_t)$;
\EndFor
\Return $x_T$.
\end{algorithmic}
\end{algorithm}

In fact, the ARS algorithm proposed in \cite{nesterov2017random} requires knowledge of the strong convexity parameter $\tau$ of the objective function, and the original algorithm depends on $\tau$. The ARS algorithm has a convergence rate for general smooth convex functions, and also have another potentially better convergence rate if $\tau>0$. In previous sections, for simplicity, we suppose $\tau=0$ and illustrate the corresponding extension in Algorithm~\ref{alg:nag-extended}. In fact, for the general case $\tau\geq 0$, the original ARS can also be extended to allow for incorporation of prior information. We present the extension to ARS with $\tau\geq 0$ in Algorithm~\ref{alg:nag-strong}. Note that our modification is similar to that in Algorithm~\ref{alg:nag-extended}. For Algorithm~\ref{alg:nag-strong}, we can also provide its convergence guarantee as shown in Theorem~\ref{thm:nag-strong}. Note that after considering the strong convexity parameter in the algorithm, we have an additional convergence guarantee, i.e. Eq.~\eqref{eq:theorem-nag-strong-new}. In the corresponding PARS algorithm, we have $\theta_t\geq \frac{q^2}{\hat{L}d^2}$, so the convergence rate of PARS is not worse than that of ARS and admits improvement given a good prior.

\begin{theorem}
\label{thm:nag-strong}
In Algorithm~\ref{alg:nag-strong}, if $\theta_t$ is $\mathcal{F}_{t-1}$-measurable, we have
\begin{align}
\label{eq:theorem-nag-strong}
    \E\left[(f(x_T) - f(x^*))\left(1+\frac{\sqrt{\gamma_0}}{2}\sum_{t=0}^{T-1}\sqrt{\theta_t}\right)^2\right]\leq f(x_0)-f(x^*)+\frac{\gamma_0}{2}\|x_0-x^*\|^2.
\end{align}
and
\begin{align}
\label{eq:theorem-nag-strong-new}
    \E\left[(f(x_T) - f(x^*))\exp\left(\sqrt{\hat{\tau}}\sum_{t=0}^{T-1}\sqrt{\theta_t}\right)\right]\leq f(x_0)-f(x^*)+\frac{\gamma_0}{2}\|x_0-x^*\|^2.
\end{align}
\end{theorem}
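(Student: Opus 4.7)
The plan is to mimic the proof of Theorem~\ref{thm:nag}, using essentially the same Lyapunov potential $\rho_t(x):=\frac{\gamma_t}{2}\|m_t-x\|^2+f(x_t)-f(x)$ but now exploiting $\tau$-strong convexity of $f$ to absorb the extra quadratic term that $\beta_t$, $\lambda_t$, and the new $\gamma_{t+1}$-update introduce. The end goal is again a one-step contraction $\E_t[\rho_{t+1}(x^*)]\leq(1-\alpha_t)\rho_t(x^*)$, after which a telescoping argument identical to the one in Appendix~\ref{sec:C1} yields $\E[\rho_T(x^*)/\prod_{t=0}^{T-1}(1-\alpha_t)]\leq \rho_0(x^*)$, and then two different upper bounds on $\prod_{t=0}^{T-1}(1-\alpha_t)$ deliver the two inequalities \eqref{eq:theorem-nag-strong} and \eqref{eq:theorem-nag-strong-new}.

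For the single-step step, I would first expand $\rho_{t+1}(x)$ using the new recursion $m_{t+1}=(1-\lambda_t)m_t+\lambda_t y_t-\frac{\theta_t}{\alpha_t}g_2(y_t)$ together with $\gamma_{t+1}=(1-\alpha_t)\gamma_t+\alpha_t\hat{\tau}$, take conditional expectation (using $\E_t[g_2(y_t)]=\nabla f(y_t)$), and apply Lemma~\ref{lem:single-progress} plus the $\theta_t$-constraint exactly as in lines \eqref{eq:key-in-pars-proof-1}--\eqref{eq:key-in-pars-proof-3} to eliminate the $\|g_2(y_t)\|^2$ term and replace $\E_t[f(x_{t+1})]$ by $f(y_t)$. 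At this point, for $\tau=0$ the proof used convexity of $f$ between $y_t$ and a convex combination $(1-\alpha_t)x_t+\alpha_t x$; here the choice $\beta_t=\alpha_t\gamma_t/(\gamma_t+\alpha_t\hat{\tau})$ is designed precisely so that the resulting linear combination matches $\tau$-strong convexity, namely $f(y_t)+\nabla f(y_t)^\top(z-y_t)+\frac{\tau}{2}\|z-y_t\|^2\leq f(z)$, applied at $z=(1-\alpha_t)x_t+\alpha_t x$. The leftover $\frac{\hat{\tau}}{2}$-quadratic term then combines exactly with the $\frac{\gamma_{t+1}}{2}\|m_t-x\|^2$ term (after plugging in $\lambda_t=\alpha_t\hat{\tau}/\gamma_{t+1}$) to reproduce $\frac{\gamma_{t+1}}{2}\|m_{t+1}-x\|^2$ on the left and $(1-\alpha_t)\frac{\gamma_t}{2}\|m_t-x\|^2$ on the right; this is the standard ``estimate-sequence'' algebra of NAG with strong convexity.

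For the two tail bounds on $\psi_T:=\prod_{t=0}^{T-1}(1-\alpha_t)$, I first note the invariant $\gamma_t\geq\hat{\tau}$, which follows inductively from $\gamma_0\geq\hat{\tau}$ and $\gamma_{t+1}=(1-\alpha_t)\gamma_t+\alpha_t\hat{\tau}\geq(1-\alpha_t)\hat{\tau}+\alpha_t\hat{\tau}=\hat{\tau}$. For \eqref{eq:theorem-nag-strong}, the argument of Appendix~\ref{sec:C1} carries over verbatim: set $a_k=1/\sqrt{\psi_k}$ and check $a_{k+1}-a_k\geq \frac{\sqrt{\gamma_{k+1}\theta_k}}{2\sqrt{\psi_{k+1}}}=\frac{\sqrt{\gamma_0 \theta_k}}{2}$, using $\gamma_{k+1}/\psi_{k+1}\geq\gamma_0$ (which follows because $\gamma_{k+1}/\psi_{k+1}$ is nondecreasing once $\gamma_0\geq\hat{\tau}$; this monotonicity is an easy consequence of the $\gamma$-recursion). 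For \eqref{eq:theorem-nag-strong-new}, I use the alternative bound $\alpha_t=\sqrt{\theta_t\gamma_{t+1}}\geq\sqrt{\theta_t\hat{\tau}}$, so $\psi_T=\prod(1-\alpha_t)\leq\prod e^{-\alpha_t}=\exp(-\sum_t\alpha_t)\leq\exp(-\sqrt{\hat{\tau}}\sum_t\sqrt{\theta_t})$.

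The main obstacle I anticipate is bookkeeping rather than any new idea: correctly identifying how the new $\beta_t$ and $\lambda_t$ combine so that the strongly-convex lower bound cancels exactly against the term $\frac{\alpha_t\hat{\tau}}{2}\|x-y_t\|^2$ that appears when we try to complete the square in $\|m_{t+1}-x\|^2$. In particular one must verify the identity $\gamma_{t+1}(1-\lambda_t)=(1-\alpha_t)\gamma_t$ and $\gamma_{t+1}\lambda_t=\alpha_t\hat{\tau}$, which together make the $y_t$-cross-terms align. Once these are checked, both inequalities follow simultaneously from the same contraction by choosing the corresponding upper bound on $\psi_T$.
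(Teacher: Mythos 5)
Your proposal follows essentially the same route as the paper's proof: the same potential $\frac{\gamma_t}{2}\|m_t-x\|^2+f(x_t)-f(x)$, the same one-step contraction $\E_t[\rho_{t+1}(x^*)]\leq(1-\alpha_t)\rho_t(x^*)$ obtained by combining the $\theta_t$-constraint, strong convexity at $x^*$, and the identities $\gamma_{t+1}\lambda_t=\alpha_t\hat{\tau}$ and $\gamma_{t+1}(1-\lambda_t)=(1-\alpha_t)\gamma_t$ (the paper organizes this via the auxiliary point $p_t=(1-\lambda_t)m_t+\lambda_t y_t$ with $y_t=(1-\alpha_t)x_t+\alpha_t p_t$ and Jensen's inequality on $\|p_t-x\|^2$), and the same two tail bounds on $\prod_t(1-\alpha_t)$ via $\gamma_{k+1}/\psi_{k+1}\geq\gamma_0$ and $\alpha_t\geq\sqrt{\hat{\tau}\theta_t}$. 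The plan is correct and matches the paper.
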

\begin{proof}
Let $L_e:=\frac{\hat{L}}{2\hat{L}-L}\cdot\hat{L}$. We still have Eq.~\eqref{eq:value_diff}, so
\begin{align}
    \E_t[f(x_{t+1})]&\leq f(y_t) - \frac{\E_t\left[\left(\nabla f(y_t)^\top v_t\right)^2\right]}{2L_e} \\
    &\leq f(y_t) - \frac{\E_t\left[\left(\nabla f(y_t)^\top v_t\right)^2\right]}{2\hat{L}}.
\end{align}
For an arbitrary fixed $x$, define $\rho_t(x):=\frac{\gamma_t}{2}\|m_t-x\|^2 + f(x_t)-f(x)$. Let $r_t:=(1-\lambda_t)m_t+\lambda_t y_t$. We first prove a lemma.

Since $(1-\beta_t)x_t+\beta_t m_t=y_t=(1-\beta_t)y_t+\beta_t y_t$, we have $m_t-y_t=\frac{1-\beta_t}{\beta_t}(y_t-x_t)$. So
\begin{align}
    r_t&=(1-\lambda_t) m_t+\lambda_t y_t=y_t+(1-\lambda_t)(m_t-y_t)=y_t+(1-\lambda_t)\frac{1-\beta_t}{\beta_t}(y_t-x_t).
\end{align}
By $\beta_t=\frac{\alpha_t\gamma_t}{\gamma_t+\alpha_t\hat{\tau}}$, $\gamma_{t+1}=(1-\alpha_t)\gamma_t+\alpha_t\hat{\tau}$ and $\lambda_t=\frac{\alpha_t}{\gamma_{t+1}}\hat{\tau}$, after eliminating $\gamma_t$ and $\gamma_{t+1}$, we have $(1-\lambda_t)\frac{1-\beta_t}{\beta_t}=\frac{1-\alpha_t}{\alpha_t}$.
Hence $r_t=y_t+\frac{1-\alpha_t}{\alpha_t}(y_t-x_t)$, which means \begin{align}
    y_t=(1-\alpha_t)x_t+\alpha_t r_t.
\end{align}
Now we start the main proof.
\begin{align}
    \rho_{t+1}(x) &= \frac{\gamma_{t+1}}{2}\|m_{t+1}-x\|^2 + f(x_{t+1}) - f(x) \\
    &= \frac{\gamma_{t+1}}{2}\|r_t-x\|^2 - \frac{\gamma_{t+1}\theta_t}{\alpha_t}g_2(y_t)^\top (r_t-x) + \frac{\gamma_{t+1}\theta_t^2}{2\alpha_t^2}\|g_2(y_t)\|^2 + f(x_{t+1}) - f(x) \\
    &= \frac{\gamma_{t+1}}{2}\|r_t-x\|^2 - \alpha_t g_2(y_t)^\top (r_t-x) + \frac{\theta_t}{2}\|g_2(y_t)\|^2 + f(x_{t+1}) - f(x).
\end{align}

We make sure in Remark~\ref{rem:dependence-history} that $\mathcal{F}_{t-1}$ always includes all the randomness before iteration $t$. Therefore, $\gamma_t$, $m_t$ and $x_t$ are $\mathcal{F}_{t-1}$-measurable. The assumption in Theorem~\ref{thm_main:nag} requires that $\theta_t$ is $\mathcal{F}_{t-1}$-measurable. Since $\alpha_t$, $\beta_t$, $y_t$ and $r_t$ are determined by $\gamma_t$, $x_t$, $m_t$ and $\theta_t$ (through Borel functions), they are also $\mathcal{F}_{t-1}$-measurable. Since $\theta_t$, $\alpha_t$ and $r_t$ are $\mathcal{F}_{t-1}$-measurable, we have $\E_t[\alpha_t g_2(y_t)^\top (r_t-x)]=\alpha_t\E_t[g_2(y_t)]^\top (r_t-x)$ and $\E_t[\theta_t \|g_2(y_t)\|^2]=\theta_t\E_t[\|g_2(y_t)\|^2]$. Hence
\begin{align}
    \E_t[\rho_{t+1}(x)] &= \frac{\gamma_{t+1}}{2}\|r_t-x\|^2 - \alpha_t \E_t[g_2(y_t)]^\top (r_t-x) + \frac{\theta_t}{2}\E_t[\|g_2(y_t)\|^2] + \E_t[f(x_{t+1})] - f(x) \\
    &= \frac{\gamma_{t+1}}{2}\|r_t-x\|^2 - \alpha_t \nabla f(y_t)^\top (r_t-x) + \frac{\theta_t}{2}\E_t[\|g_2(y_t)\|^2] + \E_t[f(x_{t+1})] - f(x) \\
    &\leq \frac{\gamma_{t+1}}{2}\|r_t-x\|^2 - \alpha_t \nabla f(y_t)^\top (r_t-x) + \frac{\E_t\left[\left(\nabla f(y_t)^\top v_t\right)^2\right]}{2\hat{L}} + \E_t[f(x_{t+1})] - f(x) \\
    &\leq \frac{\gamma_{t+1}}{2}\|r_t-x\|^2 - \alpha_t \nabla f(y_t)^\top (r_t-x) + f(y_t) - f(x) \\
    &= \frac{\gamma_{t+1}}{2}\|r_t-x\|^2 - \nabla f(y_t)^\top (\alpha_t r_t-\alpha_t x) + f(y_t) - f(x) \\
    &= \frac{\gamma_{t+1}}{2}\|r_t-x\|^2 + \nabla f(y_t)^\top (-y_t+(1-\alpha_t)x_t+\alpha_t x) + f(y_t) - f(x) \\
    &= \frac{\gamma_{t+1}}{2}\|r_t-x\|^2 + \alpha_t\left(f(y_t)+\nabla f(y_t)^\top (x-y_t)\right) \\
    &\quad+ (1-\alpha_t)\left(f(y_t)+\nabla f(y_t)^\top (x_t-y_t)\right) - f(x) \\
    &\leq \frac{\gamma_{t+1}}{2}\|r_t-x\|^2 + (1-\alpha_t)f(x_t)-(1-\alpha_t) f(x)- \frac{\alpha_t\tau}{2}\|x-y_t\|^2.
\end{align}

We also have
\begin{align}
    \frac{\gamma_{t+1}}{2}\|r_t-x\|^2 &= \frac{\gamma_{t+1}}{2}\|(1-\lambda_t)m_t+\lambda_t y_t-x\|^2 \\
    &= \frac{\gamma_{t+1}}{2}\|(1-\lambda_t)(m_t-x)+\lambda_t(y_t-x)\|^2 \\
    &\leq \frac{\gamma_{t+1}(1-\lambda_t)}{2}\|m_t-x\|^2+\frac{\gamma_{t+1}\lambda_t}{2}\|y_t-x\|^2 \\
    &= \frac{\gamma_{t+1}(1-\lambda_t)}{2}\|m_t-x\|^2+\frac{\alpha_t\hat{\tau}}{2}\|y_t-x\|^2 \\
    &= (1-\alpha_t)\frac{\gamma_t}{2}\|m_t-x\|^2+\frac{\alpha_t\hat{\tau}}{2}\|x-y_t\|^2,
\end{align}
where the inequality is due to Jensen's inequality applied to the convex function $\|\cdot\|^2$, and the third equality is obtained after substituting $\lambda_t\gamma_{t+1}=\alpha_t\hat{\tau}$ by the definition of $\lambda_t$. Since $\gamma_{t+1}=(1-\alpha_t)\gamma_t+\alpha_t\hat{\tau}=(1-\alpha_t)\gamma_t+\lambda_t\gamma_{t+1}$, we have $\gamma_{t+1}(1-\lambda_t)=(1-\alpha_t)\gamma_t$, which leads to the last equality.

Hence
\begin{align}
    \E_t[\rho_{t+1}(x)] &\leq \frac{\gamma_{t+1}}{2}\|r_t-x\|^2 + (1-\alpha_t)f(x_t)-(1-\alpha_t) f(x)- \frac{\alpha_t\tau}{2}\|x-y_t\|^2 \\
    &= (1-\alpha_t)\rho_t(x)+\frac{\alpha_t(\hat{\tau}-\tau)}{2}\|x-y_t\|^2 \\
    &\leq (1-\alpha_t)\rho_t(x).
\end{align}

Therefore, 
\begin{align*}
    \rho_0(x)&=\E[\rho_0(x)]\geq \E\left[\frac{\E_0[\rho_1(x)]}{1-\alpha_0}\right]=\E\left[\E_0\left[\frac{\rho_1(x)}{1-\alpha_0}\right]\right]=\E\left[\frac{\rho_1(x)}{1-\alpha_0}\right] \\
    &\geq \E\left[\frac{\E_1[\rho_2(x)]}{(1-\alpha_0)(1-\alpha_1)}\right]=\E\left[\E_1\left[\frac{\rho_2(x)}{(1-\alpha_0)(1-\alpha_1)}\right]\right]=\E\left[\frac{\rho_2(x)}{(1-\alpha_0)(1-\alpha_1)}\right] \\
    &\geq \ldots \\
    &\geq \E\left[\frac{\rho_T(x)}{\prod_{t=0}^{T-1}(1-\alpha_t)}\right].
\end{align*}

We have $\rho_T(x)\geq f(x_T)-f(x)$. To prove the theorem, let $x=x^*$. The remaining is to give an upper bound of $\prod_{t=0}^{T-1}(1-\alpha_t)$. Let $\psi_k:=\prod_{t=0}^{k-1}(1-\alpha_t)$ and $a_k:=\frac{1}{\sqrt{\psi_k}}$, we have
\begin{align}
    a_{k+1}-a_k &= \frac{1}{\sqrt{\psi_{k+1}}}-\frac{1}{\sqrt{\psi_k}} = \frac{\sqrt{\psi_k}-\sqrt{\psi_{k+1}}}{\sqrt{\psi_k\psi_{k+1}}} = \frac{\psi_k-\psi_{k+1}}{\sqrt{\psi_k\psi_{k+1}}(\sqrt{\psi_k}+\sqrt{\psi_{k+1}})} \\
    &\geq \frac{\psi_k-\psi_{k+1}}{\sqrt{\psi_k\psi_{k+1}}(\sqrt{\psi_k})} \\
    &= \frac{\psi_k-(1-\alpha_k)\psi_k}{2\psi_k\sqrt{\psi_{k+1}}}=\frac{\alpha_k}{2\sqrt{\psi_{k+1}}}=\frac{\sqrt{\gamma_{k+1}\theta_k}}{2\sqrt{\psi_{k+1}}}=\frac{\sqrt{\theta_k}}{2}\sqrt{\frac{\gamma_{k+1}}{\psi_{k+1}}} \\
    &\geq \frac{\sqrt{\gamma_0\theta_k}}{2}.
\end{align}
The last step is because $\gamma_{t+1}\geq (1-\alpha_t)\gamma_t$, so $\frac{\gamma_{k+1}}{\gamma_0}\geq \prod_{t=0}^k (1-\alpha_t)=\psi_{k+1}$. Since $\psi_0=1$, $a_0=1$. Hence $a_T\geq 1+\frac{\sqrt{\gamma_0}}{2}\sum_{t=0}^{T-1}\sqrt{\theta_t}$. Therefore,
\begin{align}
    \psi_T\leq \frac{1}{\left(1+\frac{\sqrt{\gamma_0}}{2}\sum_{t=0}^{T-1}\sqrt{\theta_t}\right)^2}.
\end{align}

Meanwhile, since $\gamma_0\geq \hat{\tau}$ and $\gamma_{t+1}=(1-\alpha_t)\gamma_t+\alpha_t \hat{\tau}$, we have that $\forall t, \gamma_t\geq \hat{\tau}$. Then $\alpha_t^2=\theta_t((1-\alpha_t)\gamma_t+\alpha_t\hat{\tau})\geq \theta_t\hat{\tau}$, then we have that $\alpha_t\geq \sqrt{\hat{\tau}\theta_t}$. Therefore,
\begin{align}
    \psi_T\leq \prod_{t=0}^{T-1} \left(1-\sqrt{\hat{\tau}\theta_t}\right)\leq \exp\left(-\sqrt{\hat{\tau}}\sum_{t=0}^{T-1}\sqrt{\theta_t}\right).
\end{align}

The proof is completed.
\end{proof}

\section{Supplemental materials for Section~\ref{sec:5}}
\label{sec:D}
\subsection{More experimental settings in Section~\ref{sec:5.1}}
\label{sec:D1}
In experiments in this section, we set the step size $\mu$ used in finite differences (Eq.~\eqref{eq:forward-difference}) to $10^{-6}$, and the parameter $\gamma_0$ in ARS-based methods to $\hat{L}$.
\subsubsection{Experimental settings for Fig.~\ref{fig:biased}}
\label{sec:D11}
\paragraph{Prior}
We adopt the setting in Section~4.1 of \cite{maheswaranathan2019guided} to mimic the case that the prior is a biased version of the true gradient. Specifically, we let $p_t=\overline{\overline{\nabla f(x_t)}+(b+n_t)}$, where $b$ is a fixed vector and $n_t$ is a random vector uniformly sampled each iteration, $\|b\|=1$ and $\|n_t\|=1.5$.

\paragraph{Test functions}
Our test functions are as follows. We choose $f_1$ as the ``worst-case smooth convex function'' used to construct the lower bound complexity of first-order optimization, as used in \cite{nesterov2017random}:
\begin{align}
    f_1(x)= \frac{1}{2}(x^{(1)})^2+\frac{1}{2}\sum_{i=1}^{d-1}(x^{(i+1)}-x^{(i)})^2+\frac{1}{2}(x^{(d)})^2-x^{(1)},  \text{ where }x_0=\mathbf{0}.
\end{align}
We choose $f_2$ as a simple smooth and strongly convex function with a worst-case initialization:
\begin{align}
    f_2(x)=\sum_{i=1}^d \left(\frac{i}{d}\cdot (x^{(i)})^2\right),\text{ where }x_0^{(1)}=d, x_0^{(i)}=0 \text{ for }i\geq 2.
\end{align}
We choose $f_3$ as the Rosenbrock function ($f_8$ in \cite{hansen2009real}) which is a well-known non-convex function used to test the performance of optimization problems:
\begin{align}
    f_3(x)=\sum_{i=1}^{d-1} \left(100\left((x^{(i)})^2-x^{(i+1)}\right)^2+(x^{(i)}-1)^2\right),  \text{ where }x_0=\mathbf{0}.
\end{align}
We note that ARS, PARS-Naive and PARS could depend on a strong convexity parameter (see Section~\ref{sec:ars-strong}) when applied to a strongly convex function. Therefore, for $f_2$ we set this parameter to the ground truth value. For $f_1$ and $f_3$ we set it to zero, i.e. we use Algorithm~\ref{alg:nag-extended}.
\subsubsection{Experimental settings for Fig.~\ref{fig:history}}
In this part we set $d=500$ and set $q$ such that each iteration of each algorithm costs $11$ queries. Since when using the historical prior, we aim to build algorithms agnostic to parameters of the objective function, we set the strong convexity parameter in ARS-based methods to $0$ even though we know that e.g. $f_2$ is strongly convex. Correspondingly, we adopt adaptive restart of function scheme \cite{o2015adaptive} to reach the ideal performance. We introduce our implementation here. In each iteration (suppose that currently it is iteration $t$) of Algorithm~\ref{alg:hist-pars-impl}, we check whether $f(y_t)\leq f(y_{t-1})$. If not, we set $m_{t+1}\leftarrow x_{t+1}$ and $\gamma_{t+1}\leftarrow \gamma_0$ as the restart.

\subsection{More experimental settings in Section~\ref{sec:5.2}}
\label{sec:D2}
We perform targeted attacks under the $\ell_2$ norm with the perturbation bound set to $3.514$ ($=32/255\times\sqrt{784}$) if each pixel value has the range $[0,1]$. The objective function to maximize for attacking image $x$ is the C\&W loss \cite{carlini2017towards}, i.e. $f(x)=Z(x)_t-\max_{i\neq t}Z(x)_i$, where $t$ is the target class and $Z(x)$ is the logits given the input $x$. The network architecture is from the PyTorch example (\url{https://github.com/pytorch/examples/tree/master/mnist}).

We set the step size $\mu$ used in finite differences (Eq.~\eqref{eq:forward-difference}) to $10^{-4}$, and the parameter $\gamma_0$ in ARS-based methods to $\hat{L}$. To deal with the constraints in optimization, in each iteration we perform projection after the update to ensure that the constraints are satisfied. In historical-prior-guided methods, to prevent the prior from pointing to the infeasible region (where the constraints are not satisfied), we let the prior $p_t$ be $\overline{x_t-x_{t-1}}$ for History-PRGF and $\overline{x_t-y_{t-1}}$ for History-PARS. In unconstrained optimization, this is equivalent to the original choice of $p_t$ ($p_t=\overline{g_{t-1}}$ for History-PRGF and $p_t=\overline{g_1(y_{t-1})}$ for History-PARS) up to sign. But in constrained optimization, since $x_t$ is further projected to the feasible region after the update from $x_{t-1}$ or $y_{t-1}$, they are not equivalent.

We note that the number of queries for each image does not count queries (one query per iteration) to check whether the attack has succeeded.

\section{Potential negative societal impacts}

As a theoretical work, we think this paper can provide valuable insights on understanding existing algorithms and may inspire new algorithms for zeroth-order optimization, while having no significant potential negative societal impacts. One may pay attention to its application to query-based black-box adversarial attacks.

\end{document}